\pdfoutput=1
\documentclass{article}
\usepackage[preprint]{neurips_2026}

\usepackage{amsmath,amsfonts,bm}

\def\eqref#1{equation~\ref{#1}}

\def\1{\bm{1}}

\DeclareMathAlphabet{\mathsfit}{\encodingdefault}{\sfdefault}{m}{sl}
\SetMathAlphabet{\mathsfit}{bold}{\encodingdefault}{\sfdefault}{bx}{n}

\usepackage{url}
\usepackage{amsmath,amssymb,mathtools}
\usepackage{amsthm}
\newtheorem{hypothesis}{Hypothesis}
\usepackage{booktabs}
\usepackage{graphicx}
\usepackage{float}
\usepackage{placeins}
\usepackage{microtype}
\usepackage{enumitem}
\usepackage{xcolor}
\usepackage{amsmath,amssymb}
\usepackage{tikz}
\usetikzlibrary{arrows.meta,positioning,calc,backgrounds,fit}
\usepackage{hyperref}
\hypersetup{hidelinks}
 
\definecolor{latentred}{RGB}{190,30,45}
\definecolor{attnorange}{RGB}{230,145,25}
\definecolor{contrgreen}{RGB}{20,105,70}
\definecolor{panelgray}{RGB}{120,120,120}

\title{Retrieval Capacity of Self-Attention Under Competition}
\author{%
  Timur Mudarisov$^{1}$ \quad
  Mikhail Burtsev$^{2}$ \quad
  Radu State$^{1}$ \\[0.6em]
  \normalfont $^{1}$University of Luxembourg, Luxembourg \\
  \normalfont $^{2}$London Institute for Mathematical Sciences, London, UK
}

\newcommand{\NLL}{\operatorname{NLL}}
\newcommand{\Inv}{\operatorname{Inv}}

\newcommand{\Sel}{\mathcal{S}}
\newcommand{\Shat}{\widehat{\mathcal{S}}}
\newcommand{\Prb}{\mathbb{P}}

\newcommand{\GInv}{\operatorname{GInv}}
\newcommand{\Iset}{\mathcal{I}}

\newcommand{\Tail}{\mathcal{T}}

\theoremstyle{plain}
\newtheorem{proposition}{Proposition}
\theoremstyle{definition}
\newtheorem{definition}{Definition}

\begin{document}
\maketitle

\begin{abstract}
How many tokens from its context does a language model actually use, and what determines that number? We study this question through self-attention. Without retraining, we retain only the tokens with the highest attention weights at each head, layer, and query, keeping their original weights unchanged. By varying the selected set size and measuring the increase in negative log-likelihood (NLL), we estimate the effective attention set size needed to stay within a chosen loss tolerance. Relatively small selected sets can keep NLL close to the full-attention baseline, although the required size varies across models. Attention-based selection substantially outperforms random selection. Selected sets exhibit geometric structure, although geometric separation alone does not establish that model loss is preserved. Extending context while evaluating the same prediction targets increases the required set size, while its fraction of context decreases over the tested range. Experiments with a fixed supporting fact show that additional background pushes its tokens down the attention ranking and reduces their attention mass. Renormalizing the retained weights can substantially reduce the required set size, showing that it also depends on how selected representations are combined. Conditional theoretical models explain how competition and attention-mass retention can produce growing set sizes without more distinct information to retrieve. These results provide a way to measure effective attention set size in language models and investigate its dependence on context, competition, and aggregation.
\end{abstract}

\section{Introduction}
\label{sec:intro}

\emph{How many tokens from its context does a language model actually use,
and what determines that number?} Self-attention provides a natural setting
for studying this question because it explicitly scores context tokens and
combines their value vectors. Previous work by
\citet{mudarisov2026geometric} found that sets selected by the largest
attention weights exhibit stronger geometric separation than random sets
of the same size in the space of weighted value vectors. This observation
suggests a structured selection process and motivates examining whether
the selected tokens are sufficient to preserve model performance.

We develop this perspective through a useful-token hypothesis. For each
local context $C=(X,\ell,h,t)$, we posit an unknown useful set whose size
$K(C)$ can vary with the sequence, layer, head, and query position.
Attention provides an imperfect ranking of this set, with useful tokens
assumed to rank above other tokens with relatively few errors on a
reference distribution. Recovering the useful set therefore depends on
both its size and the quality of the ranking. Even when the useful set
remains fixed, ranking errors can require retaining additional tokens.
This framework guides our analysis of observable selected sets while
leaving their relationship to the unknown useful set to be investigated.

We first examine the geometric separation of selected and unselected
tokens under Euclidean and cosine distances, using random sets of the
same size as controls. We then test the selected sets through their
effect on language-modeling loss. At every query, head, and layer, we
retain up to $N$ tokens ranked by attention weight or contribution
magnitude, keeping their original attention weights unchanged. Selection
is recomputed throughout the intervened forward pass without retraining.
By varying $N$ and measuring the increase in average negative
log-likelihood (NLL), we estimate the \emph{effective attention set size}
$N_\tau^*$ needed to remain within a chosen loss tolerance $\tau$.
The same size limit applies throughout the model, while each attention
operation selects its own tokens. Comparing geometric separation with
NLL degradation then tests whether the observed structure indicates
functional sufficiency.

We study what affects the required set size through two
complementary experiments. In language modeling, we extend the available
context while evaluating the same prediction targets. The required set
size increases with context length, although its fraction of the context
decreases over the tested range. Additional natural context also improves the full model's predictions, so this experiment can reflect changes in
both useful information and competition. We therefore complement it with
BABILong experiments that add background text around a fixed annotated
supporting fact. These experiments track changes in support ranking,
attention mass, and recall alongside the selected-set size needed to
preserve answer loss.

The treatment of retained attention weights provides a further explanation of the
measured set sizes. Renormalizing these weights can substantially reduce
the number of tokens required, showing that functional sufficiency also
depends on how selected representations are combined. We develop
conditional theoretical models of ranking competition and attention-mass
retention to explain how required set sizes can grow with context,
including settings with a fixed required set or identical value vectors.
These results connect the useful-token framework to the functional
measurements while accounting for the effect of the intervention on the
attention output.

Our contributions are:\\
1. \textbf{A useful-token framework for attention selection.}
We distinguish the unknown useful set from observable selected sets and
derive recovery bounds connecting ranking errors to the number of tokens
that must be retained.\\
2. \textbf{Geometric and functional analysis of selected sets.}
We compare geometric separation with loss preservation and estimate
effective attention set sizes across nine checkpoints, using attention,
contribution, and random selection.\\
3. \textbf{Evidence for context-dependent set sizes.}
We measure the effect of context extension with fixed prediction targets
and examine competition through background addition at fixed annotated
support.\\
4. \textbf{Aggregation controls and conditional explanations.}
We compare deletion with renormalization and develop theoretical models
showing how ranking competition and preservation of attention mass can
make required set sizes grow with context.

\section{Attention selection and effective set size}
\label{sec:setup}
\label{sec:scores}
\label{sec:latent-hypothesis}
\label{sec:binary-relevance}
\label{sec:noisy-ranking}
\label{sec:topn}
\label{sec:geometry}
\label{sec:truncation}
\label{sec:functional-metrics}
\label{sec:width}
\label{sec:intervention-dependence}

For a sequence $X$, consider a causal attention operation at layer $\ell$,
head $h$, and query position $t$. The visible tokens have indices
$\Iset_t=\{0,\ldots,t\}$, and the attention weights and output are
\begin{equation}
\alpha_i=\frac{\exp(q_t^\top k_i/\sqrt{d_k})}
{\sum_{j\in\Iset_t}\exp(q_t^\top k_j/\sqrt{d_k})},
\qquad
z_t=\sum_{i\in\Iset_t}\alpha_i v_i.
\label{eq:index-set}
\end{equation}
Writing $y_i=\alpha_i v_i$ for the contribution of token $i$, we compare
two observable scores:
\begin{align}
r_i^{\mathrm{attn}}=\alpha_i, \quad 
r_i^{\mathrm{contr}}=\|y_i\|_2=\alpha_i\|v_i\|_2.
\label{eq:contr-score}
\end{align}
Attention ranking uses the assigned probabilities, while contribution
ranking also accounts for the magnitudes of the value vectors. Both
describe selection within a head before the output projection.

To relate these rankings to retrieval function of attention, we posit an unknown useful set for
each context $C=(X,\ell,h,t)$. Let $Z_i(C)\in\{0,1\}$ indicate whether
token $i$ belongs to this set, and define
\begin{equation}
\Sel^*(C)=\{i\in\Iset_t:Z_i(C)=1\},
\qquad
K(C)=|\Sel^*(C)|=\sum_{i\in\Iset_t}Z_i(C).
\label{eq:latent-set}
\end{equation}
These labels represent hypothesized relevance to downstream prediction.
Their values are unobserved, and membership in the useful set can vary
across sequences, layers, heads, and queries. We formulate attention as
an imperfect ranking of this set. For a ranking $r$, an inversion occurs
when a token outside the useful set has at least as high a score as a
useful token. Let $\Inv_r(C)$ count these pairs.

\begin{hypothesis}[Useful-token ranking on a reference distribution]
\label{hyp:latent-ranking}
Fix a module $(\ell,h)$ and a reference distribution $\mathcal D_0$ over
$(X,t)$, writing $C=(X,\ell,h,t)$. For at least one
$r\in\{\mathrm{attn},\mathrm{contr}\}$, there are
$\varepsilon,\delta\in[0,1)$ such that
\begin{equation}
\Prb_{(X,t)\sim\mathcal D_0}
\bigl[\Inv_r(C)\le\varepsilon K(C)^2\bigr]\ge1-\delta,
\label{eq:main-hypothesis}
\end{equation}
where $1\le K(C)<|\Iset_t|$ and $K(C)$ is nondegenerate under
$\mathcal D_0$.
\end{hypothesis}

The hypothesis describes ranking quality within a reference distribution (see Fig.~\ref{fig:latent-useful}).
Increasing the amount of background can change this quality, so the
assumption is not imposed uniformly across context lengths. Because the
useful labels are unobserved, our experiments examine selected sets and
their consequences without directly testing the hypothesis.

For an integer $N\ge1$, define the observable selected set
\begin{equation}
\Shat_N^{(r)}(C)=
\operatorname{TopN}\{r_i(C):i\in\Iset_t\},
\qquad r\in\{\mathrm{attn},\mathrm{contr}\}.
\label{eq:topn-candidate-set}
\end{equation}
Here Top-$N$ returns the indices of the
$\min\{N,|\Iset_t|\}$ highest-scoring tokens, breaking ties by increasing
index. The ranking determines which tokens enter the set, and $N$
determines how many are retained. Recovering the useful set depends on
both its size and the ranking errors. In particular, additional tokens
may need to be retained when they appear above useful tokens in the
ranking. Appendix~\ref{app:topk-recovery} makes this relationship precise
through recovery bounds at $N=K(C)$ and at larger selected-set sizes.

\begin{figure}[t]
\centering
\includegraphics[width=0.99\textwidth]{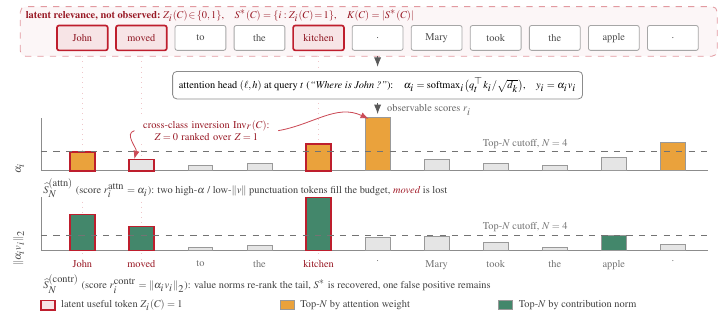}
\caption{
\textbf{Useful-token hypothesis.}
Each attention operation has an unknown useful set whose size $K(C)$
depends on context. Attention and contribution scores give rankings from
which sets of different sizes are selected. Ranking inversions occur when
tokens outside the useful set score at least as highly as useful tokens,
so recovering the useful set may require retaining more than $K(C)$ tokens.
}
\label{fig:latent-useful}
\end{figure}

Following \citet{mudarisov2026geometric}, we first examine whether selected
tokens form a geometrically distinct set in the space of their weighted
value vectors. For $1\le N<|\Iset_t|$, define
$s_N^{(r)}=\sum_{i\in\Shat_N^{(r)}}y_i$, the contribution of the selected
tokens to the head output. We retain the magnitude of this sum and compare
Euclidean distance,
$d_{\mathrm E}(x,y)=\|x-y\|_2$, and cosine distance,
$d_{\mathrm C}(x,y)=1-\langle x,y\rangle/
(\|x\|_2\|y\|_2+\eta)$, using $\eta=10^{-12}$.
Euclidean distance reflects both magnitude and direction, while cosine
distance provides a complementary directional comparison. Write
$D_i=d(y_i,s_N^{(r)})$ and suppress the ranking and distance indices below.

\begin{definition}[Geometric separability metrics]
\label{def:geo-metrics}
Let $S=\Shat_N^{(r)}$,
$\rho_{\min}=\min_{j\in\Iset_t\setminus S}D_j$, and
$\rho_{\max}=\max_{i\in S}D_i$. Define
\begin{equation}
P_N=
\frac{N}
{N+|\{j\in\Iset_t\setminus S:D_j\le\rho_{\max}\}|},
\qquad
R_N=
\frac{|\{i\in S:D_i<\rho_{\min}\}|}{N}.
\label{eq:geo-metrics}
\end{equation}
The extremal separability score is
$F_N=2P_NR_N/(P_N+R_N)$, with $P_N>0$ on this domain.
\end{definition}

Geometric precision measures contamination by unselected tokens within
the radius containing the selected set. Geometric recall measures the
fraction of selected tokens closer to the aggregate than every unselected
token. Their harmonic summary combines two different radii, so its
interpretation differs from a classification F-score evaluated at a common
threshold. Random controls use the same construction around their own
selected aggregates. At $N=1$, Euclidean $F_1=1$ in the absence of duplicate
vectors, including for random selection. Stabilized cosine distance need
not share this exact boundary. Appendix~\ref{app:geometry-details}
describes these boundary and scale effects.

To determine whether a selected set is sufficient for prediction, we
restrict attention to that set and measure the resulting change in model
loss. With mask 
$m_i^{(r,N)}=\mathbf 1\{i\in\Shat_N^{(r)}\}$, the primary intervention
leaves retained weights unchanged and sets the others to zero:
\begin{equation}
\widetilde\alpha_i^{(r,N)}=\alpha_i m_i^{(r,N)},
\qquad
\widetilde z_t^{(r,N)}
=\sum_{i\in\Iset_t}\widetilde\alpha_i^{(r,N)}v_i.
\label{eq:no-renorm}
\end{equation}
We apply the same size limit $N$ throughout the model, recomputing the
selected set from the current activations at every causal query, head,
and layer without retraining. Queries with at most $N$ visible tokens
retain them all. At fixed incoming activations, the local output change
is the discarded sum
$e_t^{(r,N)}=\sum_{i\notin\Shat_N^{(r)}}\alpha_i v_i$.
Appendix~\ref{app:local-bounds} relates its norm to the discarded
contribution norms and attention mass. The effect on downstream
prediction is evaluated through the full intervened model.

For the full-attention model $M$ and intervened model $M_N^{(r)}$, define
relative language-modeling degradation as
\begin{equation}
\Delta\NLL_{\mathrm{rel}}^{(r)}(N)=
\frac{\NLL(M_N^{(r)})-\NLL(M)}{\NLL(M)}.
\label{eq:relative-nll}
\end{equation}
For question answering tasks, we use candidate-normalized answer loss. Given a candidate set
$\mathcal A(x)$, candidate score $s_M(a\mid x)$, and correct answer $y$,
\begin{equation}
\ell_{\mathrm{ans}}(M;x)=-\log
\frac{\exp\!\bigl(s_M(y\mid x)\bigr)}
{\sum_{a\in\mathcal A(x)}
\exp\!\bigl(s_M(a\mid x)\bigr)}.
\label{eq:answer-loss}
\end{equation}
In the BABILong experiments, $s_M(a\mid x)$ is the
first-continuation-token logit for candidate $a$ among six answers.
We measure the mean loss increase on matched examples and candidate sets
and report candidate accuracy as a complementary measure.
Appendix~\ref{app:qa-scoring} provides the complete scoring protocol.
Annotated supporting facts supply a partial reference for relevance:
a fact can span several tokens, and other tokens may also contribute to
the computation.

Let $\Delta\mathcal L^{(r)}(N)$ denote the relative NLL increase or the
mean answer-loss increase, according to the task. For a tolerance
$\tau\ge0$, define the \emph{effective attention set size} as
\begin{equation}
N_\tau^{*,(r)}
=\min\{N\ge1:\Delta\mathcal L^{(r)}(N)\le\tau\},
\label{eq:required-width}
\end{equation}
where $N$ ranges over positive integers. This quantity describes the
common selected-set size needed to satisfy an average-loss criterion
across the model. Its relationship to the local useful-set size $K(C)$
depends on ranking errors, the values being combined, and the sensitivity
of subsequent computation.

We additionally evaluate renormalized Top-$N$, which divides each
retained weight by the total retained attention mass. This preserves
their relative weights and restores their sum to one. Comparing the two
interventions tests how the required set size depends on rescaling the
selected sum. The empirical comparison is reported in
Appendix~\ref{app:renorm-details}, with the corresponding local output
identities in Appendix~\ref{app:local-bounds}.

\section{Experiments}
\label{sec:experiments}
\label{sec:experimental-setup}

We evaluate nine decoder-only checkpoints:
Qwen-2.5-1.5B/7B~\citep{yang2024qwen25},
Gemma-7B~\citep{mesnard2024gemma},
Gemma-2-9B~\citep{riviere2024gemma2},
Llama-2-7B~\citep{touvron2023llama2},
Llama-3-8B~\citep{grattafiori2024llama3},
Llama-3.2-1B~\citep{meta2024llama32},
Mistral-7B-v0.3~\citep{jiang2023mistral}, and
Mistral-Small-24B-Base-2501~\citep{mistralai2025mistralsmall24b}.
The 24B checkpoint uses 8-bit weights, so comparisons involving this
model include differences in both architecture and numerical precision.

Language-model evaluation uses 50 documents per model from each of
OpenWebText~\citep{gokaslan2019openwebtext} and
WikiText-103~\citep{merity2017pointer}. Geometric measurements cover
Qwen-2.5-7B, Gemma-7B, Llama-3-8B, and Mistral-7B-v0.3 at the final query
across layers and heads. Functional interventions apply at every causal
query throughout the model. We also evaluate controlled retrieval on
BABILong~\citep{kuratov2024babilong} \texttt{qa1}, holding one annotated
supporting fact fixed while varying background text across
0K, 1K, 2K, and 4K conditions. Each condition contains 100 QA examples,
with 86 examples per background passing the tokenizer-span validation
used for support measurements. Appendix~\ref{app:protocol} provides the
detailed protocols.

\subsection{Geometric structure and effective attention set size}
\label{sec:exp-geometry}
\label{sec:geometry-result}
\label{sec:exp-lm-width}
\label{sec:lm-results}

We begin by comparing sets selected through attention and contribution
rankings with random sets of the same size. Figure~\ref{fig:geometry-main}
shows stronger geometric separation for score-based selection in all
four models.There is a high advantage in the mean Euclidean $F_N$ of
attention selection over random selection over models on OpenWebText. 
The corresponding cosine advantages
are smaller (see Appendix~\ref{app:additional-width-results}). Contribution ranking produces the same
qualitative pattern. The larger Euclidean differences indicate that
the observed separation includes a substantial magnitude-related
component, alongside the directional structure captured by cosine
distance.

\begin{figure}[t]
\centering
\includegraphics[width=0.85\linewidth]
{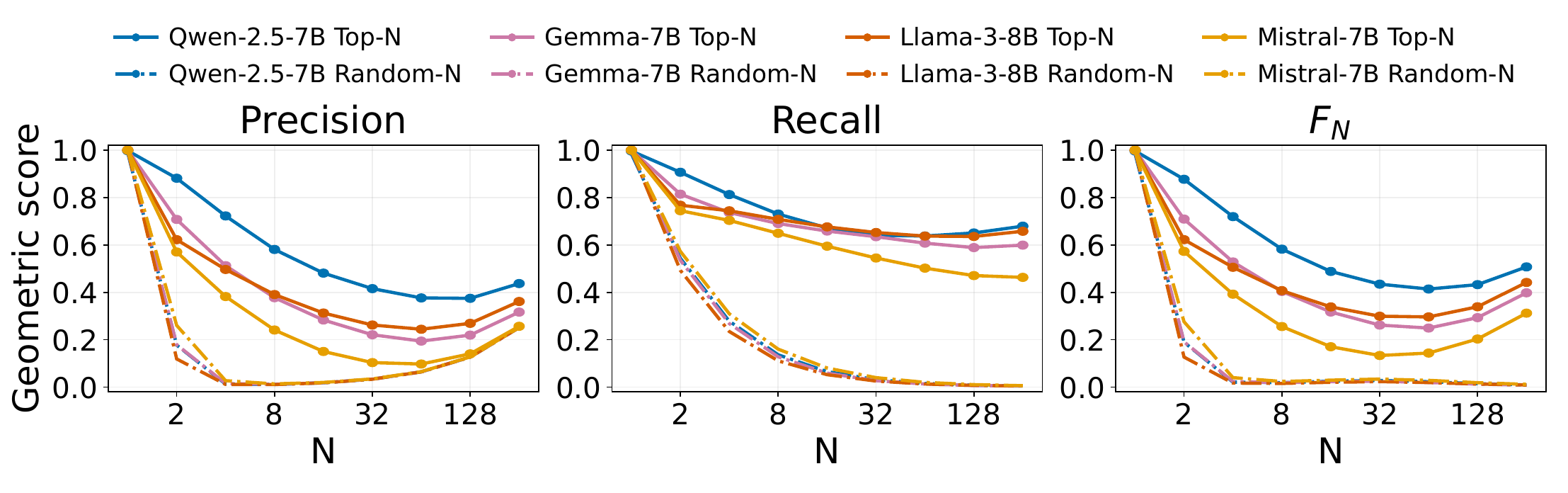}
\caption{
\textbf{Attention-selected sets exhibit stronger Euclidean separation
than random sets across all four models.}
Geometric precision (left), recall (middle), and extremal $F_N$ (right)
on OpenWebText. Solid curves show selection by attention weight,
and dash-dotted curves show random sets of the same size, each
evaluated relative to its own aggregate. Measurements use the final
query of 1024-token contexts and are averaged over heads, layers,
and documents. 
}
\label{fig:geometry-main}
\end{figure}

\begin{figure*}[b]
\centering
\includegraphics[width=0.9\textwidth]
{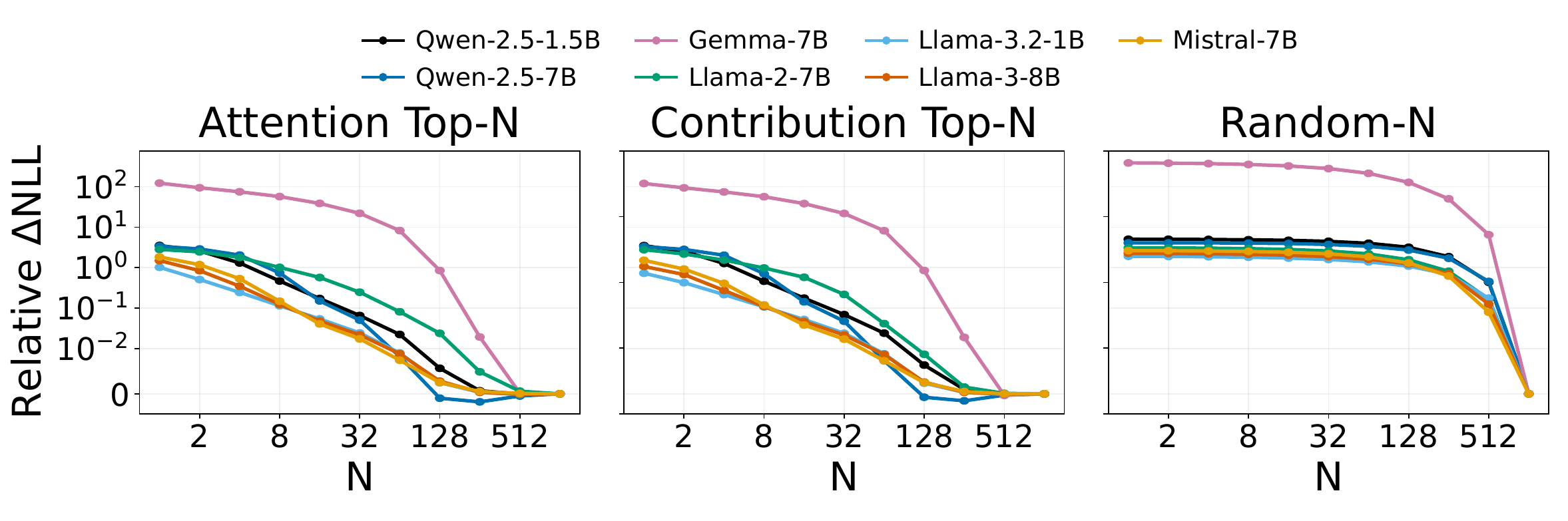}
\caption{
\textbf{Attention and contribution selection produce lower NLL
degradation than random selection at the same attention set sizes.}
Relative increase in mean NLL over the full model on OpenWebText
with 1k-token contexts. Panels compare selection by attention weight
(left), contribution magnitude (middle), and random selection (right).
For a chosen loss tolerance, the crossing of a curve with that
degradation level gives an approximate effective attention set size $N^*$.
}
\label{fig:nll-main}
\end{figure*}

These geometric differences motivate testing whether the selected sets
preserve predictive performance. Figure~\ref{fig:nll-main} shows how
relative NLL degradation changes as more highly scored tokens are
retained with their original attention weights. For a chosen loss
tolerance, the crossing of each curve provides an estimate of the
effective attention set size. We evaluate powers of two and refine
observed crossings to integer sizes, without exhaustively testing
every smaller integer. Detailed estimates for OpenWebText and
WikiText-103 are reported separately in
Appendix~\ref{app:nll-capacity-by-dataset}.

NLL degradation decreases rapidly as more highly scored tokens are
retained, while random selection produces substantially greater
degradation at the same set sizes. This advantage is consistent with
the useful-token hypothesis, suggesting that the rankings concentrate
tokens important for prediction near the top. The required set size
nevertheless varies substantially across models. Contribution ranking
reduces the required size most clearly for Llama-2-7B, showing that
value magnitude can help identify a sufficient set. For most other
models, the two rankings yield similar estimates.

These measurements characterize how many tokens must be retained
under a common selection rule to keep average loss within the chosen
tolerance. Within the hypothesis, differences in this estimate can
reflect both the size of the useful set and how accurately the scores
rank its members. Sensitivity to the discarded contributions also
matters. Localized interventions reveal substantial variation across
layers and heads
(Appendix~\ref{app:localization-extra}), so the common set size does
not describe a typical individual attention operation.

We next relate functional performance to the geometric separability.
Figure~\ref{fig:geometry-nll-scatter} shows positive rank correlations
along the plotted trajectories, so stronger separation often accompanies
greater NLL degradation. 
 Additional contribution-ranking trajectories and
an exploratory prediction analysis appear in
Appendices~\ref{app:geometry-function-scatter} and \ref{app:predictors}.

\begin{figure*}[t]
\centering
\begin{minipage}{0.495\textwidth}
\centering
\includegraphics[width=\linewidth]
{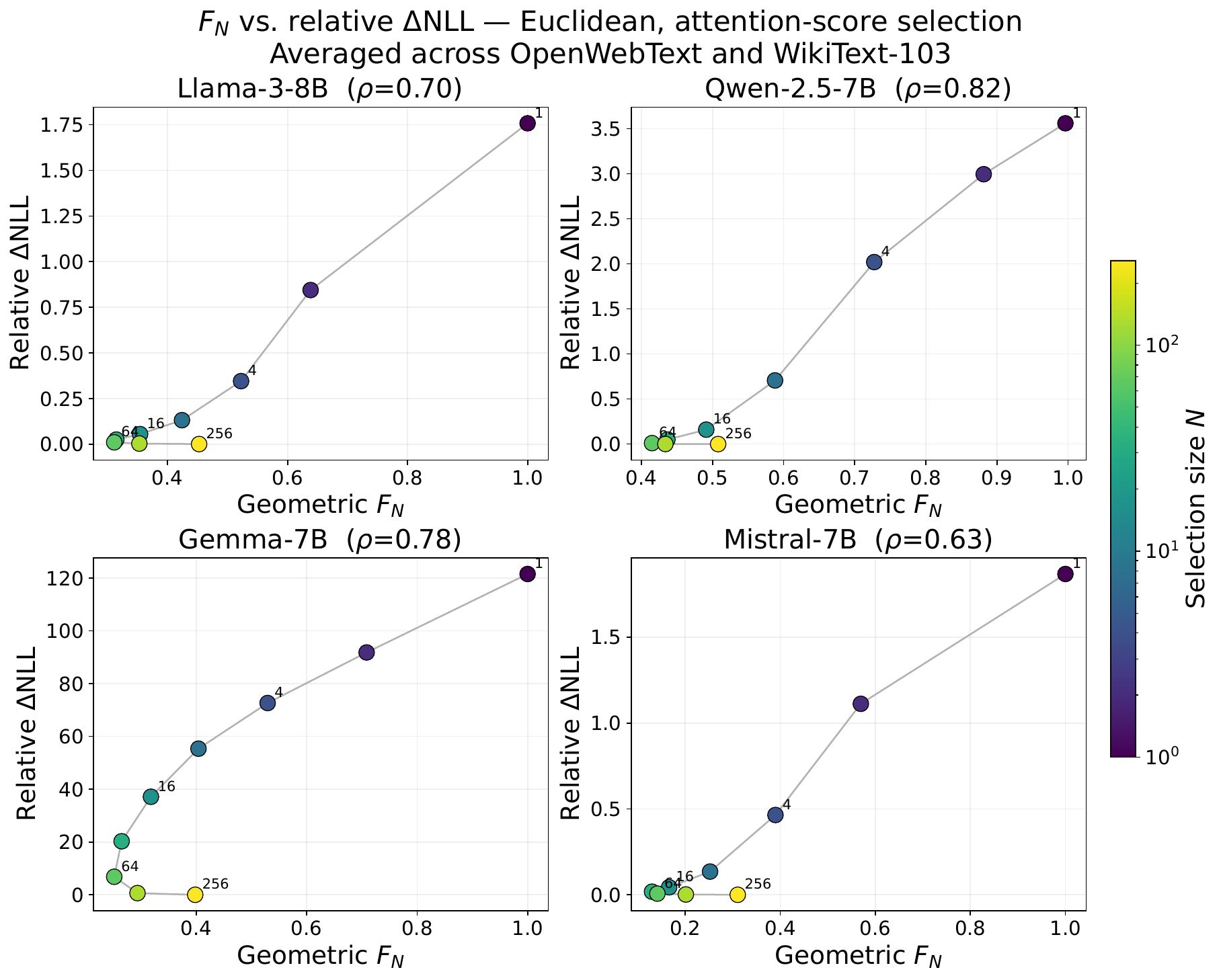}\\[-2pt]
{\small (a) Euclidean geometry}
\end{minipage}\hfill
\begin{minipage}{0.495\textwidth}
\centering
\includegraphics[width=\linewidth]
{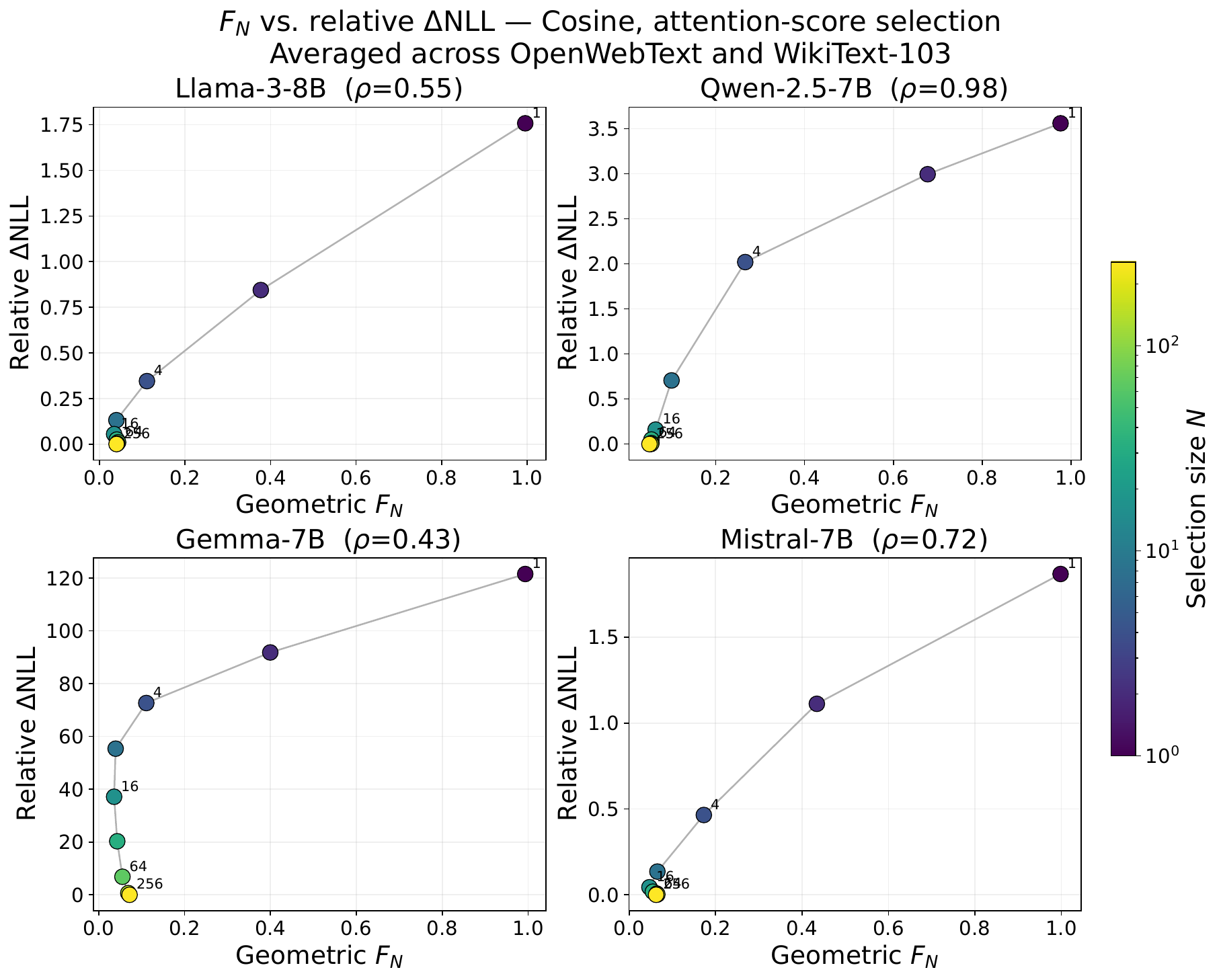}\\[-2pt]
{\small (b) Cosine geometry}
\end{minipage}
\caption{
\textbf{Stronger geometric separation often accompanies greater NLL
degradation.}
Geometric $F_N$ and relative NLL degradation under attention-weight
selection, using (a) Euclidean and (b) cosine geometry.
Each point corresponds to one set size $N$, indicated by color,
with both quantities averaged across OpenWebText and WikiText-103.
Lines connect successive set sizes, and $\rho$ denotes the Spearman
correlation along each trajectory. 
}
\label{fig:geometry-nll-scatter}
\end{figure*}

\subsection{Dependence on context length}
\label{sec:context-length}

To examine whether the required set size remains stable as more context
becomes available, we vary
$L\in\{256,512,1024,2048\}$ using nested suffixes of 50 documents per
model and corpus. At every length, we score the same final 128 target
tokens within each model--corpus pair. The full-model baseline is
recomputed at each length, and selection is applied throughout the
model. 

Figure~\ref{fig:context-width-main}a shows that longer contexts require
larger selected sets to preserve performance on the same prediction
targets, while the selected fraction of context decreases. Under the
useful-token hypothesis, extending context can change both the useful
set and the competition involved in recovering it. Additional context
may contain useful tokens, but it also introduces candidates that can
rank above useful tokens already present. The observed growth in
effective attention set size can therefore arise from changes in
$K(C)$, changes in ranking, and the number of contributions needed
to preserve the attention output.

The improvement in full-model NLL (Fig.~\ref{fig:context-width-main}b) indicates that the added natural
context provides useful predictive information. This makes changes
in the useful set a plausible contributor to the observed growth,
although improved predictions do not establish how its size changes.
The required selected-set size also grows when the allowed absolute
NLL increase is held fixed
(App.~\ref{app:context-length-results}), ruling out the tightening
of the relative-loss criterion as a complete explanation. These
observations motivate the controlled-background experiments below,
which examine competition while holding the annotated supporting
fact fixed. 
Appendix~\ref{app:context-length-results} reports the normalized sizes,
uncertainty estimates, contribution-ranking results, and comparisons
between final-target and all-token scoring.

\begin{figure}[t]
\centering
\begin{minipage}{0.495\linewidth}
\centering
\includegraphics[width=\linewidth]
{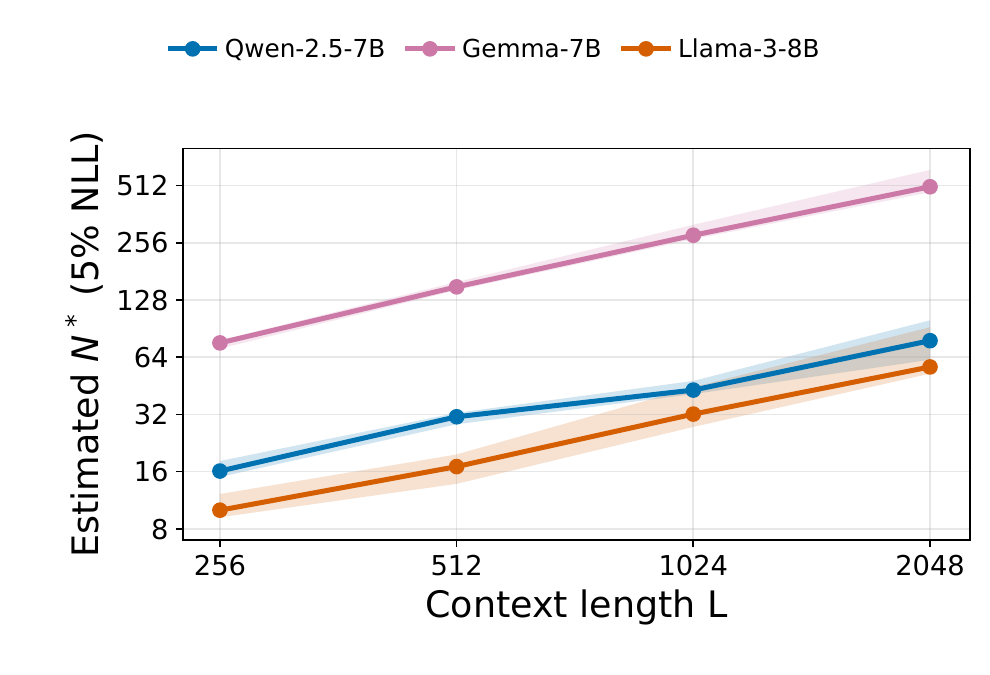}\\[-2pt]
{\small (a) Effective attention set size}
\end{minipage}\hfill
\begin{minipage}{0.495\linewidth}
\centering
\includegraphics[width=\linewidth]
{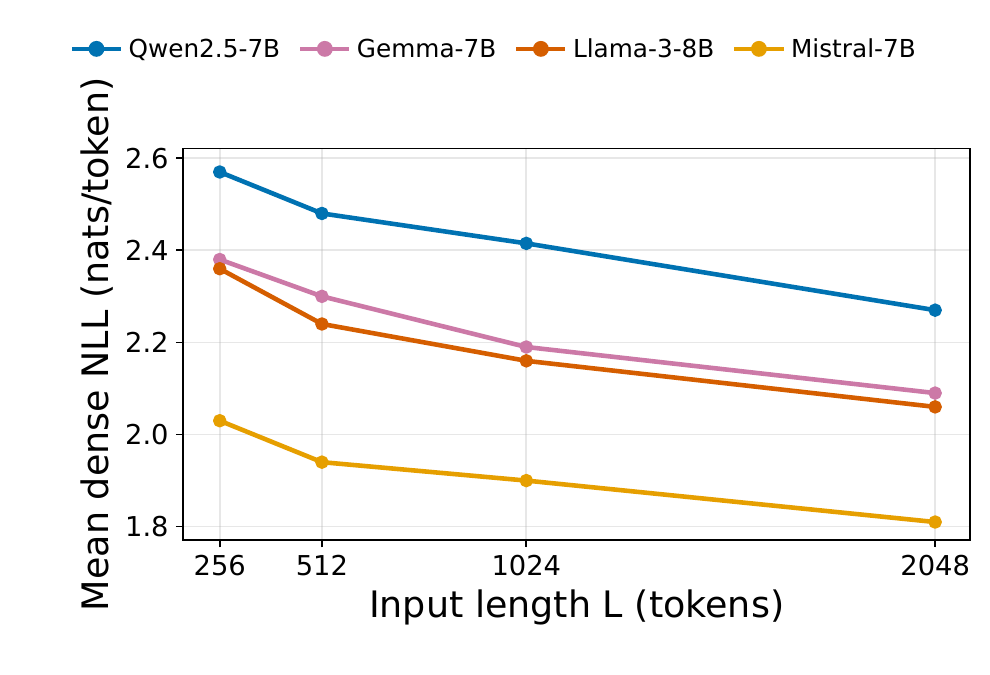}\\[-2pt]
{\small (b) Full-model prediction performance}
\end{minipage}
\caption{
\textbf{Longer context improves full-model prediction and increases
the effective attention set size.}
(a) Estimated set size at a 5\% relative NLL tolerance against the
full model at each context length.
(b) Mean full-model NLL.
Both panels use 50 OpenWebText documents per model, scoring the same
final 128 target tokens at every context length.
Shading in (a) shows 95\% paired-document bootstrap intervals
conditional on the evaluated set sizes.
}
\label{fig:context-width-main}
\end{figure}

\subsection{Retrieval under competition}
\label{sec:competition}
\label{sec:distractors}

To examine competition while controlling annotated support, we use
BABILong \texttt{qa1}, where the same supporting fact is retained as
background text increases from 0K to 4K tokens. We first evaluate the unrestricted
model to establish the reference performance for each condition.
Figure~\ref{fig:qa-distractors-main}a reports candidate accuracy.
Mean candidate answer loss increases from 0K to 4K in all four models,
with paired loss-change intervals above zero
(Appendix~\ref{app:dense-qa-background}).

We estimate the selected-set size needed to keep mean answer-loss
degradation within $0.10$ nats of the full model at each background
(Figure~\ref{fig:qa-distractors-main}b). The required size increases
substantially in several models despite the unchanged annotated
support. Within the useful-token framework, this is consistent with
additional competitors increasing the number of tokens that must be
retained to recover supporting information. Qwen's weaker and
nonmonotonic response shows that support displacement does not
translate uniformly into a larger functional requirement. Its effect
on answer loss also depends on how retained information is weighted
and used by subsequent computation.

To study token competition, we measure the position of
support tokens in the attention ranking, their total attention mass,
and their recall among the 64 highest-weight tokens. As background
grows, the same annotated support moves down the ranking, receives
less attention mass, and is less frequently retained within this
fixed-size set (Figure~\ref{fig:e1-mechanism}). These observations are
consistent with competition making it harder for attention to select
the information required by the task, motivating a functional test
of how many tokens must be retained.

\begin{figure}[h]
\centering
\begin{minipage}[b]{0.49\linewidth}
\centering
\includegraphics[width=\linewidth]
{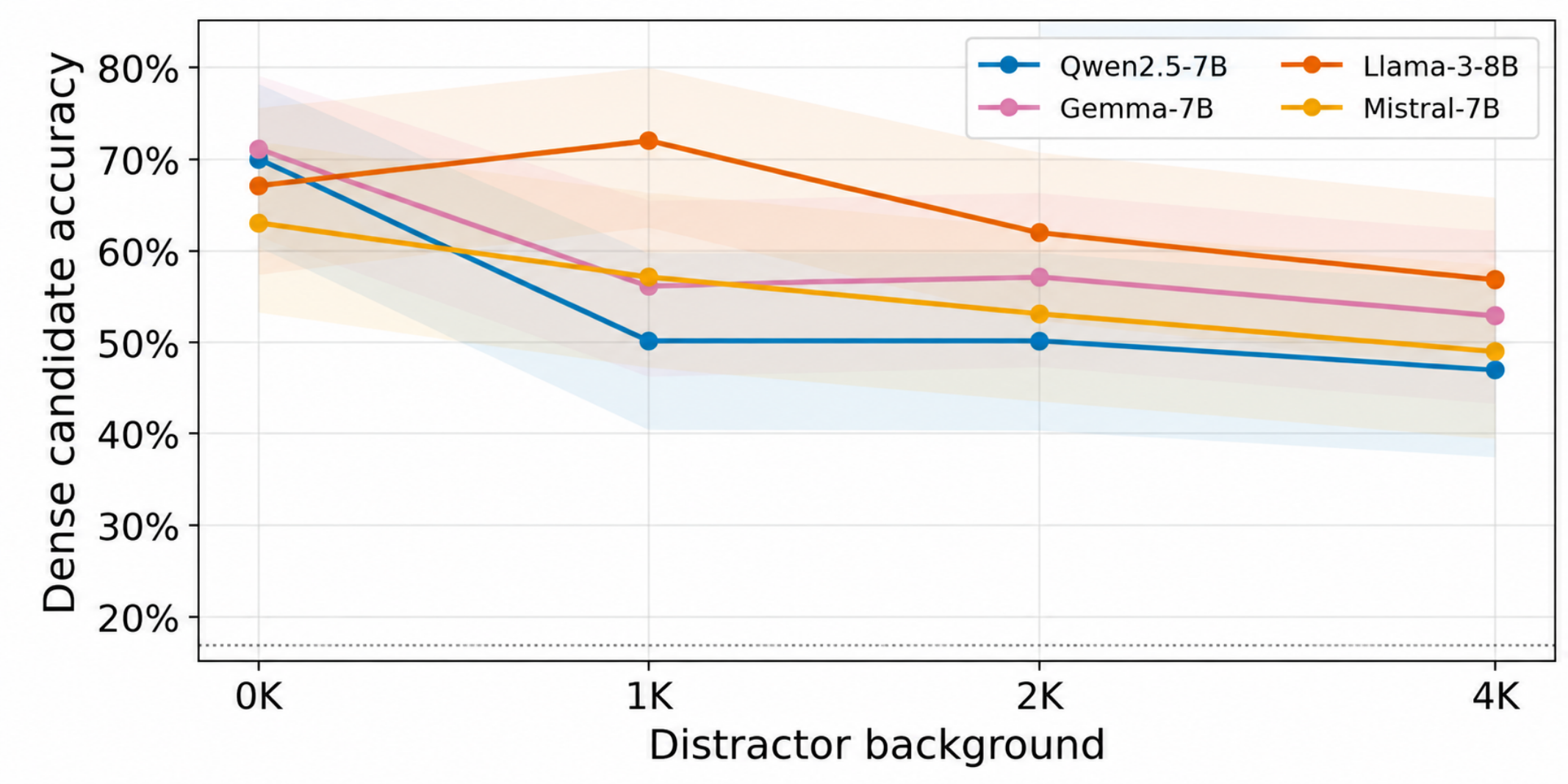}\\[2pt]
{\small (a) Full-model candidate accuracy}
\end{minipage}\hfill
\begin{minipage}[b]{0.49\linewidth}
\centering
\includegraphics[width=0.96\linewidth]
{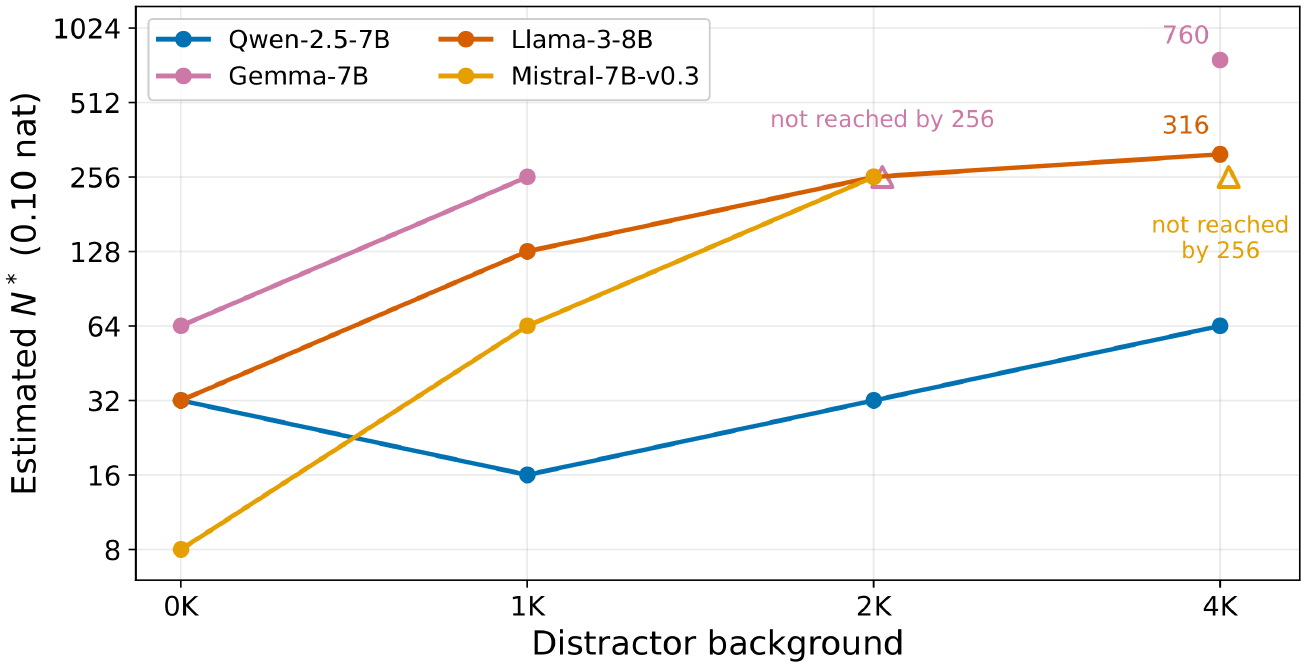}\\[2pt]
{\small (b) Effective attention set size}
\end{minipage}
\caption{
\textbf{Longer background text tends to lower full-model accuracy
and increase the effective attention set size despite fixed
annotated support.}
BABILong \texttt{qa1} results on 100 matched examples, with six
candidate answers scored using first-continuation-token logits.
(a) Full-model candidate accuracy. Shading shows 95\% Wilson
intervals, and the dotted line indicates chance accuracy.
(b) Estimated effective attention set size under attention-weight
selection at a $0.10$-nat mean answer-loss tolerance relative to
the full model at each background. Open triangles indicate that no evaluated set
size through $256$ meets the tolerance.
}
\label{fig:qa-distractors-main}
\end{figure}

The support records clarify how competition increases
(Appendix~\ref{app:support-details}). An individual non-support token
becomes less likely to outrank a support token as background grows
in every model. However, the increasing number of competitors produces
more tokens ranked above support overall. Improved pairwise ranking
can therefore coexist with poorer support recall at a fixed selected-set
size. In the useful-token framework, ranking errors accumulated over
a larger candidate set can increase the number of tokens needed to
retain a fixed required set, even when individual comparisons become
more favorable.

\begin{figure*}[b]
\centering
\begin{minipage}{0.325\textwidth}
\centering
\includegraphics[width=\linewidth]
{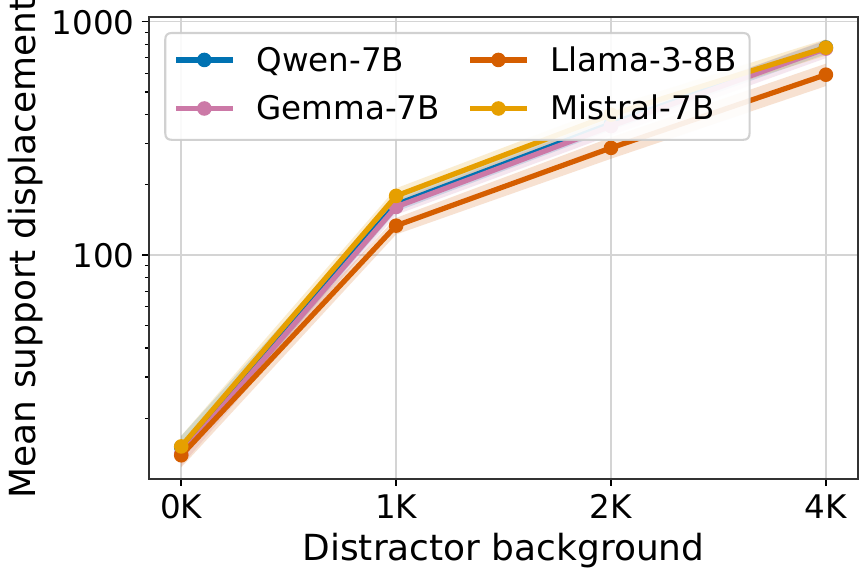}\\[-2pt]
{\small (a) Rank displacement}
\end{minipage}\hfill
\begin{minipage}{0.325\textwidth}
\centering
\includegraphics[width=\linewidth]
{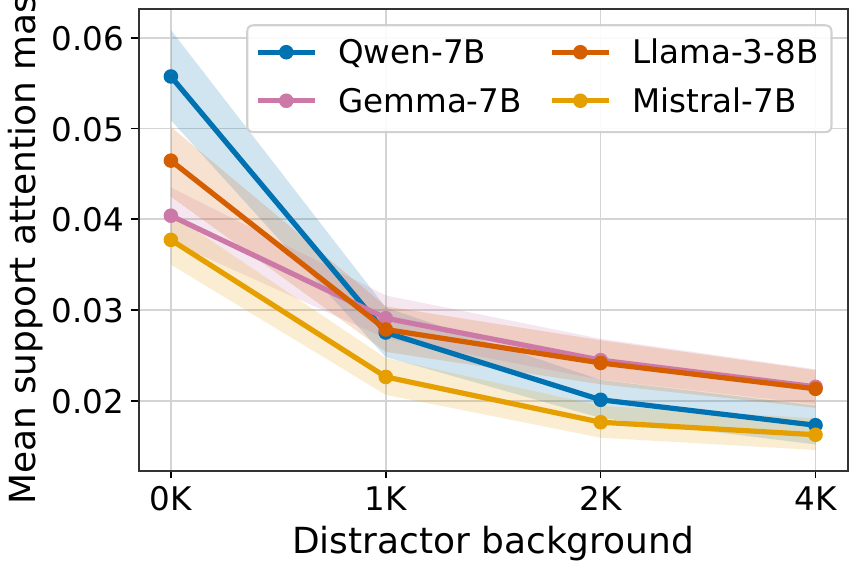}\\[-2pt]
{\small (b) Support attention mass}
\end{minipage}\hfill
\begin{minipage}{0.325\textwidth}
\centering
\includegraphics[width=\linewidth]
{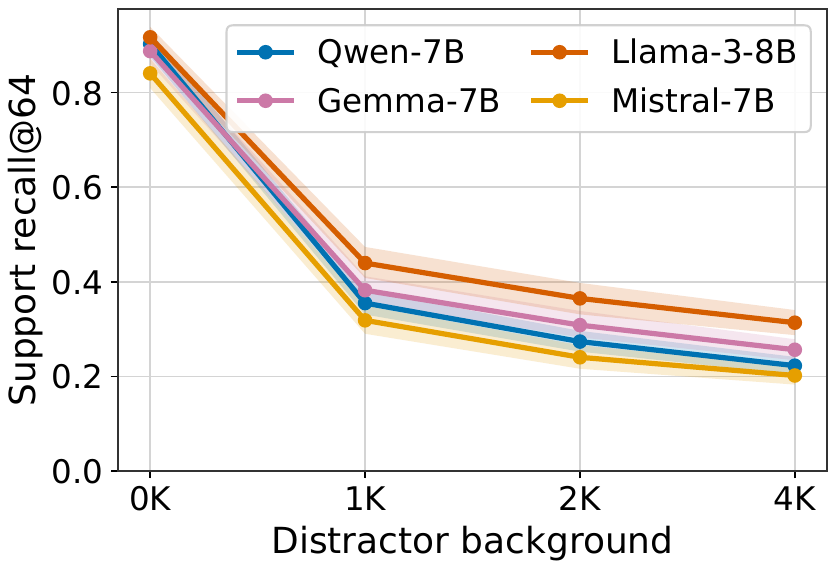}\\[-2pt]
{\small (c) Support recall@64}
\end{minipage}
\caption{
\textbf{Additional background pushes support tokens down the attention
ranking and reduces their attention mass and recall.}
Measurements at the answer query in BABILong \texttt{qa1} show support displacement, attention
mass, and recall among the 64 highest-weight tokens. Statistics use
86 examples per background after tokenizer-span validation.
}
\label{fig:e1-mechanism}
\end{figure*}

A conditional ranking model formalizes this mechanism.
Consider $L$ tokens with a fixed set of $K<L$ useful tokens.
Treat their scores as fixed, assume no ties, and draw competitor
scores independently from a common distribution. If $p_L$ is the
probability that a competitor exceeds the weakest required score,
the expected selected-set size needed to retain all useful tokens is
\[
\mathbb E[N_{\rm rec}(L)]=K+(L-K)p_L.
\]
Consequently, even a fixed useful set can need an increasing
selected set as more competitors are introduced.
Appendix~\ref{app:context-width-theory} gives the derivation and
probability bounds. The probability $p_L$ concerns the weakest
useful token and is distinct from the measured average pairwise
outranking probability. 

\subsection{Effect of attention normalization}
\label{sec:aggregation}

Removing tokens while preserving their original weights reduces total
attention mass. Results in Appendix~\ref{app:renorm-details}
show that renormalizing the retained attention weights can substantially reduce
the required set size and its increase between background conditions.
The effect varies with the model and loss tolerance. Each comparison
measures preservation of the corresponding full-model baseline,
whose answer loss also changes with background.

The local output identities explain how normalization affects the
retained computation. For fixed incoming activations, let
$M_S\in(0,1)$ denote retained attention mass, and let $\mu_S$ and
$\mu_T$ be the normalized weighted means of retained and discarded
values. The errors under deletion and renormalization are
\[
e_{\rm del}=(1-M_S)\mu_T,
\qquad
e_{\rm ren}=(1-M_S)(\mu_T-\mu_S).
\]
When these means are close, rescaling can reduce the output perturbation
despite removing substantial attention mass. The identities also allow
renormalization to increase local error, while full-model interventions
can change subsequent representations and rankings
(Appendix~\ref{app:local-bounds}).

This dependence on aggregation complements the useful-token hypothesis.
Under the distributional assumptions in
Appendix~\ref{app:theory-mass-width}, preserving a fixed fraction of
attention mass requires a selected set proportional to context length.
Additional norm and alignment conditions connect this requirement to
local output accuracy. An idealized task-loss example with identical
values demonstrates that growing sets can be needed even without
additional distinct information
(Appendix~\ref{app:theory-renormalization}). The effective attention
set size can therefore reflect both recovery of useful tokens and
preservation of the weighted sum.

\section{Discussion}
\label{sec:discussion}
\label{sec:related}
\label{sec:limitations}

Self-attention retrieves information by aggregating context-dependent
value vectors \citep{vaswani2017attention}. Modern Hopfield analyses
formalize its connection to associative memory and derive
storage-capacity guarantees under idealized assumptions
\citep{ramsauer2021hopfield}. These guarantees concern the number of
patterns that can be stored and recovered. Our analysis complements
this perspective by measuring how many highly scored tokens must
be retained at each attention operation to keep average prediction
loss within a specified tolerance.

The useful-token hypothesis provides a framework for interpreting
this measurement. The required set size can increase when more
tokens are useful, when useful tokens rank below competing sources,
or when the model is more sensitive to discarded contributions.
The useful set remains unobserved, and attention weights alone do
not establish relevance \citep{brunner2020identifiability}.
Building on \citet{mudarisov2026geometric}, we examine the geometry
of selected contributions and relate it to functional loss.
The geometric measurements depend on vector magnitudes and
directions relative to the selected aggregate. This dependence
limits their interpretation as evidence of relevance, and the
observed geometry--loss trajectories do not establish a general
predictor of effective attention set size.

Sparse attention mappings produce explicitly sparse weights
\citep{martins2016sparsemax,peters2019sparse}. Studies of attention
sinks, active and dormant heads, and geometric token selection
identify additional structure that can support efficient inference
\citep{xiao2023streaming,guo2024active,shin2025orthorank}.
Our interventions measure how restricting attention affects
predictive performance in existing models. Because they compute
dense attention before selection, the experiments do not
demonstrate an inference speedup.

The normalization controls show that effective set size depends
on how retained values are combined, complementing the ranking
explanation suggested by the fixed-support experiments.
The conditional theory in Appendix~\ref{app:context-width-theory}
describes settings in which competition or preservation of the
weighted sum requires larger sets as context grows. Such growth
can occur even when the amount of distinct task-relevant information
is fixed. These results provide possible explanations for the
observed context dependence, although their assumptions have not
been verified for the tested models and the experiments do not
establish a universal scaling law.

The measurement also has several interpretive limits. We apply a
common set size across attention operations, while allowing the
selected positions to vary with the query, head, and layer.
Consequently, the estimate does not identify a single subset of
original context tokens used by the model or recover the size of
a local useful set. Annotated BABILong support provides only a
partial reference for relevance. Average loss can conceal changes
on individual examples, and preserving a weak full-model baseline
does not establish successful task solving. Threshold estimates
and their uncertainty are conditional on the evaluated set sizes.
Localized interventions further reveal heterogeneous sensitivity
across layers and heads, whose effects cannot be treated as
independent components of the full-model requirement.

\section{Conclusion}
\label{sec:conclusion}

Measuring loss degradation under Top-$N$ selection provides an
operational estimate of how many tokens attention needs to retain
to preserve average predictive performance within a chosen loss tolerance.
Attention and contribution rankings produce substantially lower
loss than random selection at the same set sizes, although geometric
separation alone does not determine how large a set is sufficient. 
For fixed prediction targets, longer context increases the required
set size while reducing its fraction of the available context.
When annotated support is held fixed, additional background
displaces support in the ranking and increases the required size
in several models. Renormalization further shows that preserving
the weighted sum affects these estimates. The useful-token
hypothesis and conditional theory connect these findings to the
amount of useful information, its ranking among competing tokens,
and the aggregation of retained values. Effective attention set
size captures the combined effect of these factors on predictive
performance under a specified intervention.

\clearpage
\bibliography{references}

@article{yang2024qwen25,
  title   = {{Qwen2.5} Technical Report},
  author  = {Yang, An and others},
  journal = {arXiv preprint arXiv:2412.15115},
  year    = {2024}
}

@article{mesnard2024gemma,
  title   = {Gemma: Open Models Based on {Gemini} Research and Technology},
  author  = {{Gemma Team} and Mesnard, Thomas and others},
  journal = {arXiv preprint arXiv:2403.08295},
  year    = {2024}
}

@article{riviere2024gemma2,
  title   = {{Gemma 2}: Improving Open Language Models at a Practical Size},
  author  = {{Gemma Team} and Riviere, Morgane and others},
  journal = {arXiv preprint arXiv:2408.00118},
  year    = {2024}
}

@article{touvron2023llama2,
  title   = {{Llama 2}: Open Foundation and Fine-Tuned Chat Models},
  author  = {Touvron, Hugo and others},
  journal = {arXiv preprint arXiv:2307.09288},
  year    = {2023}
}

@article{grattafiori2024llama3,
  title   = {The {Llama 3} Herd of Models},
  author  = {Grattafiori, Aaron and others},
  journal = {arXiv preprint arXiv:2407.21783},
  year    = {2024}
}

@misc{meta2024llama32,
  author       = {{Meta AI}},
  title        = {{Llama 3.2} Model Card},
  year         = {2024},
  howpublished = {Model card}
}

@article{jiang2023mistral,
  title   = {{Mistral 7B}},
  author  = {Jiang, Albert Q. and others},
  journal = {arXiv preprint arXiv:2310.06825},
  year    = {2023}
}

@misc{mistralai2025mistralsmall24b,
  author       = {{Mistral AI}},
  title        = {{Mistral-Small-24B-Base-2501}},
  year         = {2025},
  howpublished = {Model card}
}

@misc{gokaslan2019openwebtext,
  title        = {{OpenWebText} Corpus},
  author       = {Gokaslan, Aaron and Cohen, Vanya and Pavlick, Ellie and Tellex, Stefanie},
  year         = {2019},
  howpublished = {\url{https://skylion007.github.io/OpenWebTextCorpus/}}
}

@inproceedings{merity2017pointer,
  title     = {Pointer Sentinel Mixture Models},
  author    = {Merity, Stephen and Xiong, Caiming and Bradbury, James and Socher, Richard},
  booktitle = {International Conference on Learning Representations},
  year      = {2017}
}

@article{kuratov2024babilong,
  title   = {{BABILong}: Testing the Limits of {LLMs} with Long Context Reasoning-in-a-Haystack},
  author  = {Kuratov, Yuri and Bulatov, Aydar and Anokhin, Petr and Rodkin, Ivan
             and Sorokin, Dmitry and Sorokin, Artyom and Burtsev, Mikhail},
  journal = {arXiv preprint arXiv:2406.10149},
  year    = {2024}
}

@article{vaswani2017attention,
  title={Attention Is All You Need},
  author={Vaswani, Ashish and Shazeer, Noam and Parmar, Niki and Uszkoreit, Jakob and Jones, Llion and Gomez, Aidan N. and Kaiser, Lukasz and Polosukhin, Illia},
  journal={arXiv preprint arXiv:1706.03762},
  year={2017}
}

@article{brunner2020identifiability,
  title={On Identifiability in Transformers},
  author={Brunner, Gino and Liu, Yang and Pascual, Dami{\'a}n and Richter, Oliver and Ciaramita, Massimiliano and Wattenhofer, Roger},
  journal={International Conference on Learning Representations},
  year={2020}
}

@article{martins2016sparsemax,
  title={From Softmax to Sparsemax: A Sparse Model of Attention and Multi-Label Classification},
  author={Martins, Andr{\'e} F. T. and Astudillo, Ram{\'o}n Fernandez},
  journal={Proceedings of the 33rd International Conference on Machine Learning},
  year={2016}
}

@article{peters2019sparse,
  title={Sparse Sequence-to-Sequence Models},
  author={Peters, Ben and Niculae, Vlad and Martins, Andr{\'e} F. T.},
  journal={Proceedings of the 57th Annual Meeting of the Association for Computational Linguistics},
  year={2019}
}

@article{xiao2023streaming,
  title={Efficient Streaming Language Models with Attention Sinks},
  author={Xiao, Guangxuan and Tian, Yuandong and Chen, Beidi and Han, Song and Lewis, Mike},
  journal={arXiv preprint arXiv:2309.17453},
  year={2023}
}

@article{guo2024active,
  title={Active-Dormant Attention Heads: Mechanistically Demystifying Extreme-Token Phenomena in {LLMs}},
  author={Guo, Tianyu and Pai, Druv and Bai, Yu and Jiao, Jiantao and Jordan, Michael I. and Mei, Song},
  journal={arXiv preprint arXiv:2410.13835},
  year={2024}
}

@article{shin2025orthorank,
  title={{OrthoRank}: Token Selection via Sink Token Orthogonality for Efficient {LLM} Inference},
  author={Shin, Seungjun and Oh, Jaehoon and Oh, Dokwan},
  journal={arXiv preprint arXiv:2507.03865},
  year={2025}
}

@article{mudarisov2026geometric,
  title={Geometric Analysis of Token Selection in Multi-Head Attention},
  author={Mudarisov, Timur and Burtsev, Mikhail and Petrova, Tatiana and State, Radu},
  journal={arXiv preprint arXiv:2602.01893},
  year={2026}
}

@inproceedings{ramsauer2021hopfield,
  title={{Hopfield} Networks Is All You Need},
  author={Ramsauer, Hubert and Sch{\"a}fl, Bernhard and Lehner, Johannes and Seidl, Philipp and Widrich, Michael and Gruber, Lukas and Holzleitner, Markus and Adler, Thomas and Kreil, David and Kopp, Michael K. and Klambauer, G{\"u}nter and Brandstetter, Johannes and Hochreiter, Sepp},
  booktitle={International Conference on Learning Representations},
  year={2021}
}
\bibliographystyle{plainnat}

\clearpage
\appendix

\FloatBarrier
\renewcommand{\topfraction}{0.95}
\renewcommand{\bottomfraction}{0.9}
\renewcommand{\textfraction}{0.05}
\renewcommand{\floatpagefraction}{0.75}
\makeatletter
\setlength{\@fptop}{0pt}
\makeatother

\section{Formal details of attention selection}
\label{app:formal}

Section~\ref{sec:setup} relates an unknown useful set to observable rankings,
geometric separation, and prediction loss. We give the recovery bounds behind
the useful-token framework, specify the geometric construction, and derive
local output-error identities for the two interventions. These results clarify
how each measurement contributes to the analysis of effective attention set size.

\subsection{Useful-set recovery from ranking inversions}
\label{app:topk-recovery}

The useful-token hypothesis allows ranking errors to increase the number of
tokens needed to recover a useful set. To quantify this relationship, fix a
context $C=(X,\ell,h,t)$ with $1\le K(C)<|\Iset_t|$ and recall that
\begin{equation}
Z_i(C)\in\{0,1\},\qquad
\Sel^*(C)=\{i:Z_i(C)=1\},\qquad K(C)=|\Sel^*(C)|.
\label{eq:latent-variable}
\end{equation}
The labels represent hypothesized downstream relevance and are unobserved.
For an observable score $r_i$, define
\begin{equation}
\Inv_r(C)=
\sum_{i\in\Sel^*(C)}\sum_{j\in\Iset_t\setminus\Sel^*(C)}
\mathbf1\{r_j\ge r_i\}.
\label{eq:latent-inversions}
\end{equation}
Counting score ties as inversions makes the following bounds valid under the
index-based tie-breaking rule in Section~\ref{sec:setup}.

\begin{proposition}[Useful-set recovery from ranking inversions]
\label{prop:topk-recovery-app}
\label{prop:topn-recovery}
Fix $C$ and $r$, and write $K=K(C)$ and $I=\Inv_r(C)$. For
$K\le N\le|\Iset_t|$, let $m=|\Sel^*(C)\setminus\Shat_N^{(r)}|$.
Then
\begin{equation}
m(N-K+m)\le I,\qquad
m\le\min\!\left\{K,
\left\lfloor\frac{\sqrt{(N-K)^2+4I}-(N-K)}{2}\right\rfloor\right\}.
\label{eq:topn-inversion-bound}
\end{equation}
Also $m\le I/(N-K+1)$, so $N\ge K+I$ is sufficient for complete
recovery whenever this choice is feasible. At $N=K$, precision, recall, and
their harmonic mean relative to $\Sel^*(C)$ satisfy
\begin{equation}
P_{\rm lat}=R_{\rm lat}=F_{\rm lat}
=1-\frac mK\ge\max\!\left\{0,1-\frac{\sqrt I}{K}\right\}.
\label{eq:latent-bound}
\end{equation}
Under Hypothesis~\ref{hyp:latent-ranking}, this value is at least
$1-\sqrt\varepsilon$ with probability at least $1-\delta$ under
$\mathcal D_0$.
\end{proposition}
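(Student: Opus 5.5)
The argument is a counting argument on the Top-$N$ set. Write $S=\Shat_N^{(r)}(C)$, $U=\Sel^*(C)$, and let $M=U\setminus S$ be the $m$ missed useful tokens. The plan is to lower-bound $I$ by counting only inversions that involve a missed useful token.

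First, I would show that every $j\in S\setminus U$ forms an inversion with every $i\in M$. Since $j$ is selected and $i$ is not, $j$ precedes $i$ in the Top-$N$ ordering. Under the tie-breaking rule this means either $r_j>r_i$, or $r_j=r_i$ with $j$ the smaller index. In both cases $r_j\ge r_i$, so the pair is counted in \eqref{eq:latent-inversions}; counting ties as inversions is exactly what makes this step work. Next I count $S\setminus U$. We have $|S|=N$ because $N\le|\Iset_t|$, and $|S\cap U|=K-m$, so $|S\setminus U|=N-K+m$. The pairs $(i,j)\in M\times(S\setminus U)$ are distinct terms of the double sum, which gives $m(N-K+m)\le I$.

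Second, I would read off the explicit bounds. The quadratic $m^2+(N-K)m-I\le0$ with $m\ge0$ gives $m\le\bigl(\sqrt{(N-K)^2+4I}-(N-K)\bigr)/2$. Since $m$ is an integer, we may take the floor, and trivially $m\le K$. For the linear bound, if $m\ge1$ then $N-K+m\ge N-K+1$, so $m(N-K+1)\le I$; if $m=0$ the bound is trivial. Hence $N\ge K+I$ gives $m\le I/(I+1)<1$, so $m=0$. This is feasible only when $K+I\le|\Iset_t|$, which matches the qualifier in the statement.

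Third, the case $N=K$. Here $|S|=|U|=K$ and $|S\cap U|=K-m$, so precision and recall both equal $1-m/K$, and so does their harmonic mean. The inequality becomes $m^2\le I$, so $m\le\sqrt I$ and $1-m/K\ge1-\sqrt I/K$; the value is also nonnegative because $m\le K$. Finally, on the event of Hypothesis~\ref{hyp:latent-ranking}, which has probability at least $1-\delta$, we have $I\le\varepsilon K^2$, so $\sqrt I/K\le\sqrt\varepsilon$ and the value is at least $1-\sqrt\varepsilon$.

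The only delicate point is the first step: checking that tie-breaking by increasing index never lets an unselected useful token strictly outscore a selected non-useful one. The whole pairing bound rests on this check. Everything after it is routine algebra.
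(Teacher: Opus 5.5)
Your proposal is correct and follows the paper's proof essentially step for step: the same count of $m(N-K+m)$ inversions between missed useful tokens and selected non-useful tokens, the same quadratic and linear bounds, and the same overlap argument at $N=K$. Your explicit check that index-based tie-breaking gives $r_j\ge r_i$ for every selected $j\notin\Sel^*(C)$ and every missed useful $i$ spells out a step the paper only asserts.
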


\begin{proof}
The selected set contains $K-m$ useful tokens and $N-K+m$ other tokens.
Every selected token outside the useful set has a score at least as large as
every missed useful token. These pairs contribute $m(N-K+m)$ inversions.
Solving the resulting quadratic inequality and using that $m$ is an integer
gives the first bound. For $m\ge1$,
$m(N-K+1)\le m(N-K+m)\le I$, and the same bound holds when $m=0$.
If $N-K\ge I$, a missed token would imply
$I\ge N-K+1\ge I+1$, which is impossible. At $N=K$, both sets have
$K$ elements and share $K-m$ tokens, giving the stated precision and recall.
Substituting $I\le\varepsilon K^2$ proves the probabilistic statement.
\end{proof}

For $N>K$, recall is $1-m/K$, precision is $(K-m)/N$, and their harmonic
mean is $2(K-m)/(N+K)$. Increasing the selected-set size can therefore
compensate for ranking errors while $K(C)$ remains fixed. These guarantees
concern recovery of the hypothesized useful set. Their connection to
prediction loss also depends on the retained weights and subsequent
computation, as examined by the functional interventions.

The reference distribution is consequential. Under a uniformly random strict
ordering, $\mathbb E[I\mid K,|\Iset_t|]=K(|\Iset_t|-K)/2$, so the
$K^2$ bound in Hypothesis~\ref{hyp:latent-ranking} becomes demanding when
the useful set is small relative to the context. Its constants are specified
on $\mathcal D_0$ and need not hold uniformly as background grows.
Appendix~\ref{app:theory-rank-competition} develops the corresponding
context-dependent recovery model.

\subsection{Geometric construction and boundary effects}
\label{app:geometry-details}

Section~\ref{sec:exp-geometry} uses geometry to characterize the sets produced
by each ranking before testing their functional sufficiency. The construction
below makes explicit how these metrics depend on the selected aggregate and
on the treatment of boundary points.

Fix a ranking $r$, a distance $d$, and $1\le N<|\Iset_t|$. Write
$S=\Shat_N^{(r)}$, $s_N=\sum_{i\in S}y_i$, and $D_i=d(y_i,s_N)$.
For $\rho\ge0$, define
\[
B_d^{\le}(s_N,\rho)=\{i\in\Iset_t:D_i\le\rho\},\qquad
B_d^{<}(s_N,\rho)=\{i\in\Iset_t:D_i<\rho\}.
\]
When the closed neighborhood is nonempty, its precision and recall relative
to the selected set are
\[
P_{\rm geo}(\rho,N)=\frac{|S\cap B_d^{\le}(s_N,\rho)|}
{|B_d^{\le}(s_N,\rho)|},\qquad
R_{\rm geo}(\rho,N)=\frac{|S\cap B_d^{\le}(s_N,\rho)|}{N}.
\]
Let $\rho_{\min}=\min_{j\notin S}D_j$ and
$\rho_{\max}=\max_{i\in S}D_i$. The extremal metrics are
\[
P_N=P_{\rm geo}(\rho_{\max},N),\qquad
R_N=\frac{|S\cap B_d^{<}(s_N,\rho_{\min})|}{N},\qquad
F_N=\frac{2P_NR_N}{P_N+R_N}.
\]
The outer neighborhood contains every selected point, whereas the strict
inner neighborhood excludes every unselected point. This convention counts
an unselected boundary point as contamination and excludes a selected point
tied with the nearest unselected point from recall. Since $P_N>0$ on the
specified domain, the denominator of $F_N$ is positive. The two components
use different radii, so $F_N$ summarizes extremal separation.

\paragraph{Geometric inversions.}
Define the separation margin and distance-based inversion count by
\[
\gamma_N=\rho_{\min}-\rho_{\max},\qquad
\GInv_N=\sum_{i\in S}\sum_{j\notin S}\mathbf1\{D_j\le D_i\}.
\]
If $\mathrm{FP}_{\max}=|\{j\notin S:D_j\le\rho_{\max}\}|$ and
$\mathrm{FN}_{\min}=|\{i\in S:D_i\ge\rho_{\min}\}|$, then
\[
P_N=\frac{N}{N+\mathrm{FP}_{\max}}
\ge\frac{N}{N+\GInv_N},\qquad
R_N=1-\frac{\mathrm{FN}_{\min}}N
\ge\max\!\left\{0,1-\frac{\GInv_N}N\right\}.
\]
Each outer false positive forms an inversion with a farthest selected point.
Each inner false negative forms an inversion with a nearest unselected point,
which proves both inequalities. A positive margin is equivalent to
$\GInv_N=0$ and to $P_N=R_N=F_N=1$. These geometric inversions compare
distances against the observable selected set. The count $\Inv_r(C)$ in
Appendix~\ref{app:topk-recovery} instead compares scores against the unknown
useful set.

Changing $N$ changes the selected set, its aggregate, the distances, and the
extremal radii. Consequently, the geometric metrics need not be monotone in
$N$. Random controls use sets of the same size and recompute all quantities
around each random set's own aggregate. The nested sampling of these controls
is specified in Appendix~\ref{app:geometry-protocol}.

\paragraph{Singleton behavior.}
\label{app:singleton}
For Euclidean distance and $S=\{i\}$, $s_1=y_i$ and $D_i=0$.
If every unselected vector differs from $y_i$, then
$P_1=R_1=F_1=1$ for every selection rule, including random selection.
Duplicate vectors can prevent this strict separation. For the stabilized
cosine distance,
\[
d_{\rm C}(y,y)=\frac{\eta}{\|y\|_2^2+\eta},
\]
so the same exact singleton guarantee does not apply. If the aggregate is
zero, all stabilized cosine distances equal one and $R_N=F_N=0$.
These boundaries motivate the small-$N$ sensitivity analysis in
Appendix~\ref{app:geometry-boundary-check}.

\paragraph{Magnitude and direction.}
For positive $\alpha_i$ and nonzero $v_i$ and $s_N$, unregularized cosine
similarity satisfies
\[
\frac{\langle\alpha_i v_i,s_N\rangle}
{\|\alpha_i v_i\|_2\|s_N\|_2}
=\frac{\langle v_i,s_N\rangle}{\|v_i\|_2\|s_N\|_2}.
\]
The individual scale cancels, while the aggregate direction remains
attention-dependent. With the implemented regularizer, this cancellation
leaves $\eta/\alpha_i$ in the denominator and is only approximate when
the norm product dominates $\eta$. In Euclidean space,
$|\|y_j-s_N\|_2-\|s_N\|_2|\le\|y_j\|_2$, so small discarded
contributions lie near radius $\|s_N\|_2$. This property helps explain
the magnitude dependence of the metric, but alone does not imply separation.
Because $s_N$ is an unnormalized partial attention output, interpreting its
geometry requires the functional comparisons in Section~\ref{sec:exp-geometry}.

\subsection{Local output error under selection and renormalization}
\label{app:local-bounds}

The two rankings in Section~\ref{sec:setup} preserve different quantities
under deletion. Their local guarantees help interpret the functional results
and the normalization control in Section~\ref{sec:aggregation}. At fixed
incoming activations, write
\begin{equation}
y_i=\alpha_i v_i,
\label{eq:contribution}
\end{equation}
and, for $S=\Shat_N^{(r)}$,
\begin{equation}
e_t^{(r,N)}=z_t-\widetilde z_t^{(r,N)}
=\sum_{i\notin S}\alpha_i v_i.
\label{eq:tail-vector}
\end{equation}
This identity compares the full and selected sums formed from the same local
activations. Full-model interventions can also change the inputs to later
attention operations.

\begin{proposition}[Local deletion bounds]
\label{prop:local-bounds-app}
For any size-$N$ subset $S$, let
$M_S=\sum_{i\in S}\alpha_i$ and $e_S=\sum_{i\notin S}y_i$. Then
\[
\|e_S\|_2\le\sum_{i\notin S}\|y_i\|_2.
\]
Contribution ranking minimizes this upper bound over size-$N$ subsets.
Attention ranking maximizes $M_S$. If $\|v_i\|_2\le B$ for every visible
position, attention ranking also minimizes the upper bound
\[
\|e_S\|_2\le B(1-M_S).
\]
\end{proposition}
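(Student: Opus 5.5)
The plan is to prove the four claims in the order stated: the triangle inequality first, then two exchange arguments for the optimality statements.

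\emph{First bound.} Since $e_S=\sum_{i\notin S}y_i$, the triangle inequality immediately gives $\|e_S\|_2\le\sum_{i\notin S}\|y_i\|_2$.

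\emph{Contribution ranking minimizes the first bound.} Rewrite the bound as
\[
\sum_{i\notin S}\|y_i\|_2=\sum_{i\in\Iset_t}\|y_i\|_2-\sum_{i\in S}\|y_i\|_2 .
\]
The first sum is fixed, so minimizing the bound over size-$N$ subsets is the same as maximizing $\sum_{i\in S}\|y_i\|_2=\sum_{i\in S}r_i^{\mathrm{contr}}$. The Top-$N$ set maximizes this sum by an exchange argument. Suppose $S$ contains some $j$ and omits some $i$ with $r_i>r_j$. Swapping $j$ for $i$ does not decrease the sum, so any maximizer can be transformed into $\Shat_N^{(\mathrm{contr})}$ without loss. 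Ties do not matter here: the value of the sum is unaffected by how ties are broken, so the index rule yields a minimizer even though it need not be the unique one.

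\emph{Attention ranking maximizes $M_S$.} The same exchange argument, applied to $r^{\mathrm{attn}}_i=\alpha_i$, shows that $\Shat_N^{(\mathrm{attn})}$ maximizes $M_S=\sum_{i\in S}\alpha_i$.

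\emph{Second bound.} Assume $\|v_i\|_2\le B$ for every visible position. Then
\[
\sum_{i\notin S}\|y_i\|_2=\sum_{i\notin S}\alpha_i\|v_i\|_2\le B\sum_{i\notin S}\alpha_i=B(1-M_S),
\]
using $\alpha_i\ge0$ and $\sum_{i\in\Iset_t}\alpha_i=1$. Chaining this with the first bound gives $\|e_S\|_2\le B(1-M_S)$. The quantity $B(1-M_S)$ is decreasing in $M_S$, so the previous step shows that attention ranking minimizes it over size-$N$ subsets.

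The only point that needs care is the exchange argument when scores are tied. I will handle it by arguing about the value of the optimum rather than about a unique maximizer, which is why the proposition is phrased as "minimizes the upper bound" and makes no uniqueness claim. Everything else is routine.
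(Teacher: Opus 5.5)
Your proposal is correct and matches the paper's proof: the triangle inequality, the bound $\alpha_i\|v_i\|_2\le B\alpha_i$ together with $\sum_{i\in\Iset_t}\alpha_i=1$, and the observation that Top-$N$ selection maximizes the retained sum of the corresponding score. The only difference is that you spell out the exchange argument and the handling of ties, which the paper leaves implicit.
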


\begin{proof}
The triangle inequality gives
\[
\|e_S\|_2\le\sum_{i\notin S}\alpha_i\|v_i\|_2
\le B\sum_{i\notin S}\alpha_i=B(1-M_S).
\]
Retaining the largest $N$ contribution norms minimizes the first sum.
Retaining the largest $N$ attention weights maximizes retained mass and
minimizes the final expression.
\end{proof}

For a fixed subset, the contribution-norm bound is at least as tight as the
mass bound. Neither bound generally orders the actual errors of different
subsets, because discarded vectors can cancel. A single head's perturbation
in the residual stream is $W_O^{(h)}e_S$, and downstream loss also depends
on its direction. These local optima therefore provide motivation for the
rankings without guaranteeing optimal prediction loss.

\paragraph{Alignment of discarded contributions.}
For $\Tail=\Iset_t\setminus S$,
\[
\|e_S\|_2^2=\sum_{i\in\Tail}\|y_i\|_2^2
+2\sum_{\substack{i<j\\i,j\in\Tail}}\langle y_i,y_j\rangle.
\]
Alignment can amplify the error at fixed discarded energy, while cancellation
can reduce it. The exploratory analysis in Appendix~\ref{app:predictors}
uses the corresponding coherence statistic
\[
C_{\Tail}=\frac{\|\sum_{i\in\Tail}y_i\|_2^2}
{\sum_{i\in\Tail}\|y_i\|_2^2}
\]
when the denominator is nonzero.

\paragraph{Renormalization.}
Let $z_S=\sum_{i\in S}\alpha_i v_i$ and assume $0<M_S<1$. Define
the normalized weighted means $\mu_S=z_S/M_S$ and
$\mu_T=e_S/(1-M_S)$. The deletion and renormalization errors are
\begin{equation}
e_{\rm del}=(1-M_S)\mu_T,\qquad
e_{\rm ren}=z_t-\frac{z_S}{M_S}
=(1-M_S)(\mu_T-\mu_S).
\label{eq:renorm-centroids}
\end{equation}
For fixed weights and subset, translating every value by $b$ adds
$(1-M_S)b$ to the deletion error and leaves the renormalization error
unchanged. Renormalization reduces the local error when
$\|\mu_T-\mu_S\|_2<\|\mu_T\|_2$ and increases it when the reverse
inequality holds. If $M_S=1$, both errors vanish and a discarded-set mean
is unnecessary. Appendix~\ref{app:theory-renormalization} connects these
identities to an idealized loss-based set size.

\section{Experimental protocols}
\label{app:protocol}

The experiments in Section~\ref{sec:experiments} connect geometric structure
to loss preservation and then examine context, competition, and aggregation.
We specify the sampling, scoring, intervention, and uncertainty procedures
for each experiment. The geometric study, fixed-length language-model study,
and matched-target context-length study use distinct samples and aggregation
rules, as detailed below.

\subsection{Geometric evaluation}
\label{app:geometry-protocol}

To characterize the selected sets in Section~\ref{sec:exp-geometry}, we use
1024-position windows from individual documents and measure the final causal
query across all layers and query heads. The selected-set sizes are
$N\in\{1,2,4,8,16,32,64,128,256\}$. For each of the four models, the
geometric summaries contain 256 OpenWebText documents and 254 WikiText-103
documents. These counts differ from the 50-document functional evaluations.

Attention probabilities and value tensors are converted to FP32 before
computing $y_i=\alpha_i v_i$, distances, and norms. Stabilized cosine uses
$\eta=10^{-12}$ added to the product of norms. Measurements require
$N<|\Iset_t|$ so that both selected and unselected sets are nonempty.
The native attention implementation aligns grouped-query value states with
their corresponding query heads.

Each random control uses 16 independent random rankings. Within a repeat,
the ranking is shared across $N$, producing nested random sets. Every set
is evaluated around its own aggregate. Repeats are averaged within each
head, followed by heads and layers within a document, and then documents.
The geometric summaries use 2000 document-bootstrap replicates, with
documents as the resampling units.

\subsection{Fixed-length language modeling and threshold estimation}
\label{app:legacy-lm-protocol}

The fixed-length evaluation in Section~\ref{sec:exp-lm-width} estimates
effective attention set size across checkpoints. It uses 50 documents per
model and corpus, with exactly $L=1024$ positions per window, including
one prepended initial token. Document text is tokenized without automatic
special tokens. The initial ID is the tokenizer's beginning-of-sequence
(BOS) token when available, with the model configuration as a fallback.
A disagreement between available BOS IDs raises an error. Documents are
evaluated separately, without concatenation or sliding-window traversal.

\paragraph{Deterministic sampling.}
Documents are ordered by $H(2027,\mathrm{dataset},\mathrm{doc\_id})$.
Those with fewer than $L-1$ content tokens are skipped. For a document
with $T_x$ content tokens, the window offset is
\[
o_x=H(2027,\mathrm{model\_id},\mathrm{dataset},\mathrm{doc\_id},L)
\bmod(T_x-L+2).
\]
Here $H$ joins its arguments with \texttt{|}, encodes the result in UTF-8,
takes the first eight bytes of its SHA-256 digest in big-endian order, and
masks the result to 63 bits. The window consists of BOS followed by content
tokens $o_x,\ldots,o_x+L-2$. The first 50 eligible documents are used.
Eligibility depends on the tokenizer and offsets depend on model identity,
so cross-model comparisons need not use identical windows.

\paragraph{Loss and intervention.}
Logits at positions $0,\ldots,L-2$ predict the 1023 target tokens at
positions $1,\ldots,L-1$. Vocabulary logits are converted to FP32 for
cross-entropy. Chunk loss sums are accumulated as Python floating-point
scalars before division by the target count. The decoder processes the
entire window, and only the output-head computation is chunked. Chunk sizes
are 64 positions for the main adaptive evaluation, 32 for Mistral-Small-24B,
and 128 for the Gemma-2 and renormalization controls.

Corpus NLL is the mean document NLL, which equals token-level averaging
because all windows have the same target count. Relative degradation is
computed from the corpus means in Equation~\ref{eq:relative-nll}.
At every query, head, and layer, the selected set is recomputed from the
current activations in the intervened forward pass. Ties favor earlier
source indices, and queries with at most $N$ visible sources are unchanged.

\paragraph{Adaptive threshold search.}
The search reuses available powers-of-two evaluations and also evaluates
$N=64$. For each tolerance, let $h$ be the smallest known passing size,
using $L$ as the full-attention endpoint if necessary. Let $l<h$ be the
largest known failing size, with zero as an unevaluated sentinel when none
is available. The search evaluates $\lfloor(l+h)/2\rfloor$ and updates
the corresponding endpoint until $h-l=1$. It then evaluates the integers
from $\max(1,h-r)$ through $h-1$ in increasing order and accepts the first
pass. The radius is $r=3$ for the main and 24B evaluations and $r=2$ for
Gemma-2. Evaluations are shared across tolerances within a ranking.

These estimates resolve observed crossings locally. Since loss can be
nonmonotone in $N$, they do not certify the minimum over every integer in
Equation~\ref{eq:required-width}. Appendix~\ref{app:nll-capacity-by-dataset}
reports each corpus separately. Where a cross-corpus summary is given, it
is the maximum of the two estimates and carries no confidence-bound
interpretation. A full-$N$ intervention is checked on one configured
document per ranking. The search can otherwise use its exact zero-degradation
endpoint.

\subsection{Matched-target context-length evaluation}
\label{app:context-length-protocol}

Section~\ref{sec:context-length} isolates the effect of additional context
on a fixed set of prediction targets. This evaluation uses
$L\in\{256,512,1024,2048\}$ and 50 sufficiently long documents per model
and corpus. Within each model--corpus pair, nested suffixes end at the same
token and share the final 128 target IDs, denoted \texttt{tail128}. The
initial token counts toward $L$. The full-model baseline is evaluated
separately at each length. An auxiliary \texttt{all\_tokens} metric scores
every predicted position, changing both the target set and the distribution
of visible prefix lengths as $L$ varies. Pairing across lengths holds
within each model--corpus pair.

Both rankings use deletion without renormalization at relative NLL
tolerances of 1\%, 5\%, and 10\%. The search initializes all powers of
two through $L$, bisects a failure/pass bracket, checks $h-3$ through
$h+3$ within $[1,L]$, and checks the predecessor of the first tested pass.
After sharing evaluations across tolerances, it reports the smallest tested
passing size. As in the fixed-length evaluation, untested smaller integers
remain unresolved.

The recorded environment uses an H100 80GB GPU, bfloat16 without
quantization, PyTorch 2.6.0+cu124, Transformers 5.14.1, and seed 2027.
WikiText preparation retains 2044 of 2048 rows after removing four
duplicates with identical IDs and text. This sampling protocol differs
from Appendix~\ref{app:legacy-lm-protocol}, including at $L=1024$.

Qwen-2.5-7B, Gemma-7B, and Llama-3-8B have complete results on both
corpora at all four lengths. Mistral-7B-v0.3 has complete OpenWebText
results through $L=1024$, partial attention-ranking refinement at
$L=2048$, and no WikiText results. Thus 27 of the 32 planned
model--corpus--length conditions are complete. An aggregate uses only
selected-set sizes with all 50 document measurements. The incomplete
Mistral measurement at $N=164$ is excluded.

The recorded consistency checks cover run fingerprints, input-token hashes,
target counts, matched suffixes, and the 324 completed set-size estimates
across rankings, tolerances, and scoring protocols. All 28 saved numerical
parity checks pass, with maximum NLL discrepancy $5.76\times10^{-7}$.
These checks compare native and chunked computation on prefixes of at most
256 positions, including for runs at longer $L$. The endpoint $N=L$
reuses the full-model baseline and therefore adds no independent numerical
parity check at that length.

Uncertainty uses 5000 document-bootstrap replicates, with identical
resampling weights across lengths, rankings, and metrics within each
model--corpus pair. The resulting intervals are conditional on the
evaluated selected-set sizes and do not account for unmeasured crossings.

\subsection{Controlled QA, candidate scoring, and support measurements}
\label{app:qa-scoring}

The BABILong experiment in Section~\ref{sec:competition} examines competition
while holding the annotated supporting fact fixed. We use
\texttt{RMT-team/babilong-1k-samples}, task \texttt{qa1}, at backgrounds
0K, 1K, 2K, and 4K. Supplementary \texttt{qa2} and \texttt{qa3}
evaluations use 0K background. A NumPy permutation seeded by
$H(2027,\mathrm{task})$ selects 100 indices, shared across models and
backgrounds. The prompt is
\begin{quote}\begin{minipage}{\linewidth}\small\ttfamily\raggedright
Read the context and answer the question. Return only the short answer.\\[3pt]
Context:\\
\{input\}\\[3pt]
Question: \{question\}\\
Answer:
\end{minipage}\end{quote}
Blank lines separate the instruction, context, and question blocks. There
is no trailing space after the final colon. One resolved initial token is
prepended, and the remaining prompt uses
\texttt{add\_special\_tokens=False}.

\paragraph{Candidate-normalized answer loss.}
The candidate vocabulary is the sorted set of unique target strings in the
task split. The six labels are \texttt{bathroom}, \texttt{bedroom},
\texttt{garden}, \texttt{hallway}, \texttt{kitchen}, and \texttt{office}.
For candidate $a$, let $c_M(a)$ be the first token obtained by tokenizing
one ASCII space followed by $a$, separately from the prompt and without
special tokens. If $u_M(x)$ is the vocabulary-logit vector at the final
prompt position, the candidate score is
\[
s_M(a\mid x)=u_M(x)_{c_M(a)}.
\]
The six first-token IDs are distinct in the reported runs. The implementation
supports mean continuation-token log-probability as a fallback when IDs
coincide, but this fallback is not used here. Vocabulary logits are
extracted in FP32, and candidate log-sum-exp is evaluated in FP64.

Prediction uses the largest candidate score, with ties resolved in sorted
candidate order. Accuracy normalizes case, exterior whitespace, repeated
whitespace, and terminal punctuation. This evaluates six-way constrained
choice. Uniform candidate probabilities give accuracy $1/6$ and loss
$\log6$. Full and intervened predictions use identical prompts and
candidate IDs, and the mean degradation is
\begin{equation}
\Delta\mathcal L_{\rm ans}(N)=\frac1{|\mathcal D|}
\sum_{x\in\mathcal D}
[\ell_{\rm ans}(M_N;x)-\ell_{\rm ans}(M;x)].
\label{eq:answer-loss-degradation}
\end{equation}
The tolerances are 0.10 and 0.20 nats. The configured competence screen
requires full-model accuracy at least $1/6+0.05$ and loss at most
$\log6-0.05$. Baseline performance is reported explicitly because this
screen alone does not establish reliable task solving.

The initial selected-set sizes are $1,2,4,8,16,32,64,128,256$, with
adaptive extensions for selected conditions. If $N$ is at least the prompt
length, the intervention is an exact no-op. Such cases are included with
zero degradation in the complete 100-example cohort, including rows
omitted by the initial evaluator. Unresolved crossings indicate that no
tested size passes the criterion. They do not certify failure at every
untested smaller integer. Accuracy intervals use the Wilson method, and
background differences use 5000 paired bootstrap replicates over examples.

\paragraph{Support mapping.}
Support sentences are obtained from the source bAbI English and English-10k
\texttt{qa1} test annotations. Questions and targets are lowercased and
whitespace-normalized for matching. A candidate annotation is eligible
when every normalized supporting sentence occurs in the normalized
BABILong input. Identical supporting-fact tuples are deduplicated, and an
example is accepted only when exactly one distinct tuple remains. This
step identifies 90 unambiguous examples and 10 ambiguous examples per
background, with no unmatched cases.

Each accepted original support sentence must then occur exactly once,
with matching case, in the formatted prompt. Fast-tokenizer offsets
$[a_j,b_j)$ map a character span $[a,b)$ to tokens satisfying
$b_j>a$, $a_j<b$, and $b_j>a_j$. Indices are shifted by one for the
initial token and deduplicated. Examples require nonempty visible support
and non-support context. The resulting validated cohort contains 86
examples per background. The four additional exclusions are not assigned
individual reasons in the recorded output, so the combined mapping,
occurrence, and span checks define validation. Paired comparisons use the
recorded example identities and support spans.

\paragraph{Support statistics and aggregation.}
For example $x$, let $S_x$ be the support-token positions and $T_x$ the
positions overlapping the input-context field. Define
$D_x=T_x\setminus S_x$, $k_x=|S_x|$, and $d_x=|D_x|$. At the final
prompt query, attention scores $r_{ai}=\alpha_{ai}$ in layer/head
$a=(\ell,h)$ give
\begin{align}
\operatorname{Disp}_a(x)
&=\frac1{k_x}\sum_{i\in S_x}\sum_{j\in D_x}
\mathbf1\{r_{aj}\ge r_{ai}\}, &
\operatorname{Mass}_a(x)&=\sum_{i\in S_x}\alpha_{ai},
\label{eq:support-displacement-mass}\\
\operatorname{Rec64}_a(x)
&=\frac{|S_x\cap\operatorname{Top64}(r_a;\Iset_t)|}{k_x}, &
\widehat p_a(x)&=\frac{\operatorname{Disp}_a(x)}{d_x}.
\label{eq:support-recall-outrank}
\end{align}
Displacement counts non-support tokens in the context field. Top-64
selection ranges over all visible prompt positions, including the initial
token, instructions, and question. Recall is the fraction of support tokens
retained, and mass is their total attention weight. Displacement uses weak
score comparisons, while Top-64 follows the specified tie-breaking rule.

All valid layer/query-head measurements receive equal weight within each
example, after which examples receive equal weight:
\[
\bar q(x)=\frac1{|A_x|}\sum_{a\in A_x}q_a(x),\qquad
\bar q_{m,b}=\frac1{|\mathcal E_{m,b}|}
\sum_{x\in\mathcal E_{m,b}}\bar q(x).
\]
Here $\mathcal E_{m,b}$ is the validated cohort for model $m$ and
background $b$. Cross-model displacement and mass ratios average
$\bar q_{m,4K}/\bar q_{m,0K}$. Recall changes average the corresponding
differences in percentage points. Bootstrap sampling is performed over
examples after the within-example reduction. Descriptive log--log fits
use actual non-support counts, including the approximately 32--34 such
tokens already present at 0K.

\subsection{Implementation and numerical validation}
\label{app:numerical-implementation}

Numerical checks ensure that measured degradation reflects the intervention
while retaining each checkpoint's native attention computation. Selection
operates on eager attention probabilities and aligned value states after
the model's positional encoding, masking, scaling, and any attention-logit
softcapping. Equation~\ref{eq:index-set} supplies the common notation.
The output projection and subsequent model operations retain their native
implementations, and no parameters are trained.

The language-model evaluator runs the decoder once and chunks the output
head. For Gemma-2, it applies the model's final-logit tanh softcapping
before cross-entropy. The recorded comparison of native eager computation,
chunked NLL, and a full-$N$ intervention gives maximum chunked/eager NLL
differences of $6.19\times10^{-7}$ on OpenWebText and
$5.18\times10^{-7}$ on WikiText-103. The full-$N$ hook has zero discrepancy
in this check. The sequence-length scope of the separate matched-target
checks is specified in Appendix~\ref{app:context-length-protocol}.

\paragraph{Mistral-Small-24B precision.}
This checkpoint uses a single A6000 48GB GPU and bitsandbytes 8-bit
weight loading, with outlier threshold 6.0 and FP32 CPU offload disabled.
The requested dtype for non-quantized computation is bfloat16. This
configuration uses mixed-precision computation, including possible internal
FP16 casts. NLL logits and selection diagnostics are converted to FP32 as
specified by their routines. The recorded configuration does not pin an
immutable bitsandbytes build or checkpoint revision, which limits exact
reproduction of the historical binary environment. Comparisons with this
checkpoint therefore combine differences in model architecture, size,
and numerical precision.

\section{Additional geometry and language-model results}
\label{app:additional-width-results}

Sections~\ref{sec:exp-geometry} and \ref{sec:context-length} show that
score-based selection produces structured sets whose functional sufficiency
depends on the model and available context. This section extends those
comparisons across corpora and rankings, examines sensitivity to geometric
boundaries, and reports the complete set-size estimates and uncertainty.

\subsection{Geometric separation across corpora and rankings}
\label{app:geometry-extra}

The main text reports Euclidean geometry on OpenWebText under attention
ranking. Figures~\ref{fig:geometry-owt-attention}--\ref{fig:geometry-wt-contribution}
add cosine geometry, contribution ranking, and WikiText-103. Across these
comparisons, score-selected sets have stronger separation than random sets
of the same size. The advantage is larger in Euclidean geometry, consistent
with the magnitude dependence described in Appendix~\ref{app:geometry-details}.
The cosine results provide a complementary comparison of direction.

\begin{figure}[!htbp]
\centering
\includegraphics[width=0.85\linewidth]
{figures/geometry_owt_euclidean_attention.pdf}

\vspace{-4pt}
{\small (a) Euclidean geometry}

\vspace{6pt}
\includegraphics[width=0.85\linewidth]
{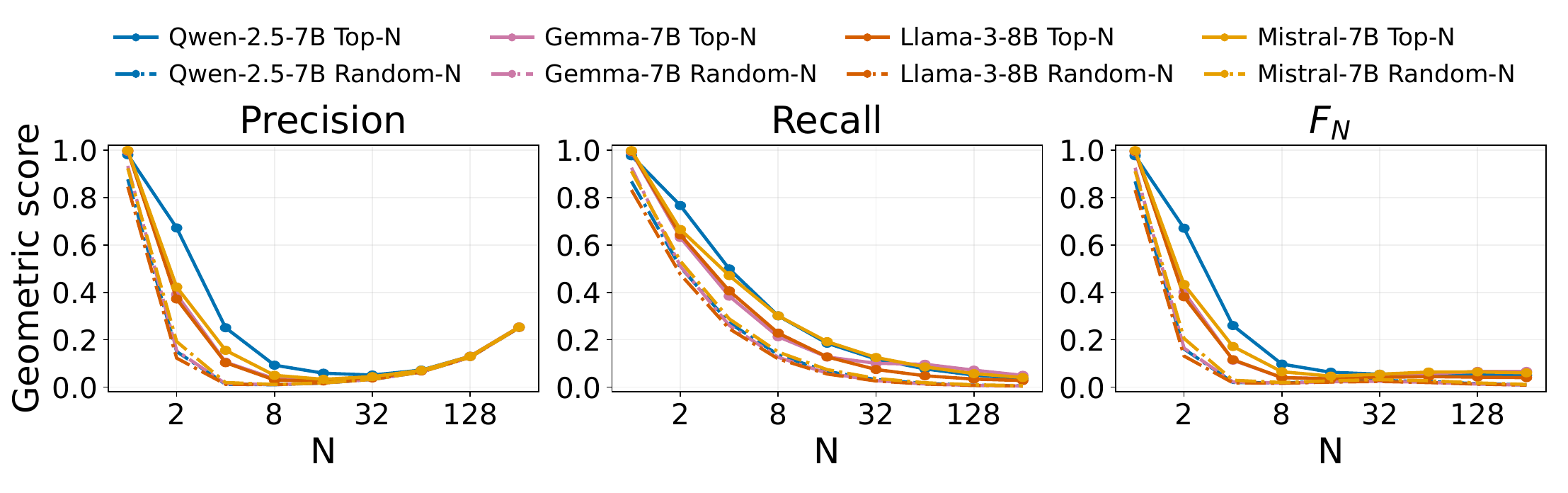}

\vspace{-4pt}
{\small (b) Cosine geometry}

\caption{
\textbf{Attention-selected sets show stronger separation than random sets on OpenWebText.}
Geometric precision, recall, and extremal $F_N$ under Euclidean and cosine
distance. Solid curves show attention ranking and dash-dotted curves show
random sets of the same size, each evaluated around its own aggregate.
Measurements use the final query of 1024-position windows and are averaged
over heads, layers, and documents. The Euclidean advantage is larger.
}
\label{fig:geometry-owt-attention}
\end{figure}
\begin{figure*}[!htbp]
\centering

\includegraphics[width=0.85\textwidth]
{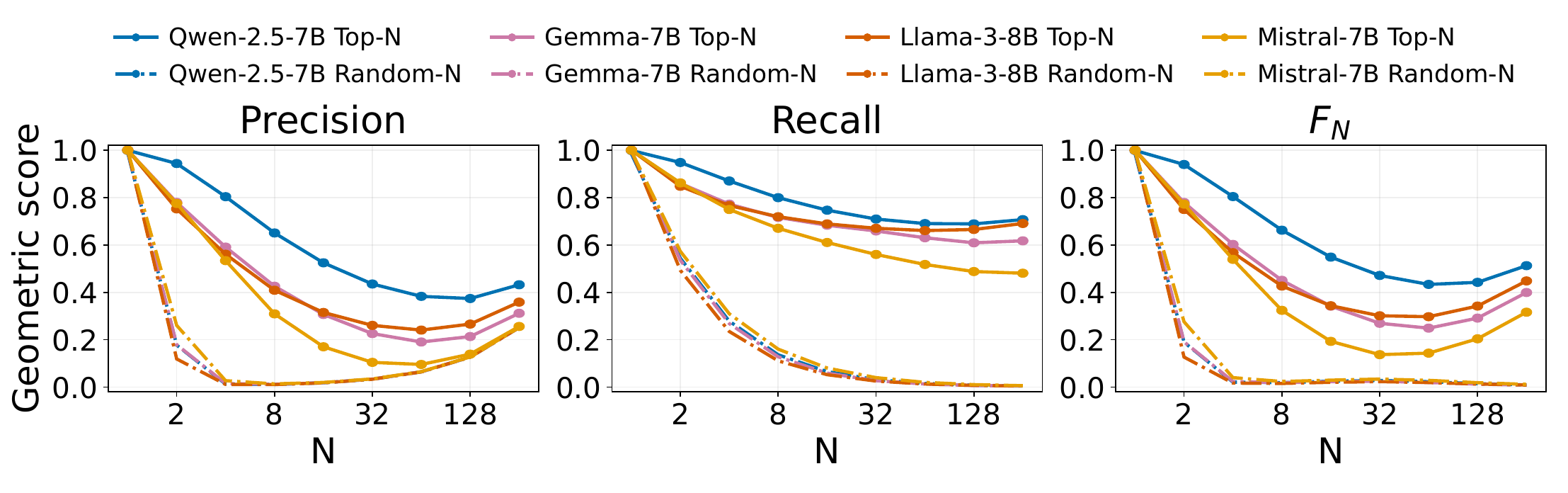}

{\small (a) OpenWebText, Euclidean, contribution ranking}

\vspace{6pt}

\includegraphics[width=0.85\textwidth]
{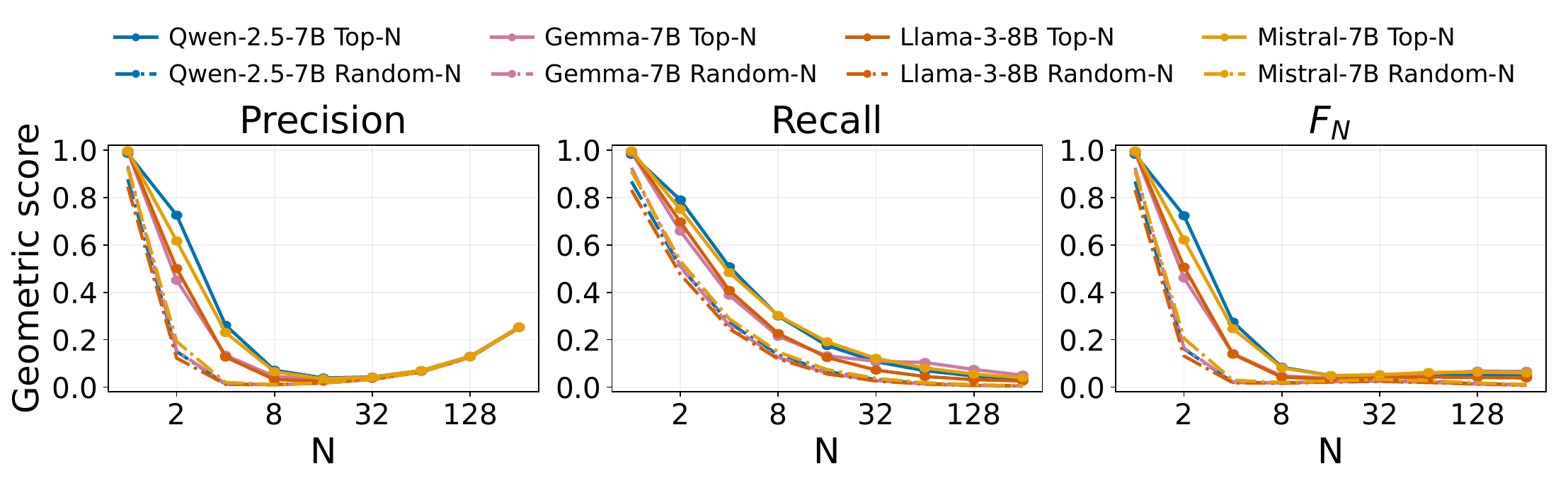}

{\small (b) OpenWebText, cosine, contribution ranking}

\caption{
\textbf{Contribution ranking reproduces the geometric pattern on OpenWebText.}
Precision, recall, and extremal $F_N$ under Euclidean and cosine distance.
Solid curves show contribution ranking and dash-dotted curves show random
selection. The evaluation and aggregation follow
Figure~\ref{fig:geometry-owt-attention}.
}
\label{fig:geometry-owt-contribution}
\end{figure*}
\begin{figure*}[!htbp]
\centering

\includegraphics[width=0.85\textwidth]
{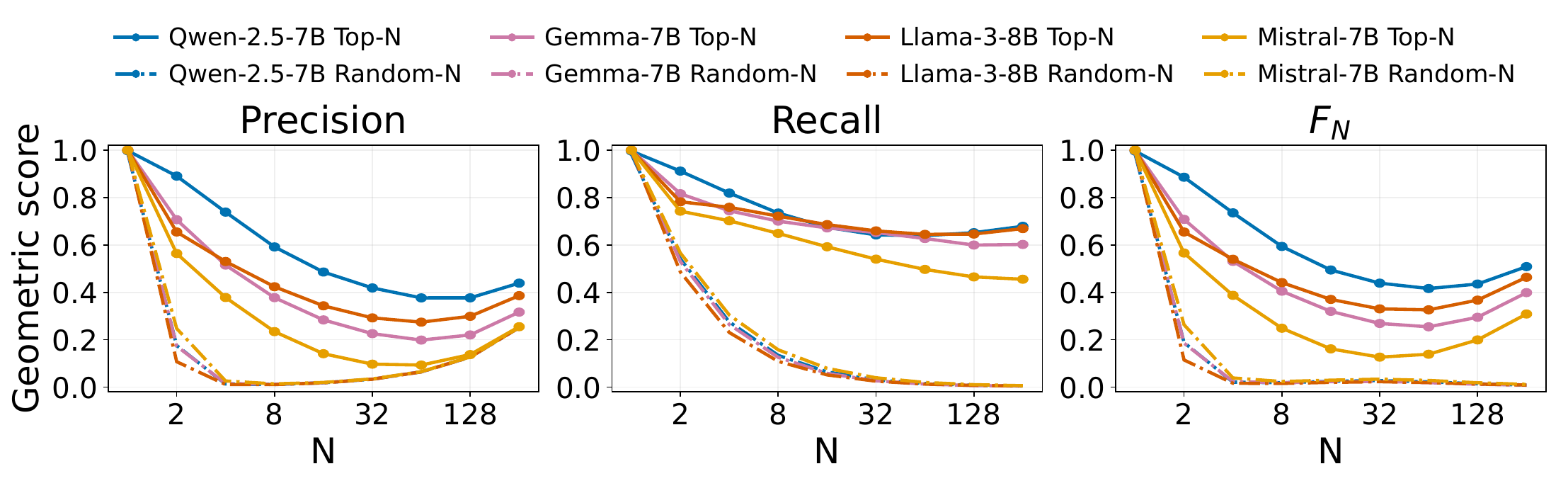}

{\small (a) WikiText-103, Euclidean, attention ranking}

\vspace{6pt}

\includegraphics[width=0.85\textwidth]
{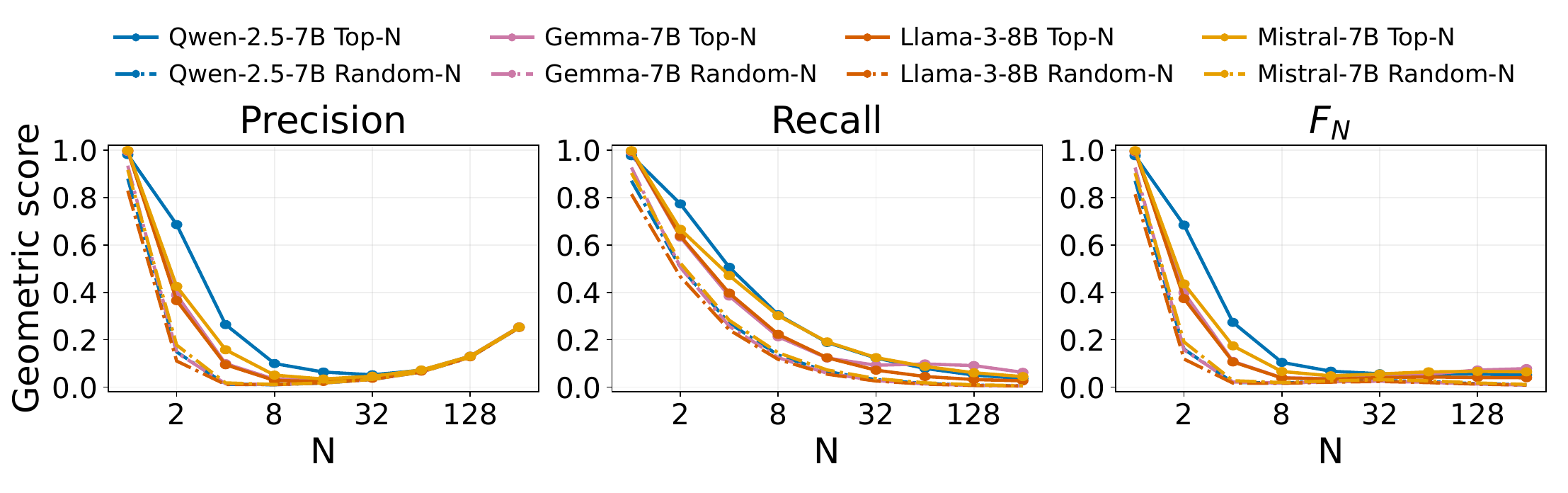}

{\small (b) WikiText-103, cosine, attention ranking}

\caption{
\textbf{Attention-selected geometry is consistent across corpora.}
WikiText-103 precision, recall, and extremal $F_N$ under Euclidean and
cosine distance. Solid curves show attention ranking and dash-dotted curves
show random selection. The evaluation and aggregation follow
Figure~\ref{fig:geometry-owt-attention}.
}
\label{fig:geometry-wt-attention}
\end{figure*}
\begin{figure*}[!htbp]
\centering

\includegraphics[width=0.85\textwidth]
{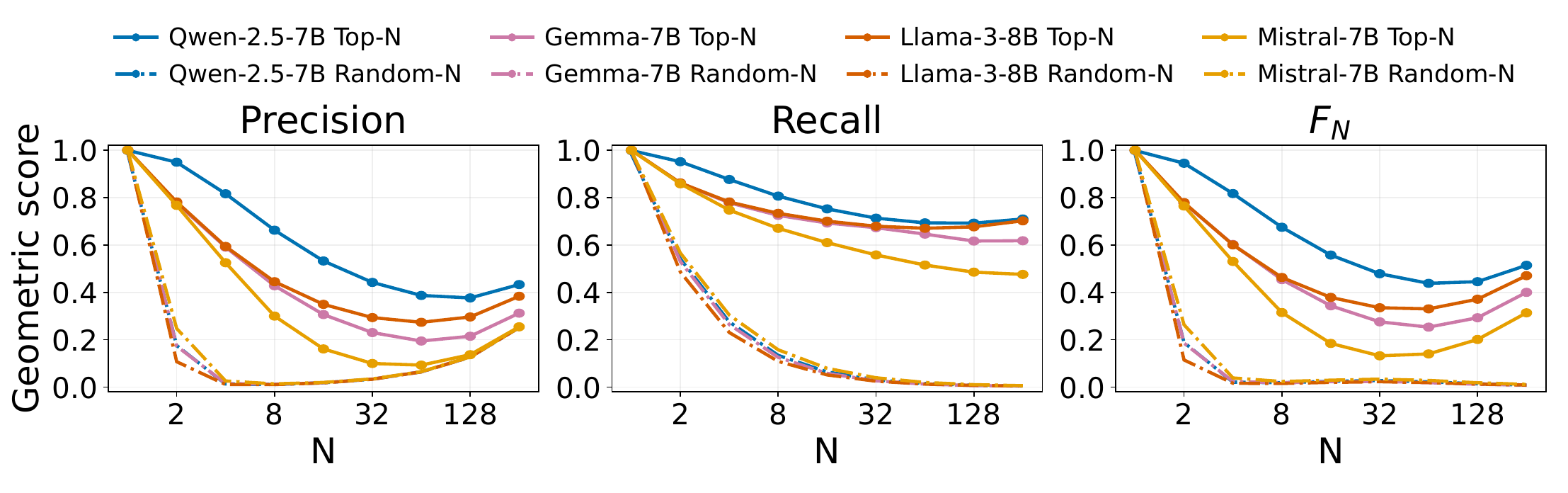}

{\small (a) WikiText-103, Euclidean, contribution ranking}

\vspace{6pt}

\includegraphics[width=0.85\textwidth]
{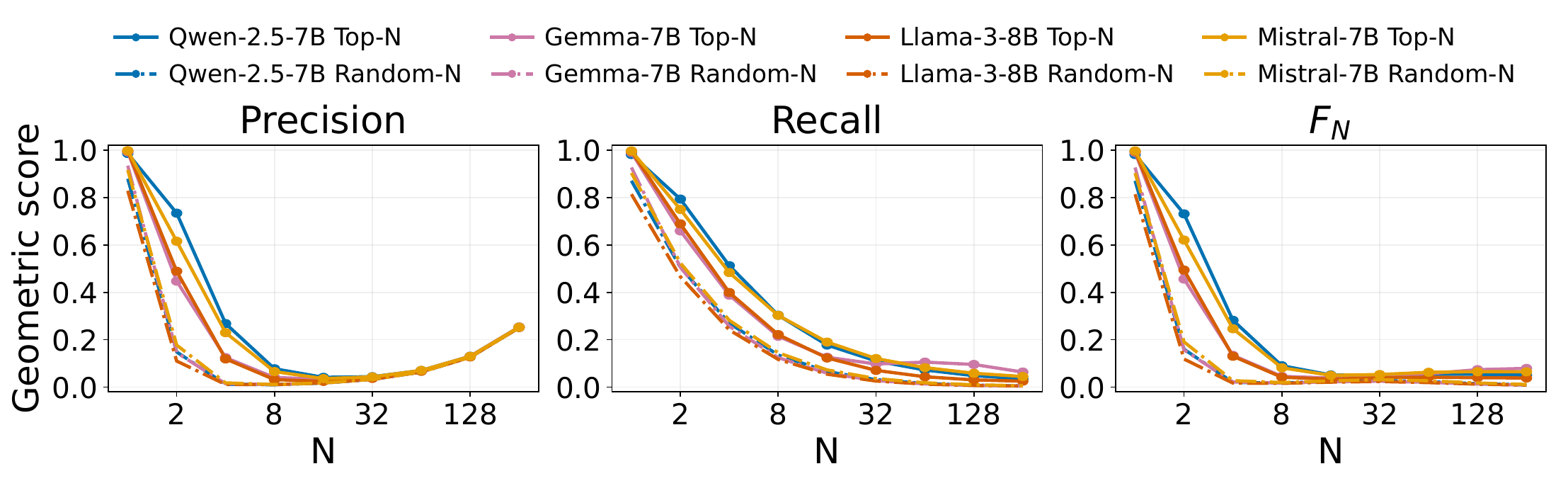}

{\small (b) WikiText-103, cosine, contribution ranking}

\caption{
\textbf{Contribution-selected sets also exhibit geometric separation on WikiText-103.}
Precision, recall, and extremal $F_N$ under Euclidean and cosine distance.
Solid curves show contribution ranking and dash-dotted curves show random
selection. The evaluation and aggregation follow
Figure~\ref{fig:geometry-owt-attention}.
}
\label{fig:geometry-wt-contribution}
\end{figure*}

\subsection{Geometry and functional degradation}
\label{app:geometry-function-scatter}

Figure~\ref{fig:geometry-nll-scatter-contribution} extends the main
geometry--loss comparison to contribution ranking. Each point pairs
final-query geometric separation with full-model NLL degradation at the
same selected-set size, averaging the two corpora. Both measurements vary
with $N$, and geometry and loss are evaluated over different query
populations. These trajectories therefore describe their joint variation
without establishing that geometric separation predicts functional
sufficiency for an individual attention operation.

\begin{figure*}[!htbp]
\centering

\includegraphics[width=0.85\textwidth]
{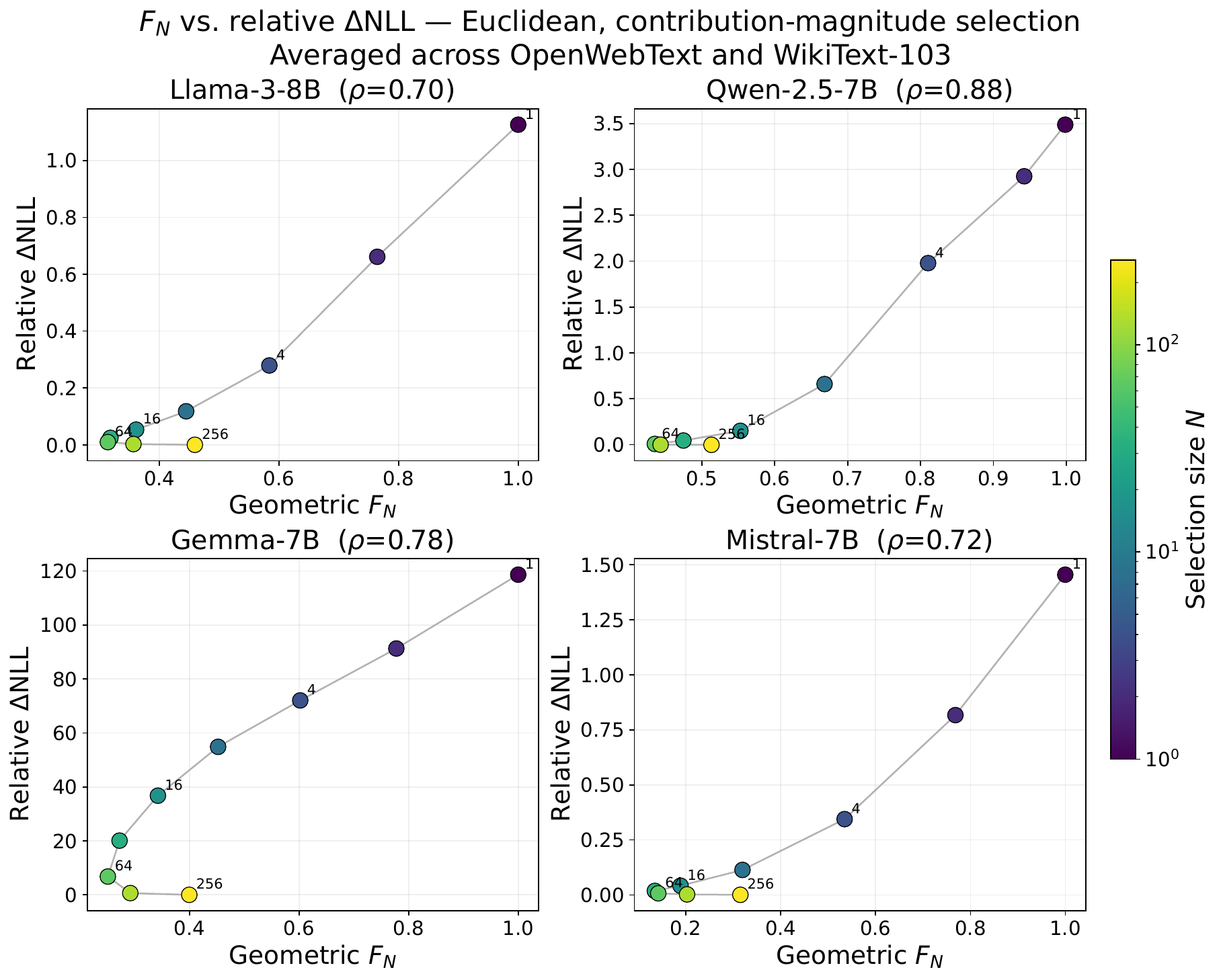}

{\small (a) Euclidean geometry}

\vspace{6pt}

\includegraphics[width=0.85\textwidth]
{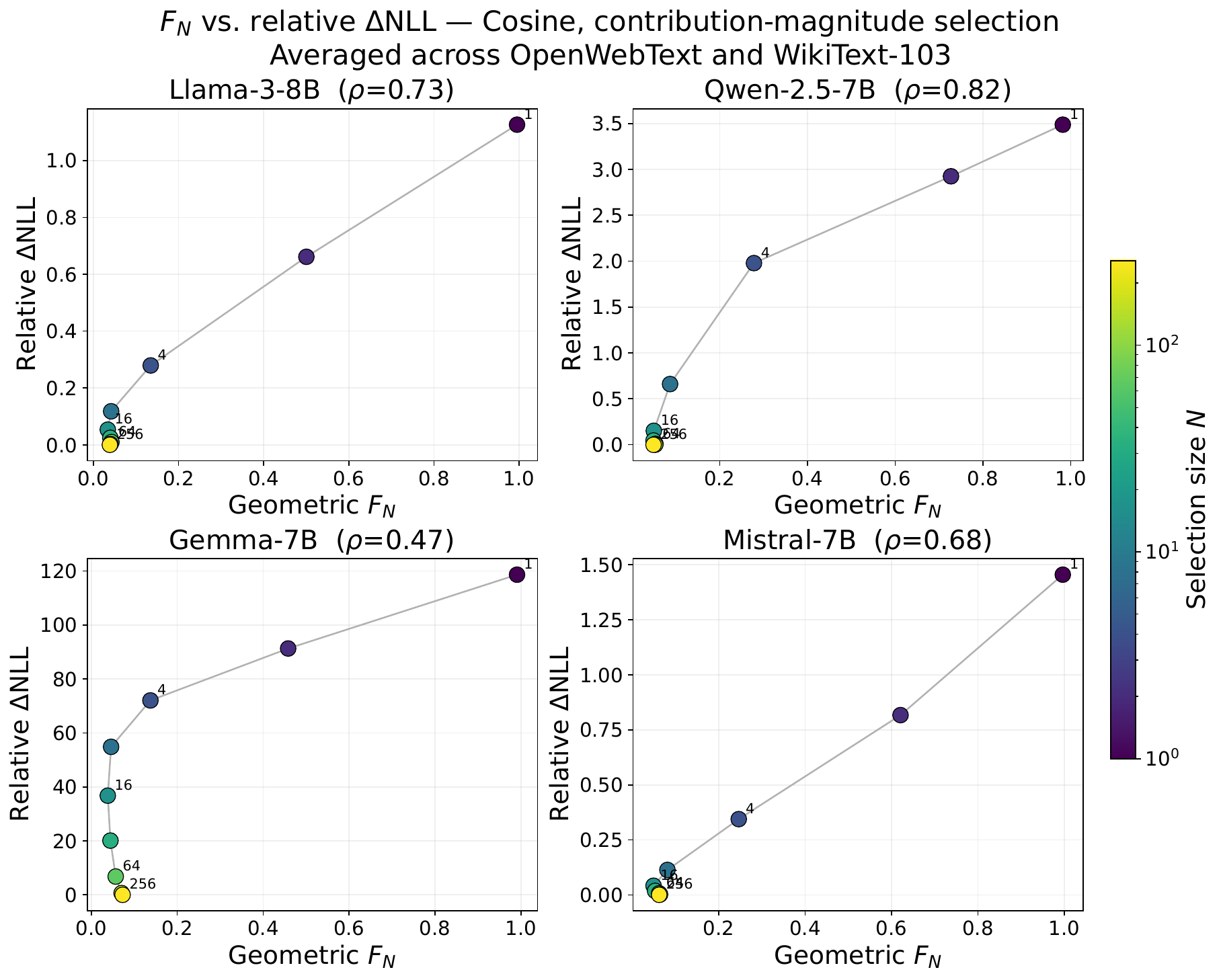}

{\small (b) Cosine geometry}

\caption{
\textbf{Contribution ranking gives similar geometry--loss trajectories.}
Each point corresponds to a selected-set size $N$, with geometric $F_N$
and relative NLL degradation averaged across OpenWebText and WikiText-103.
Color denotes $N$, lines connect successive sizes, and $\rho$ is the
Spearman correlation along each trajectory. Geometry is measured at the
final query, while loss uses interventions throughout the model.
}
\label{fig:geometry-nll-scatter-contribution}
\end{figure*}

\paragraph{Sensitivity to singleton and small-set boundaries.}
\label{app:geometry-boundary-check}
The Euclidean singleton identity in Appendix~\ref{app:singleton} motivates
checking whether small selected sets dominate the correlations in
Figure~\ref{fig:geometry-nll-scatter}. Table~\ref{tab:geo-boundary-correlations}
reports Spearman correlations after excluding these sizes. Geometric and
NLL summaries are matched by model, corpus, ranking, and $N$, then averaged
across corpora as in the figure. A positive correlation means that stronger
separation accompanies greater degradation.

Removing $N=1$ weakens most associations, and removing $N\le4$ can change
their sign. The correlations use 9, 8, or 6 selected-set sizes from the
same trajectories. This sensitivity supports the main text's qualified
interpretation of geometry as a description of selected sets, with
functional sufficiency established through the loss measurements.

\begin{table}[!htbp]
\centering\small
\caption{
Attention-ranking Spearman correlations between geometric $F_N$ and
relative NLL degradation after excluding small selected sets. The columns
use 9, 8, and 6 measured powers-of-two sizes, respectively.
}
\label{tab:geo-boundary-correlations}
\begin{tabular}{llrrr}
\toprule Model & Geometry & $N\ge1$ & $N\ge2$ & $N\ge8$\\\midrule
Qwen-2.5-7B & Euclidean & 0.817 & 0.738 & 0.371 \\
Qwen-2.5-7B & Cosine & 0.983 & 0.976 & 0.943 \\
Gemma-7B & Euclidean & 0.783 & 0.690 & 0.257 \\
Gemma-7B & Cosine & 0.433 & 0.190 & -0.943 \\
Llama-3-8B & Euclidean & 0.700 & 0.571 & -0.029 \\
Llama-3-8B & Cosine & 0.550 & 0.357 & -0.543 \\
Mistral-7B & Euclidean & 0.633 & 0.476 & -0.257 \\
Mistral-7B & Cosine & 0.717 & 0.595 & 0.029 \\
\bottomrule\end{tabular}
\end{table}

\subsection{Effective attention set size by corpus}
\label{app:nll-capacity-by-dataset}

Table~\ref{tab:effective-n-by-dataset} provides the corpus-specific
estimates underlying Section~\ref{sec:exp-lm-width}. It uses the
fixed-length protocol in Appendix~\ref{app:legacy-lm-protocol}, with
deletion at every query, head, and layer. Gemma-2-9B was evaluated only
at 5\% and 10\% tolerances, so its 1\% entries are unmeasured.

\begin{table*}[!htbp]
\caption{
Effective attention set-size estimates by corpus for 1024-position windows.
Deletion is applied at every query, head, and layer. Entries are locally
refined crossings at the stated relative NLL tolerance. Gemma-2-9B was
evaluated only at 5\% and 10\%, so its 1\% entries are unmeasured.
}
\label{tab:effective-n-by-dataset}
\centering
\small
\setlength{\tabcolsep}{4.5pt}
\begin{tabular}{llccc|ccc}
\toprule
& &
\multicolumn{3}{c}{OpenWebText} &
\multicolumn{3}{c}{WikiText-103} \\
\cmidrule(lr){3-5}
\cmidrule(lr){6-8}
Model & Ranking &
$N^*_{1\%}$ & $N^*_{5\%}$ & $N^*_{10\%}$ &
$N^*_{1\%}$ & $N^*_{5\%}$ & $N^*_{10\%}$ \\
\midrule

Qwen-2.5-1.5B & attention weight
& 99 & 39 & 24 & 108 & 42 & 26 \\
Qwen-2.5-1.5B & contribution magnitude
& 104 & 41 & 25 & 112 & 43 & 27 \\

Qwen-2.5-7B & attention weight
& 61 & 33 & 21 & 63 & 31 & 21 \\
Qwen-2.5-7B & contribution magnitude
& 59 & 31 & 21 & 63 & 30 & 20 \\

Gemma-7B & attention weight
& 283 & 226 & 206 & 253 & 195 & 173 \\
Gemma-7B & contribution magnitude
& 281 & 227 & 205 & 253 & 195 & 173 \\

Gemma-2-9B & attention weight
& -- & 12 & 7 & -- & 12 & 7 \\
Gemma-2-9B & contribution magnitude
& -- & 11 & 6 & -- & 11 & 7 \\

Llama-2-7B & attention weight
& 191 & 85 & 59 & 234 & 101 & 65 \\
Llama-2-7B & contribution magnitude
& 121 & 60 & 46 & 140 & 65 & 47 \\

Llama-3.2-1B & attention weight
& 61 & 18 & 10 & 65 & 19 & 10 \\
Llama-3.2-1B & contribution magnitude
& 60 & 17 & 9 & 64 & 19 & 10 \\

Llama-3-8B & attention weight
& 60 & 16 & 10 & 71 & 20 & 11 \\
Llama-3-8B & contribution magnitude
& 59 & 16 & 9 & 72 & 20 & 10 \\

Mistral-7B & attention weight
& 51 & 14 & 10 & 51 & 16 & 10 \\
Mistral-7B & contribution magnitude
& 51 & 14 & 9 & 51 & 15 & 9 \\

Mistral-Small-24B & attention weight
& 105 & 32 & 16 & 119 & 40 & 22 \\
Mistral-Small-24B & contribution magnitude
& 107 & 31 & 16 & 120 & 39 & 22 \\

\bottomrule
\end{tabular}
\end{table*}

The main cross-model differences occur on both corpora. Gemma-7B requires
the largest selected sets, while Mistral-7B and the Llama-3 checkpoints
reach the same tolerances with substantially smaller sets. Contribution
ranking gives its clearest reduction for Llama-2-7B. At 5\%, the estimate
decreases from 85 to 60 on OpenWebText and from 101 to 65 on WikiText-103.
The estimates describe a common set-size limit throughout the model.
Appendix~\ref{app:localization-extra} examines the variation in sensitivity
across individual layers and heads.

\subsection{Context-length dependence and scoring protocol}
\label{app:context-length-results}

The matched-target evaluation in Section~\ref{sec:context-length} asks how
much attention must be retained when additional context is available for
the same predictions. Figure~\ref{fig:context-width-app} and
Table~\ref{tab:context-width-tail128} report both corpora for the three
models with complete results. Across these six model--corpus pairs,
increasing $L$ eightfold increases the 5\% attention-ranked set-size
estimate by $4.12$--$6.63\times$. Tables~\ref{tab:context-tail128-all}
and \ref{tab:context-all_tokens-all} provide both rankings and all
tolerances under the two scoring protocols.

\begin{figure}[!htbp]
\centering
\begin{minipage}{0.495\linewidth}
\centering
\includegraphics[width=\linewidth]
{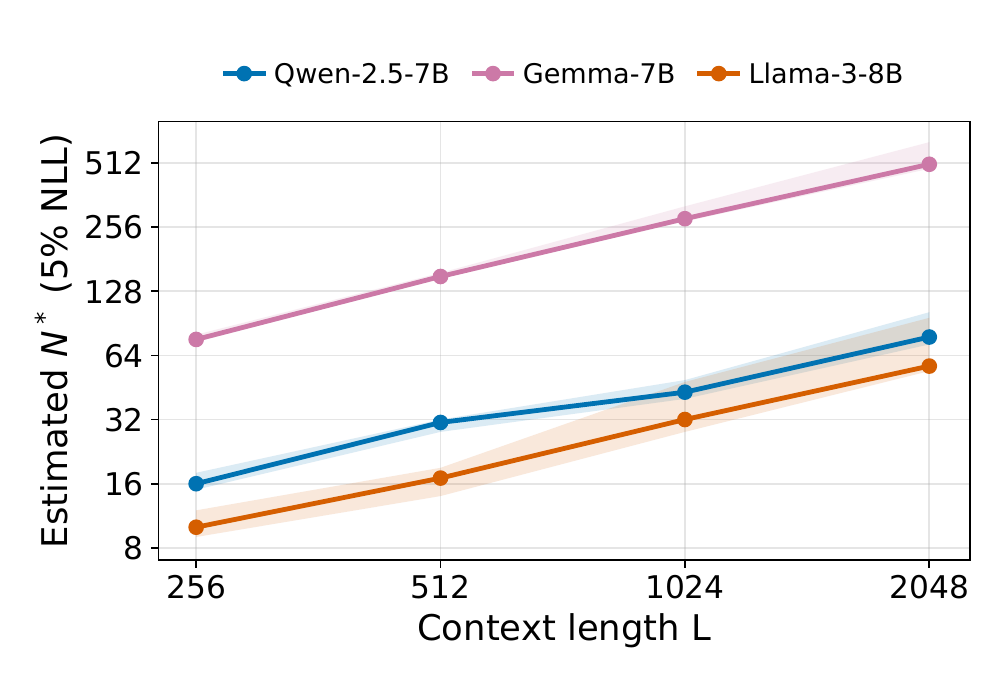}\\[-2pt]
{\small (a) OpenWebText}
\end{minipage}\hfill
\begin{minipage}{0.495\linewidth}
\centering
\includegraphics[width=\linewidth]
{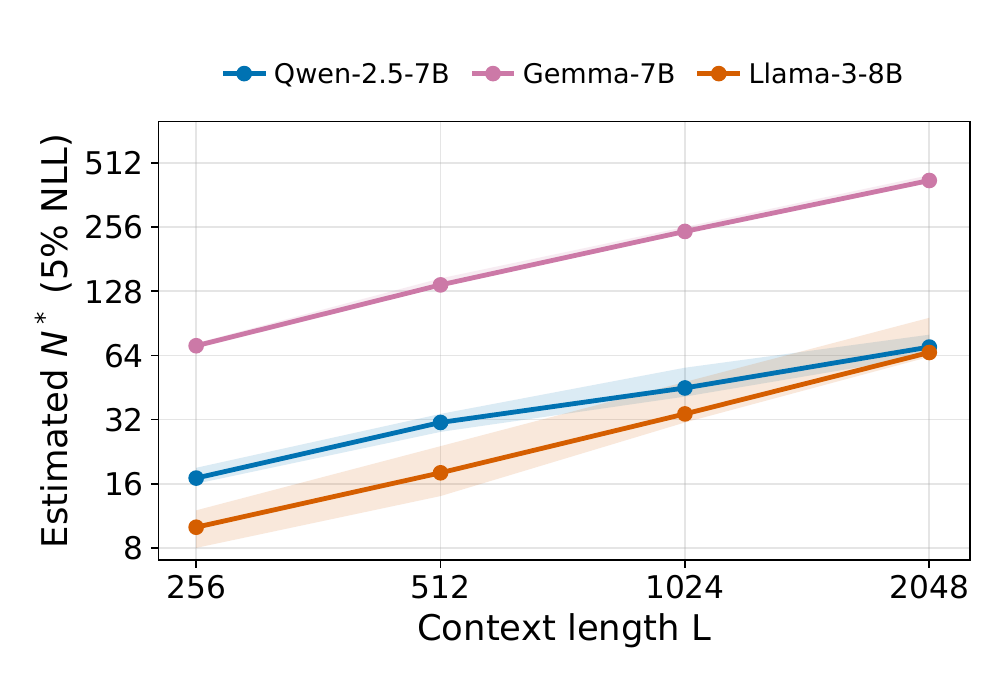}\\[-2pt]
{\small (b) WikiText-103}
\end{minipage}
\caption{
\textbf{Matched-target set-size estimates increase on both corpora.}
Effective attention set size at a 5\% relative NLL tolerance, using
50 documents per model and corpus. The same final 128 target tokens are
scored at every length within each model--corpus pair. Bands are 95\%
paired-document bootstrap intervals conditional on the evaluated sizes.
Both axes use logarithmic spacing.
}
\label{fig:context-width-app}
\end{figure}
\begin{table}[!htbp]
\centering\small
\caption{
Attention-ranked effective set-size estimates at a 5\% relative NLL
tolerance under matched final-128-target scoring. Column headings give
context length $L$. Only complete model--corpus pairs are included.
OWT denotes OpenWebText and WikiText denotes WikiText-103.
}
\label{tab:context-width-tail128}
\begin{tabular}{llrrrr}
\toprule
Model & Corpus & 256 & 512 & 1024 & 2048 \\
\midrule
Qwen-2.5-7B & OWT & 16 & 31 & 43 & 78 \\
Qwen-2.5-7B & WikiText & 17 & 31 & 45 & 70 \\
Gemma-7B & OWT & 76 & 150 & 280 & 504 \\
Gemma-7B & WikiText & 71 & 137 & 244 & 423 \\
Llama-3-8B & OWT & 10 & 17 & 32 & 57 \\
Llama-3-8B & WikiText & 10 & 18 & 34 & 66 \\
\bottomrule
\end{tabular}
\end{table}

\paragraph{Selected fraction and finite-range growth.}
Figure~\ref{fig:context-fraction} shows that $N^*/L$ decreases at the
5\% tolerance over the measured range. This behavior is compatible with
several growth patterns. For example, Llama-3-8B on WikiText-103 has
estimates $(10,18,34,66)$, which satisfy $N^*(L)=2+L/32$ exactly at the
four tested lengths. This affine fit also gives a decreasing fraction.

Descriptive log--log exponents at 5\% are $0.733/0.666$ for Qwen,
$0.909/0.856$ for Gemma, and $0.845/0.908$ for Llama, with OpenWebText
and WikiText-103 reported in that order. Bootstrap intervals conditional
on the evaluated sizes include one for Gemma/OpenWebText and for Llama
on both corpora. At 1\%, Llama/WikiText estimates $(24,57,121,241)$
give an exponent of approximately 1.107. The four measured lengths
therefore support context dependence over this range without establishing
a universal sublinear or asymptotic law.

\begin{figure}[!htbp]\centering
\begin{minipage}{0.49\linewidth}\centering
\includegraphics[width=\linewidth]{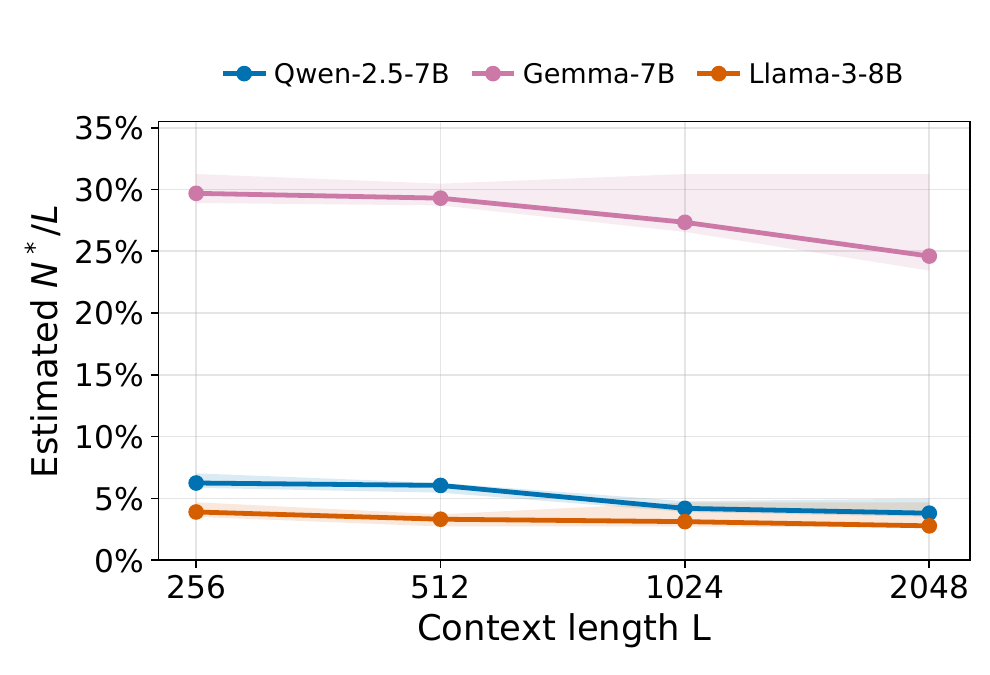}\\
{\small (a) OpenWebText}
\end{minipage}\hfill
\begin{minipage}{0.49\linewidth}\centering
\includegraphics[width=\linewidth]{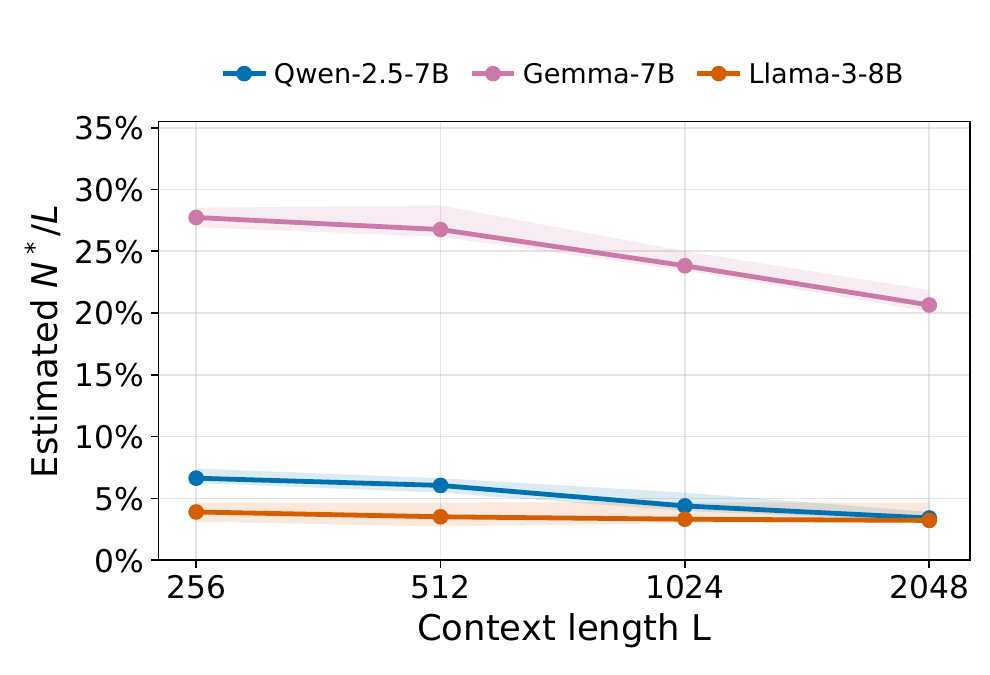}\\
{\small (b) WikiText-103}
\end{minipage}
\caption{
\textbf{The retained fraction decreases over the measured range.}
Attention-ranked $N^*/L$ at a 5\% relative NLL tolerance under matched
final-target scoring. Bands have the same conditional bootstrap
interpretation as in Figure~\ref{fig:context-width-app}. A decreasing
fraction is also compatible with affine growth having a positive intercept.
}
\label{fig:context-fraction}
\end{figure}

\paragraph{Uncertainty and local nonmonotonicity.}
At $L=2048$, the 5\% attention estimates and 95\% intervals on
OpenWebText are $78\,[72,102]$ for Qwen, $504\,[480,640]$ for Gemma,
and $57\,[54,96]$ for Llama. WikiText-103 gives $70\,[64,80]$,
$423\,[412,448]$, and $66\,[63,96]$, respectively. Gaps in the evaluated
sizes contribute to coarse upper endpoints.

Local nonmonotonicity also affects threshold interpretation. The smallest
tested passing size can differ from the smallest tested size for which
every larger tested size passes. These values are 45 and 47 for
Qwen/WikiText at $L=1024$, and 423 and 425 for Gemma/WikiText at
$L=2048$. Neither estimate resolves every unmeasured integer.

\paragraph{All-token scoring and contribution ranking.}
At $L=2048$, 5\% attention estimates for final-128-target and all-token
scoring are $78/51$, $504/394$, and $57/30$ on OpenWebText, and
$70/48$, $423/332$, and $66/36$ on WikiText-103, for Qwen, Gemma,
and Llama. All-token scoring includes earlier queries with shorter
prefixes, some of which retain every visible source under a given $N$.
It also changes the targets being averaged. Figure~\ref{fig:context-metric-comparison}
shows this protocol dependence, and Appendix~\ref{app:theory-target-positions}
gives a conditional explanation.

Contribution ranking preserves the context-length trend. Its OpenWebText
5\% estimates at $L=2048$ are 74, 503, and 56 for the three models.
These small differences from attention ranking do not establish a
consistent ranking advantage.

\begin{table}[!htbp]
\centering\small
\caption{
Matched final-128-target set-size estimates for the three complete models.
Each cell gives $N^*_{1\%}/N^*_{5\%}/N^*_{10\%}$, conditional on evaluated
sizes. Column headings give $L$. OWT denotes OpenWebText and WT denotes
WikiText-103. Attn. and Contr. denote attention and contribution ranking.
The sampling protocol differs from Table~\ref{tab:effective-n-by-dataset}.
}
\label{tab:context-tail128-all}
\setlength{\tabcolsep}{4pt}
\begin{tabular}{lllrrrr}
\toprule
Model & Corpus & Rank & 256 & 512 & 1024 & 2048 \\
\midrule
Qwen-2.5-7B & OWT & Attn. & 27/16/12 & 46/31/19 & 73/43/29 & 164/78/50 \\
Qwen-2.5-7B & OWT & Contr. & 26/16/11 & 47/28/19 & 71/42/29 & 158/74/49 \\
Qwen-2.5-7B & WT & Attn. & 28/17/13 & 59/31/22 & 88/45/31 & 137/70/49 \\
Qwen-2.5-7B & WT & Contr. & 27/16/12 & 57/31/21 & 84/45/30 & 141/69/49 \\
Gemma-7B & OWT & Attn. & 106/76/68 & 192/150/136 & 340/280/256 & 646/504/463 \\
Gemma-7B & OWT & Contr. & 105/76/68 & 190/150/136 & 346/279/256 & 635/503/461 \\
Gemma-7B & WT & Attn. & 95/71/63 & 180/137/120 & 390/244/222 & 641/423/387 \\
Gemma-7B & WT & Contr. & 94/71/64 & 183/137/120 & 383/245/221 & 650/423/388 \\
Llama-3-8B & OWT & Attn. & 30/10/6 & 56/17/10 & 107/32/17 & 188/57/30 \\
Llama-3-8B & OWT & Contr. & 29/10/6 & 55/16/9 & 106/31/16 & 185/56/29 \\
Llama-3-8B & WT & Attn. & 24/10/6 & 57/18/11 & 121/34/17 & 241/66/33 \\
Llama-3-8B & WT & Contr. & 24/9/6 & 57/18/10 & 120/34/17 & 243/66/33 \\
\bottomrule
\end{tabular}
\end{table}
\begin{table}[!htbp]
\centering\small
\caption{
All-token set-size estimates from the matched-suffix evaluation.
Each cell gives $N^*_{1\%}/N^*_{5\%}/N^*_{10\%}$, conditional on evaluated
sizes. Column headings give $L$. Abbreviations follow
Table~\ref{tab:context-tail128-all}. All predicted positions in each window
are scored, so the target population changes with $L$.
}
\label{tab:context-all_tokens-all}
\setlength{\tabcolsep}{4pt}
\begin{tabular}{lllrrrr}
\toprule
Model & Corpus & Rank & 256 & 512 & 1024 & 2048 \\
\midrule
Qwen-2.5-7B & OWT & Attn. & 22/13/9 & 35/20/14 & 55/31/21 & 102/51/34 \\
Qwen-2.5-7B & OWT & Contr. & 21/12/9 & 34/19/13 & 54/30/20 & 100/49/33 \\
Qwen-2.5-7B & WT & Attn. & 23/13/9 & 39/20/14 & 61/31/22 & 94/48/34 \\
Qwen-2.5-7B & WT & Contr. & 23/12/9 & 38/20/14 & 61/30/21 & 93/48/34 \\
Gemma-7B & OWT & Attn. & 89/67/59 & 157/125/111 & 275/227/204 & 483/394/358 \\
Gemma-7B & OWT & Contr. & 88/67/59 & 157/125/111 & 276/227/203 & 484/395/358 \\
Gemma-7B & WT & Attn. & 80/63/57 & 147/110/97 & 247/192/173 & 416/332/300 \\
Gemma-7B & WT & Contr. & 79/63/57 & 148/110/97 & 245/192/173 & 419/331/299 \\
Llama-3-8B & OWT & Attn. & 20/7/5 & 32/10/6 & 61/17/10 & 113/30/16 \\
Llama-3-8B & OWT & Contr. & 19/6/4 & 32/10/6 & 59/17/9 & 110/29/15 \\
Llama-3-8B & WT & Attn. & 18/7/5 & 36/11/7 & 76/19/11 & 140/36/18 \\
Llama-3-8B & WT & Contr. & 18/7/4 & 35/11/6 & 76/19/10 & 139/36/18 \\
\bottomrule
\end{tabular}
\end{table}
\begin{figure}[!htbp]\centering
\begin{minipage}{0.49\linewidth}\centering
\includegraphics[width=\linewidth]{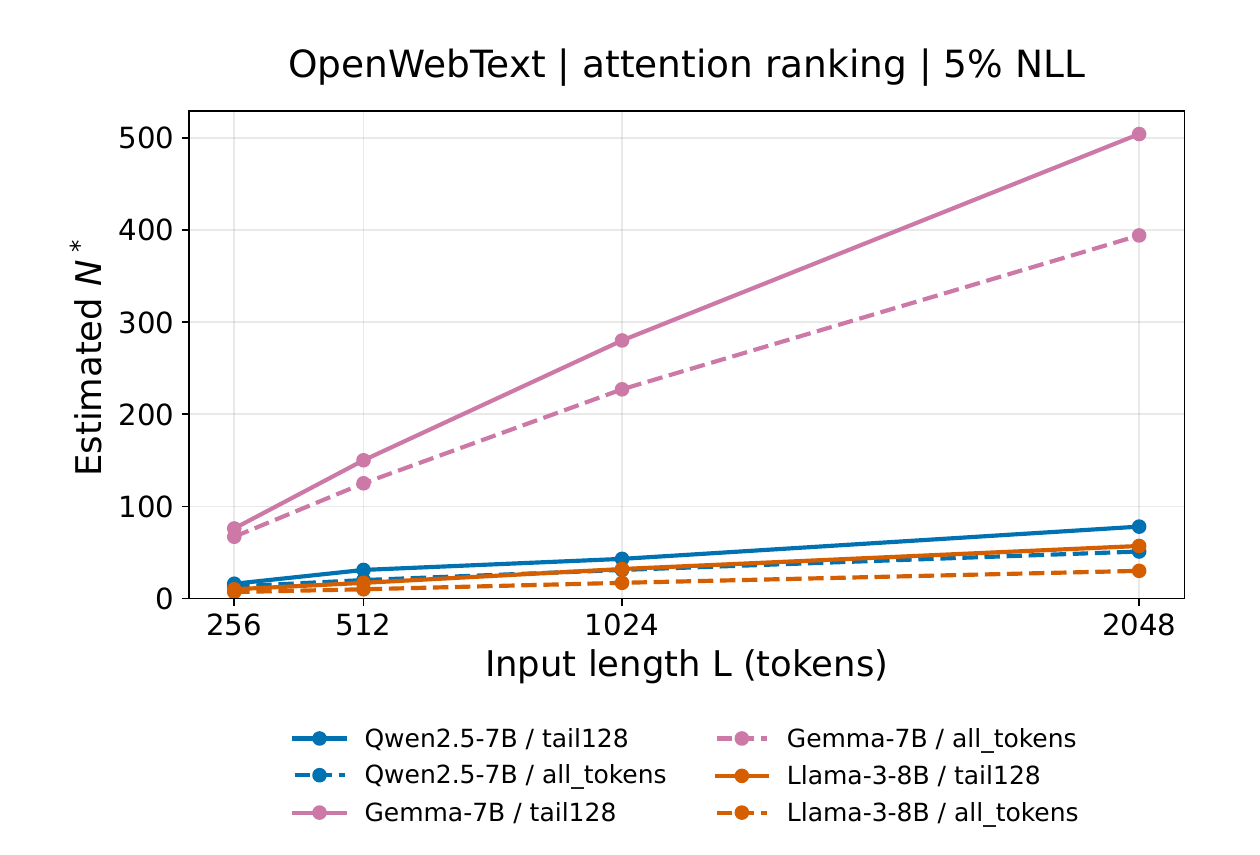}\\
{\small (a) OpenWebText}
\end{minipage}\hfill
\begin{minipage}{0.49\linewidth}\centering
\includegraphics[width=\linewidth]{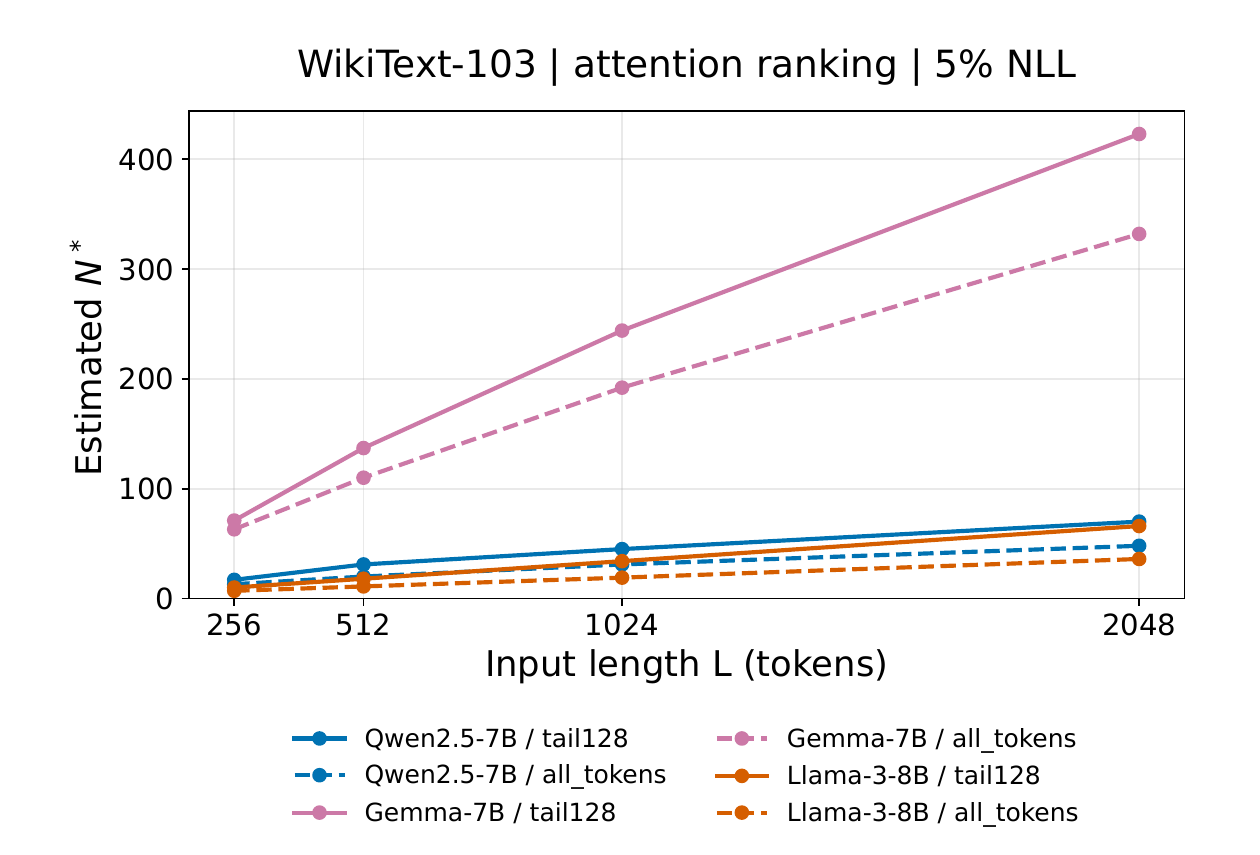}\\
{\small (b) WikiText-103}
\end{minipage}
\caption{
\textbf{Effective set size depends on which targets are scored.}
Matched final-target and all-token scoring use the same forward evaluations
but average different target populations. The final-target protocol fixes
the last 128 targets across lengths within each model--corpus pair.
}
\label{fig:context-metric-comparison}
\end{figure}

\paragraph{Full-model baseline and fixed absolute tolerance.}
Figure~\ref{fig:context-dense-nll} reports full-model NLL on the matched
final 128 targets. From $L=256$ to 2048, NLL decreases by
$0.292$--$0.342$ nats per token across the six complete pairs, with paired
95\% intervals for the change below zero. The additional natural context
therefore supplies useful predictive information, which may contribute to
the growing required set size.

An improving baseline also tightens a relative tolerance in absolute units.
To examine this effect, fix the allowed increase at
$0.05\,\NLL(M;L=256)$ for every length. The resulting OpenWebText
attention estimates still grow from 16 to 72 for Qwen, 76 to 496 for
Gemma, and 10 to 54 for Llama. This calculation uses already evaluated
sizes without additional refinement. It shows that relative-tolerance
tightening alone does not account for the growth reported in the main text.

\begin{figure}[!htbp]\centering
\begin{minipage}{0.49\linewidth}\centering
\includegraphics[width=\linewidth]{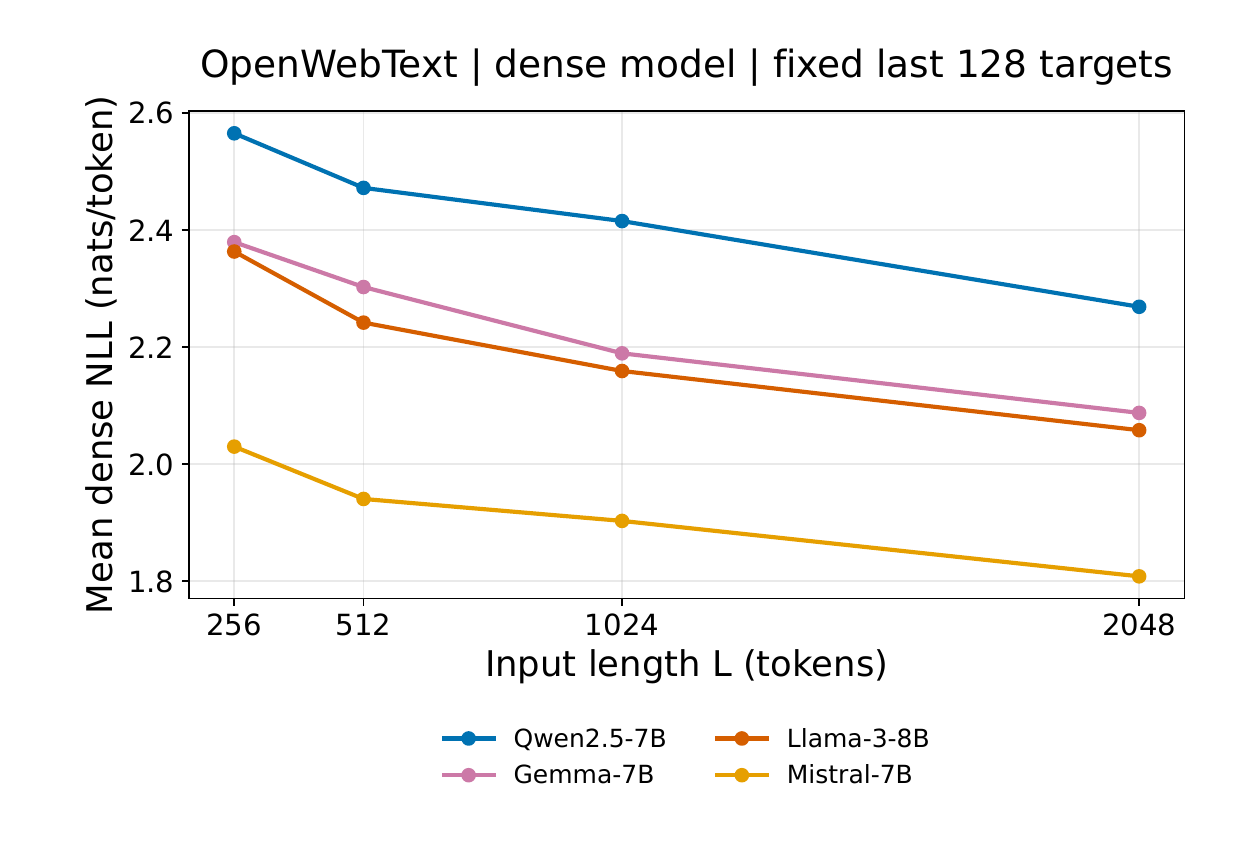}\\
{\small (a) OpenWebText}
\end{minipage}\hfill
\begin{minipage}{0.49\linewidth}\centering
\includegraphics[width=\linewidth]{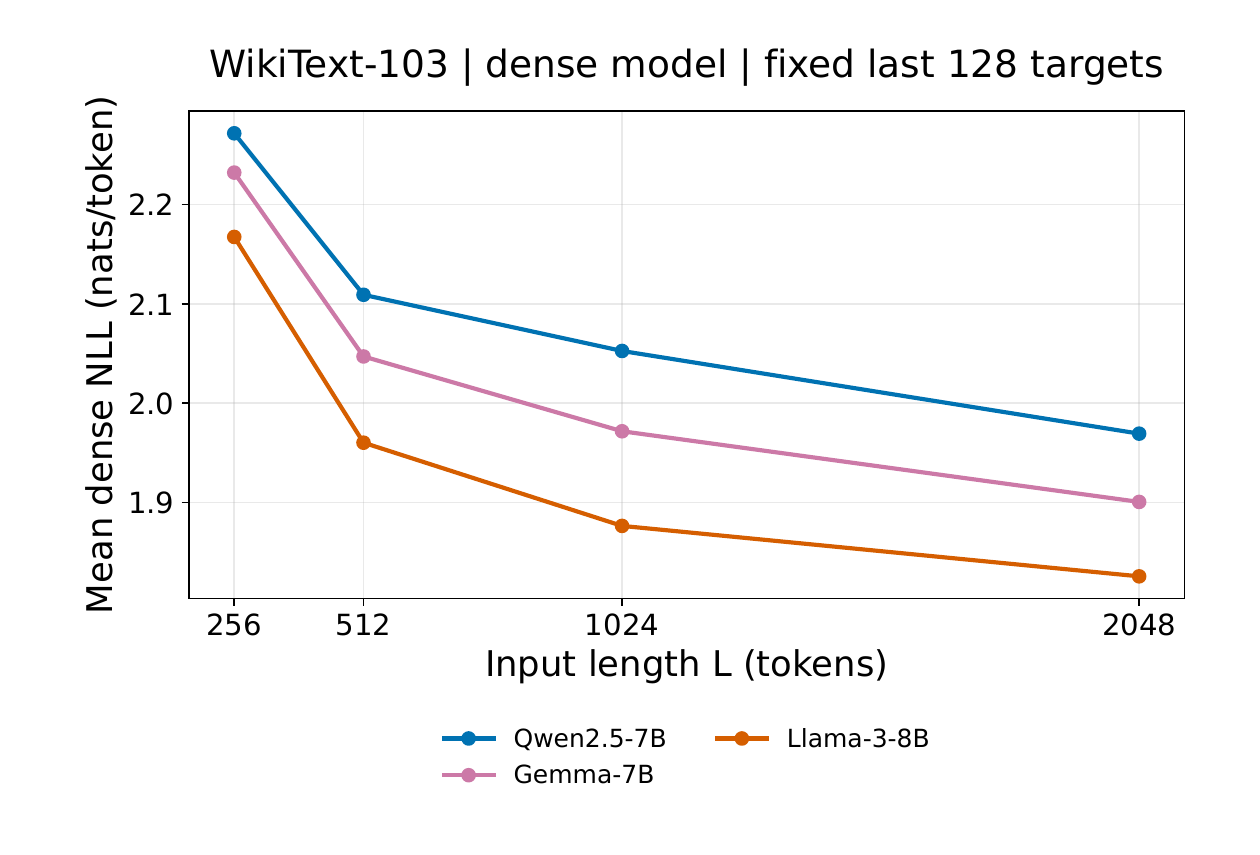}\\
{\small (b) WikiText-103}
\end{minipage}
\caption{
\textbf{Additional natural context improves full-model NLL.}
NLL on the same final 128 targets at each length. The Mistral/OpenWebText
curve includes full-model measurements from its partial $L=2048$ evaluation.
Those measurements do not imply a completed set-size estimate.
}
\label{fig:context-dense-nll}
\end{figure}

\paragraph{Partial Mistral results.}
Mistral-7B-v0.3 has OpenWebText 5\% attention estimates of 9, 13,
and 22 at $L=256,512,1024$. At $L=2048$, complete 50-document
measurements give $\Delta\NLL_{\rm rel}(49)=0.0519852$ and
$\Delta\NLL_{\rm rel}(50)=0.0498641$, yielding a provisional crossing
at 50. The full refinement is incomplete, contribution-ranking results
at this length are unavailable, and WikiText-103 was not completed.
This provisional value is excluded from the complete-model comparisons.

\subsection{Mistral-Small-24B results}
\label{app:mistral24}

The larger checkpoint extends the model coverage in
Section~\ref{sec:exp-lm-width}. Taking the maximum of the two
corpus-specific estimates gives $(119,40,22)$ under attention ranking
and $(120,39,22)$ under contribution ranking at tolerances
$(1\%,5\%,10\%)$. Figure~\ref{fig:mistral24-nll} shows the refined
loss curves, including local nonmonotonicity near the strict 1\% boundary.
The 8-bit weight configuration in Appendix~\ref{app:numerical-implementation}
means that this comparison does not isolate the effect of model size.

\begin{figure*}[!htbp]
\centering
\includegraphics[width=0.82\textwidth]
{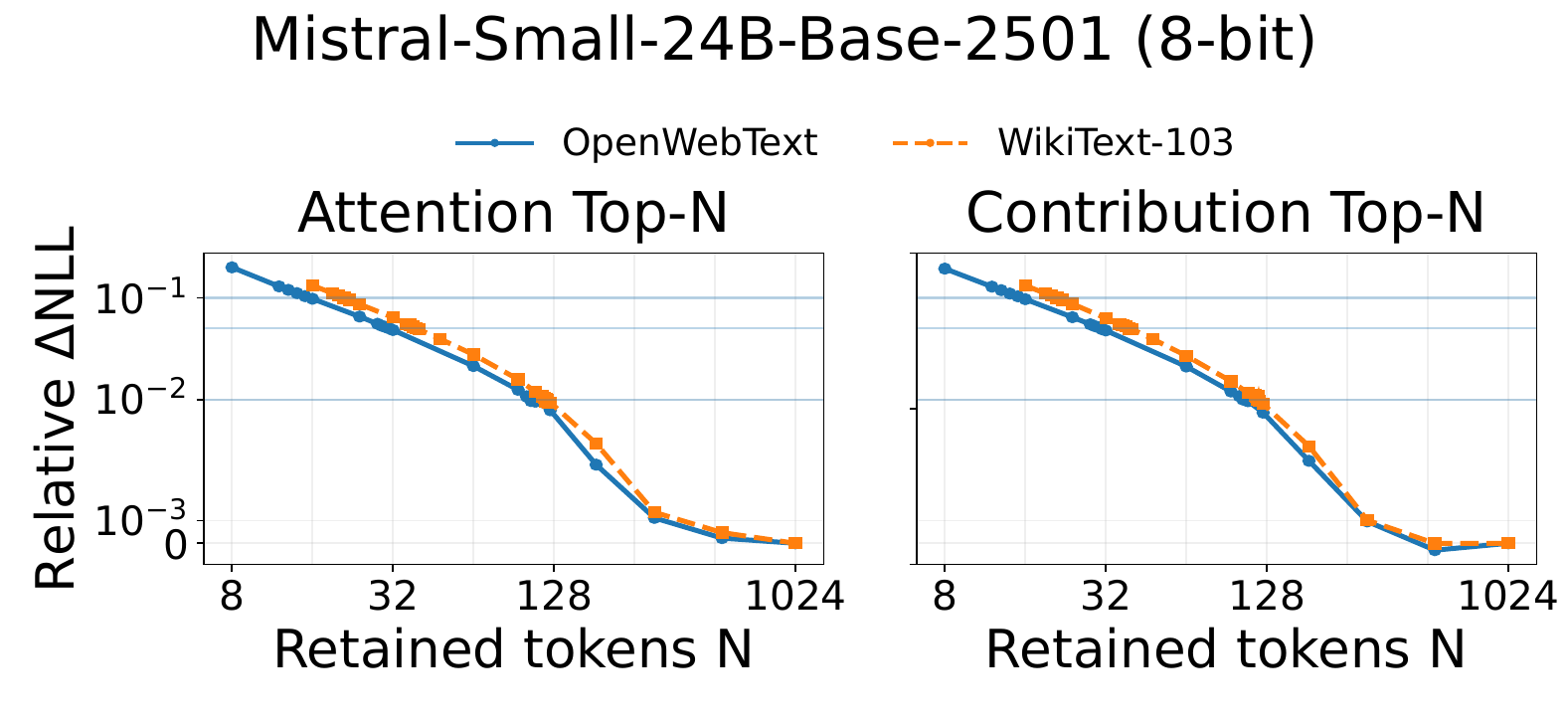}
\caption{
\textbf{Mistral-Small-24B retains low NLL degradation with restricted attention.}
Adaptive refinement of relative NLL degradation for the 8-bit-weight
checkpoint. Solid and dashed curves show OpenWebText and WikiText-103.
Horizontal lines indicate 1\%, 5\%, and 10\% tolerances.
}
\label{fig:mistral24-nll}
\end{figure*}

\section{Additional controlled-retrieval results}
\label{app:babilong-full}

Section~\ref{sec:competition} uses fixed annotated support to investigate
competition separately from the additional predictive information supplied
by natural context. We report the detailed support statistics, extended
answer-loss thresholds, and full-model baselines. A supplementary
multi-fact comparison examines the limits of interpreting these thresholds
as evidence about the useful-set size.

\subsection{Support displacement, mass, and recall}
\label{app:support-details}

The measurements in Figure~\ref{fig:e1-mechanism} describe how the same
annotated supporting fact is represented in the attention ranking as
background grows. They use the 86 validated examples per condition from
Appendix~\ref{app:qa-scoring}, after unambiguous support mapping identifies
90 of the 100 sampled examples.

Averaging the per-model 4K/0K ratios across Qwen-2.5-7B, Gemma-7B,
Llama-3-8B, and Mistral-7B gives a $50.0\times$ increase in mean
support displacement and a reduction of support attention mass to
$0.435\times$ its initial value. Mean support recall@64 falls by
63.8 percentage points. Displacement ratios range from $42.8\times$ to
$54.6\times$ across models. Log--log fits against actual non-support
counts give slopes of 0.79--0.84 and $R^2\ge0.998$, describing the
four measured background conditions.

\paragraph{Pairwise ranking and the number of competitors.}
The mean non-support count increases by approximately $108$--$111\times$
from 0K to 4K. Over the same range, mean empirical outranking probability
decreases from 0.475 to 0.224 for Qwen, 0.451 to 0.226 for Gemma,
0.443 to 0.172 for Llama, and 0.451 to 0.205 for Mistral
(Figure~\ref{fig:support-outrank-probability}). Thus more competitors
are ranked above support overall even though an individual competitor
becomes less likely to outrank a support token.

For each layer/head measurement,
$\operatorname{Disp}_a(x)=d_x\widehat p_a(x)$ by definition.
This identity requires no independence assumption. Averaging yields
$\mathbb E[d_x\widehat p_a(x)]$, which need not factor into the product
of the separate means. The observed probabilities also vary with background,
so a constant outranking coefficient does not describe these measurements.

Mean displacement averages over support tokens. Complete support recovery
depends on the lowest-ranked support token, while the loss criterion also
depends on retained weights and subsequent computation. This distinction
helps interpret the different model responses. Qwen's displacement grows
$51.6\times$ while its 0.10-nat functional set-size estimate doubles.
Gemma's displacement grows $54.6\times$ while its estimate grows
approximately $11.9\times$. As a descriptive association, full-model
correct examples have greater support mass in all 16 model--background
conditions and greater support recall@64 in 14 of them. These comparisons
motivate the functional test while leaving the causal role of individual
support measurements unresolved.

\begin{figure}[!htbp]\centering
\includegraphics[width=0.76\linewidth]{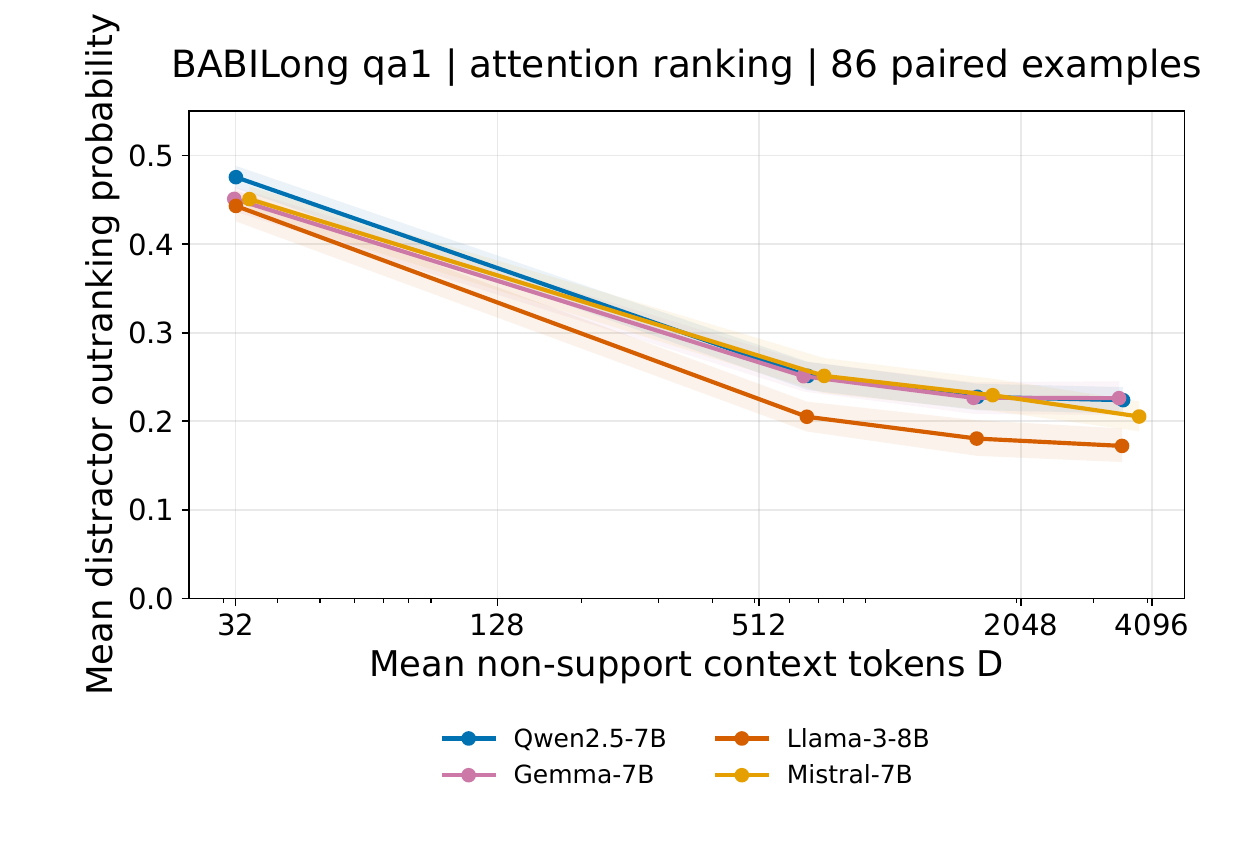}
\caption{
\textbf{Pairwise support ranking improves while total displacement grows.}
Mean empirical probability that a non-support context token scores at least
as highly as a support token at the answer query. Background growth
increases the number of competitors while this probability decreases.
The 0K condition already includes non-support tokens.
}
\label{fig:support-outrank-probability}
\end{figure}

\subsection{Extended answer-loss thresholds}
\label{app:answerloss-details}

Table~\ref{tab:qa-answerloss-widths} extends the initial search summarized
in Figure~\ref{fig:qa-distractors-main}. It reports attention-ranked
effective set sizes at a 0.10-nat mean candidate-answer-loss tolerance,
measured relative to the full model at each background.

\begin{table}[!htbp]
\caption{
Attention-ranked BABILong \texttt{qa1} set-size estimates at a 0.10-nat
mean candidate-answer-loss tolerance relative to each background's full
model. Adaptive extensions are included. An entry $>256$ denotes no
passing evaluated size through 256, with untested integers unresolved.
}
\label{tab:qa-answerloss-widths}
\centering
\small
\begin{tabular}{lrrrr}
\toprule
Model & 0K & 1K & 2K & 4K \\
\midrule
Qwen-2.5-7B & 32 & 16 & 32 & 64 \\
Llama-3-8B  & 32 & 128 & 256 & 316 \\
Gemma-7B    & 64 & 256 & $>256$ & 760 \\
Mistral-7B  & 8  & 64  & 256 & $>256$ \\
\bottomrule
\end{tabular}
\end{table}

Adaptive extensions resolve crossings at $N=316$ for Llama-3-8B and
$N=760$ for Gemma-7B at 4K. Gemma at 2K and Mistral at 4K remain
unresolved on the evaluated sizes through 256. The $>256$ notation
records this search limit and does not certify failure at every smaller
untested integer. Qwen's estimates of $(32,16,32,64)$ across the four
backgrounds show that support displacement need not produce a monotone
functional response. The main text accordingly reports increasing
requirements in several models over the tested range.

\subsection{Full-model performance across backgrounds}
\label{app:dense-qa-background}

The effective set size measures loss preservation relative to a full-model
baseline. To interpret this criterion, Table~\ref{tab:qa-dense-loss} and
Figure~\ref{fig:dense-qa-background} report that baseline on the same
100-example cohorts used for the answer-loss curves. The evaluation is
six-way candidate choice, with uniform candidate loss
$\log6\simeq1.792$ nats.

\begin{table}[!htbp]\centering\small
\caption{
Full-model \texttt{qa1} candidate-normalized answer loss in nats on
100 matched examples per background. These baselines are used for the
corresponding answer-loss degradation measurements.
}
\label{tab:qa-dense-loss}
\begin{tabular}{lrrrr}\toprule
Model & 0K & 1K & 2K & 4K \\\midrule
Qwen-2.5-7B & 0.690 & 1.178 & 1.303 & 1.385 \\
Gemma-7B & 0.647 & 0.882 & 0.933 & 1.150 \\
Llama-3-8B & 0.740 & 0.773 & 0.857 & 1.163 \\
Mistral-7B-v0.3 & 0.792 & 1.018 & 1.151 & 1.311 \\
\bottomrule\end{tabular}\end{table}

Accuracy changes from 0K to 4K are $-23$, $-18$, $-10$, and $-14$
percentage points for Qwen, Gemma, Llama, and Mistral. Llama improves
at 1K, and its paired 0K-to-4K accuracy-change interval is approximately
$[-22,+2]$ percentage points, which includes zero. Mean candidate loss
increases from 0K to 4K for all four models, with paired loss-change
intervals above zero. This supports the main text's loss-based account of
competition while qualifying the accuracy trend. A small effective set
size at a difficult background condition can preserve a baseline that
is itself less accurate.

\begin{figure}[!htbp]\centering
\begin{minipage}{0.49\linewidth}\centering
\includegraphics[width=\linewidth]{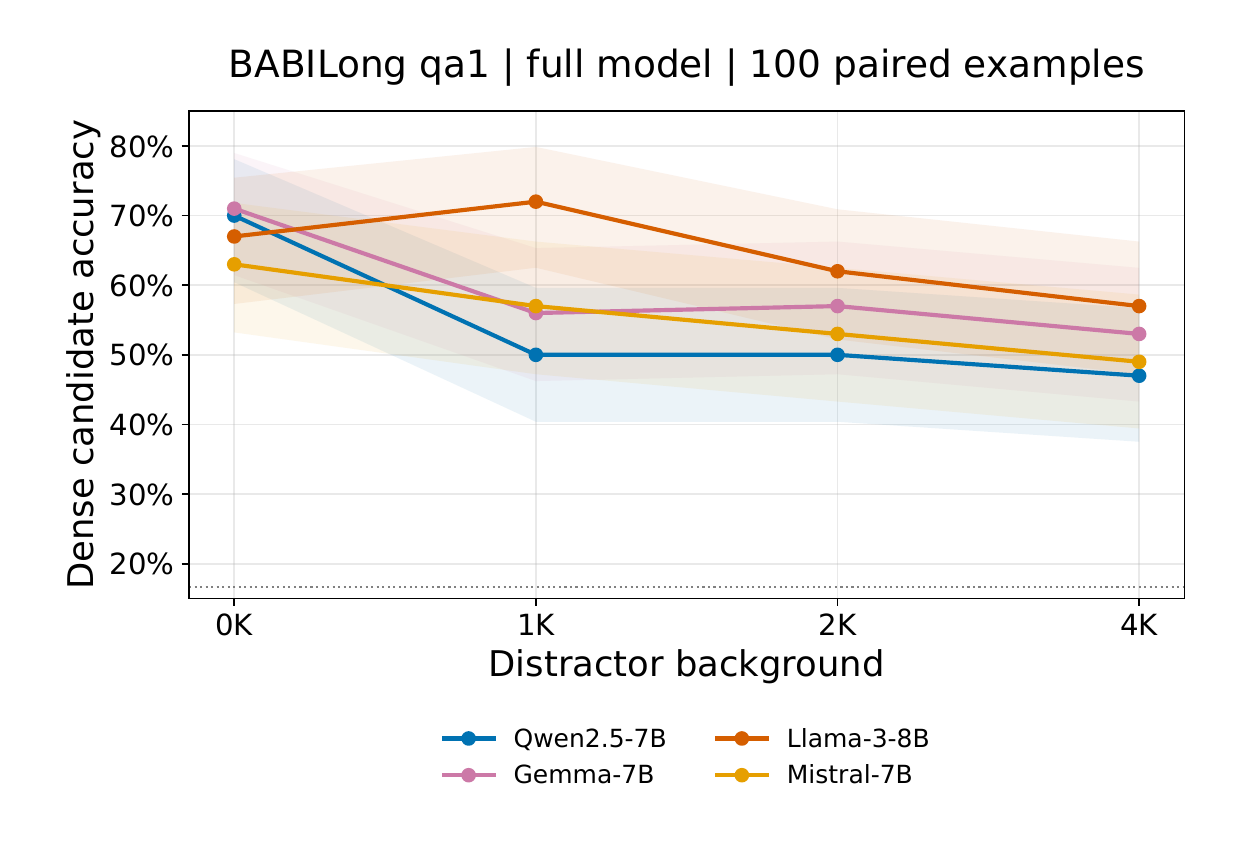}\\
{\small (a) Dense candidate accuracy}
\end{minipage}\hfill
\begin{minipage}{0.49\linewidth}\centering
\includegraphics[width=\linewidth]{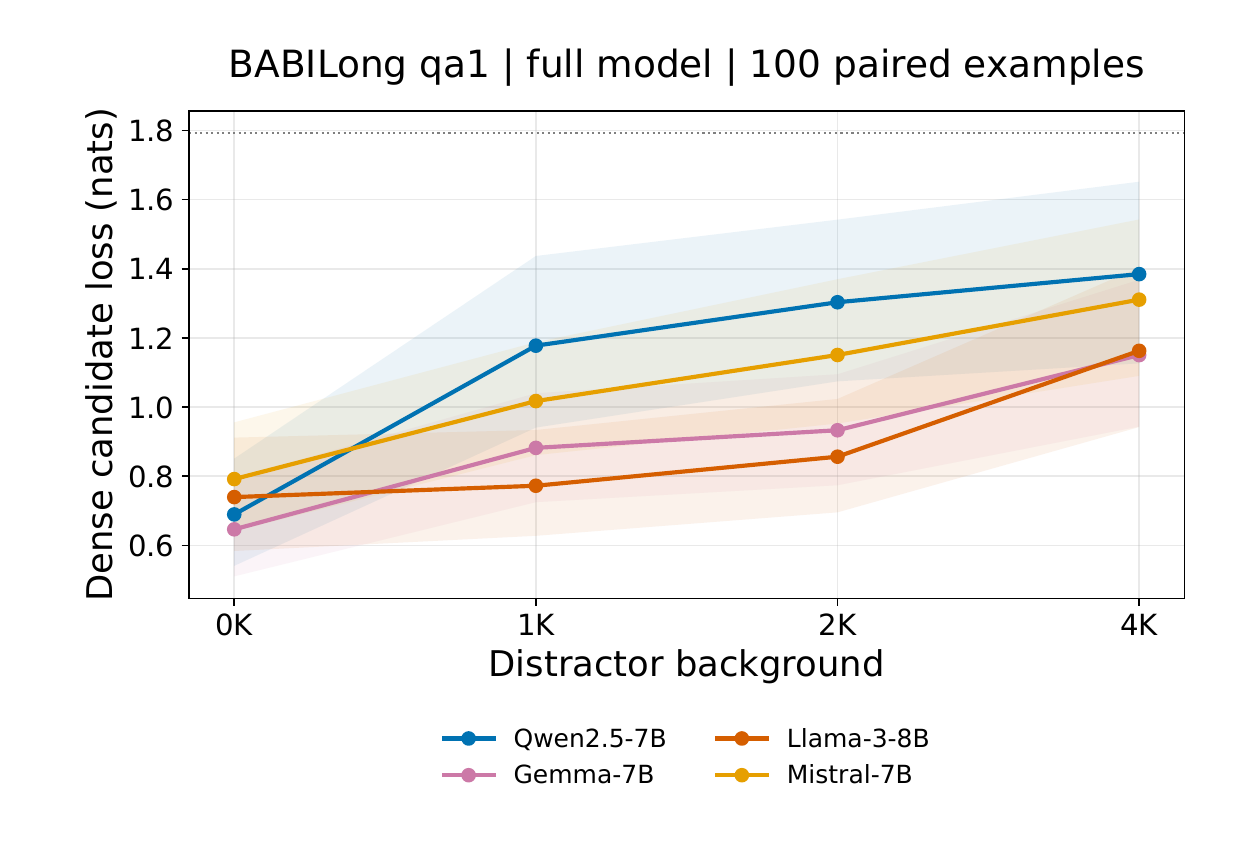}\\
{\small (b) Dense candidate loss}
\end{minipage}
\caption{
\textbf{Full-model answer loss increases with background.}
Candidate accuracy and mean candidate-normalized answer loss before
intervention. Accuracy bands are 95\% Wilson intervals, and loss bands
use bootstrap resampling over examples. Dotted references indicate uniform
six-candidate performance. Accuracy can vary nonmonotonically despite
increasing loss.
}
\label{fig:dense-qa-background}
\end{figure}

\subsection{Fact multiplicity and baseline competence}
\label{app:qa-full-accuracy}

The useful-token framework allows $K(C)$ to vary with the computation.
As a supplementary comparison, \texttt{qa1}, \texttt{qa2}, and
\texttt{qa3} vary the number of annotated supporting facts at 0K
background. Table~\ref{tab:babilong-full-accuracy} and
Figure~\ref{fig:qa-facts-app} use the same candidate-scoring protocol
and 100-example cohorts for each model and task.

\begin{table}[!htbp]
\centering\small
\caption{
Full-model candidate accuracy at 0K background, using 100 examples per
task and model. These are the baselines for the loss curves in
Figure~\ref{fig:qa-facts-app}.
}
\label{tab:babilong-full-accuracy}
\begin{tabular}{lccc}\toprule
Model & \texttt{qa1} & \texttt{qa2} & \texttt{qa3}\\\midrule
Qwen-2.5-7B & 0.70 & 0.49 & 0.28\\
Gemma-7B & 0.71 & 0.49 & 0.30\\
Llama-3-8B & 0.67 & 0.49 & 0.26\\
Mistral-7B-v0.3 & 0.63 & 0.52 & 0.23\\
\bottomrule\end{tabular}
\end{table}

Full-model \texttt{qa3} accuracy is 0.23--0.30, compared with chance
accuracy $1/6$. This limits how much net accuracy can decrease under
intervention and makes a small accuracy-based set-size estimate difficult
to interpret as successful multi-fact retrieval. Candidate loss remains
sensitive to score changes, but its preservation still refers to this
baseline competence. Moreover, an annotated fact spans several tokens
and need not specify every useful token. The comparison therefore
examines sensitivity to task structure without identifying $K(C)$.

\begin{figure*}[!htbp]
\centering
\begin{minipage}{0.49\textwidth}\centering
\includegraphics[width=\linewidth]{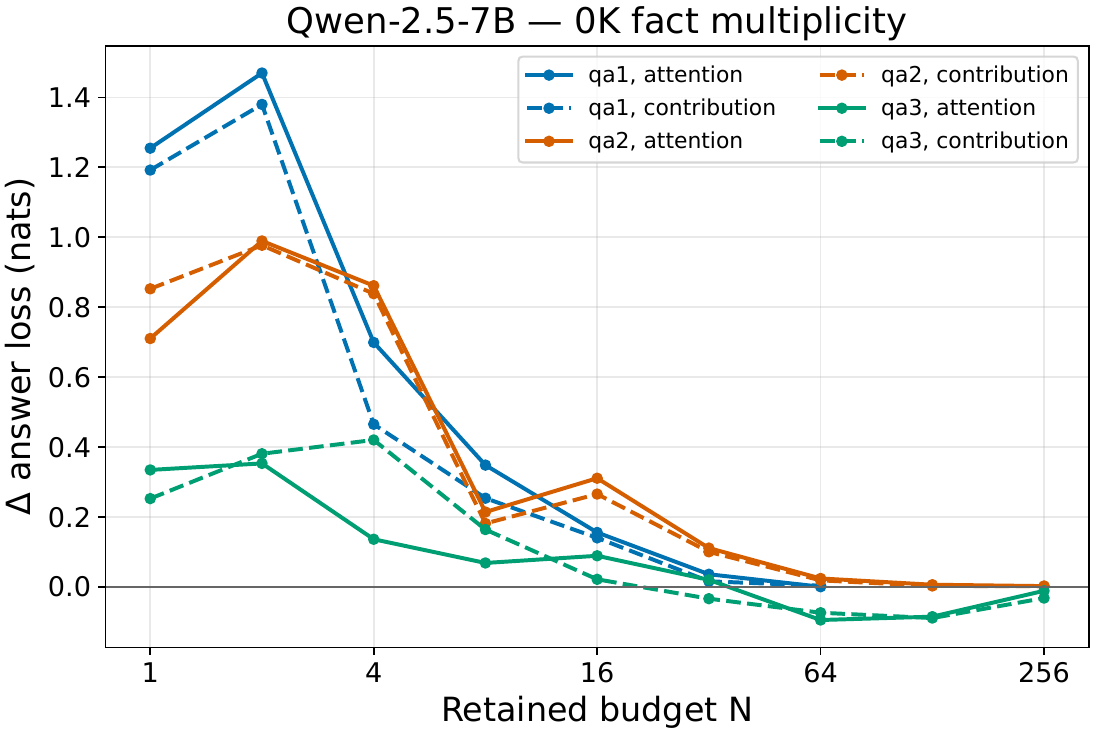}\\
{\small (a) Qwen-2.5-7B}
\end{minipage}\hfill
\begin{minipage}{0.49\textwidth}\centering
\includegraphics[width=\linewidth]{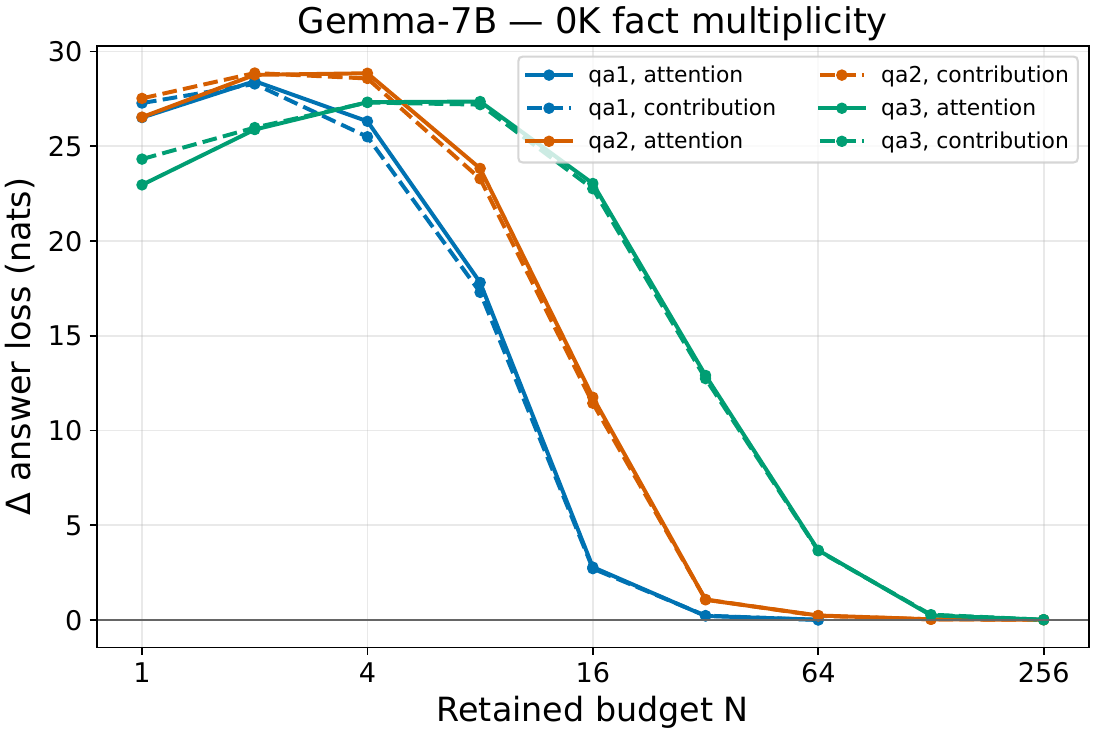}\\
{\small (b) Gemma-7B}
\end{minipage}\\[5pt]
\begin{minipage}{0.49\textwidth}\centering
\includegraphics[width=\linewidth]{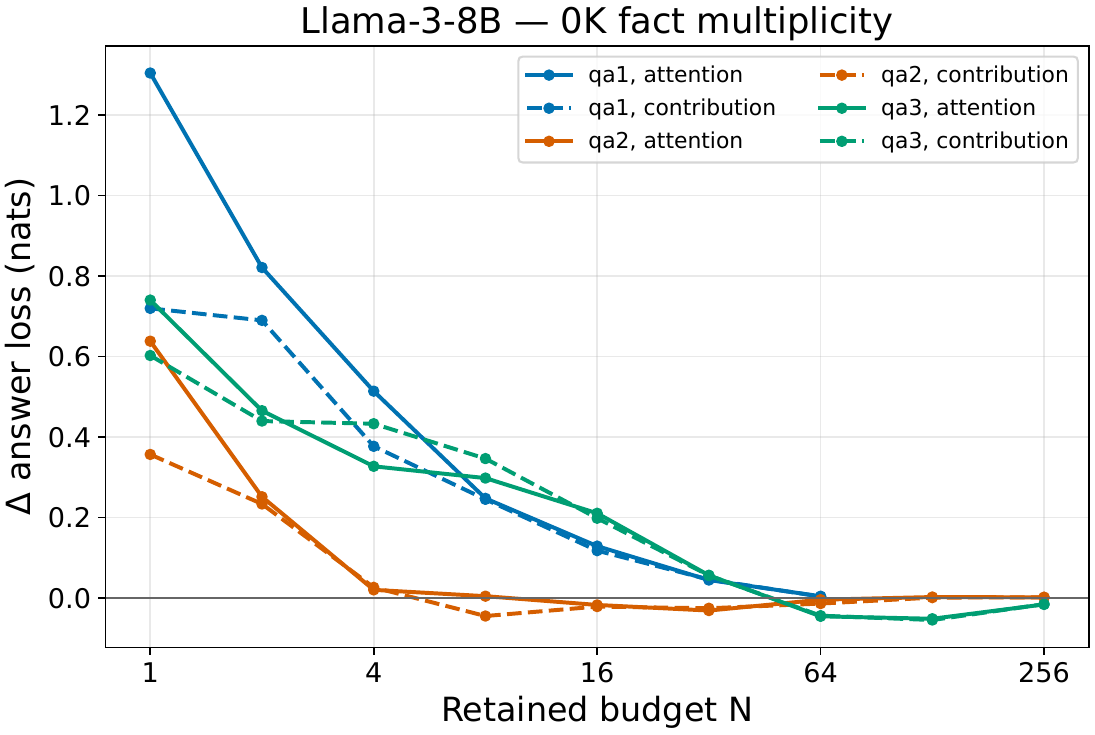}\\
{\small (c) Llama-3-8B}
\end{minipage}\hfill
\begin{minipage}{0.49\textwidth}\centering
\includegraphics[width=\linewidth]{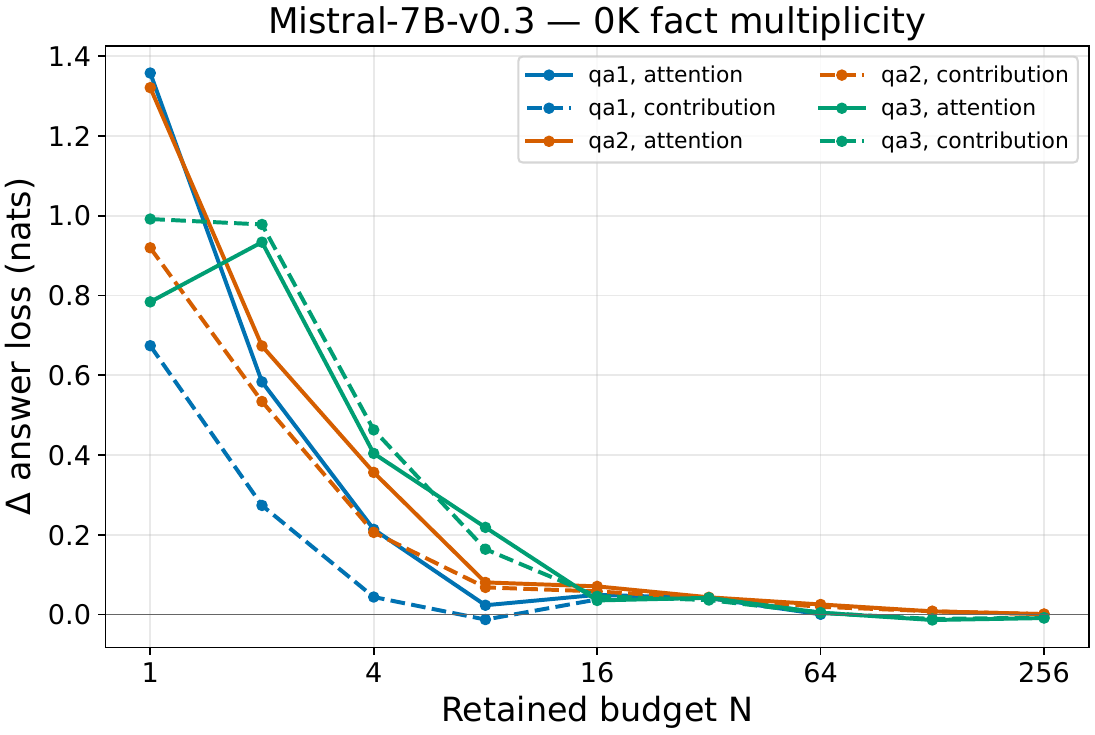}\\
{\small (d) Mistral-7B-v0.3}
\end{minipage}
\caption{
\textbf{Multi-fact loss sensitivity must be interpreted with baseline competence.}
Mean candidate-answer-loss degradation at 0K background. Color denotes
task, and solid and dashed curves denote attention and contribution
ranking. Each curve includes the full 100-example cohort, with zero
degradation when $N$ is at least the prompt length. Full-model accuracies
are given in Table~\ref{tab:babilong-full-accuracy}.
}
\label{fig:qa-facts-app}
\end{figure*}

\section{Aggregation controls and functional diagnostics}
\label{app:supp-controls}

The main text interprets effective attention set size through ranking,
aggregation, and sensitivity to discarded contributions. The controls below
examine these factors by rescaling retained weights, comparing related
checkpoints, localizing interventions, and testing functional predictors
and positional selection.

\subsection{Renormalization of selected weights}
\label{app:renorm-details}
\label{sec:intervention-result}

Section~\ref{sec:aggregation} asks whether preserving the original attention
weights contributes to the measured set-size requirement. The renormalized
control keeps the relative weights of selected tokens and restores their
sum to one:
\begin{equation}
\widehat\alpha_i^{(r,N)}=
\frac{\alpha_i m_i^{(r,N)}}{\sum_j\alpha_jm_j^{(r,N)}}.
\label{eq:renorm}
\end{equation}
Comparing this intervention with deletion evaluates how rescaling the
selected sum affects functional sufficiency. The corresponding local error
identities are given in Equation~\ref{eq:renorm-centroids}.

\paragraph{Language-model estimates.}
Table~\ref{tab:renorm-lm-absolute} compares OpenWebText estimates for both
rankings at 5\% and 10\% relative NLL tolerances. The renormalized
evaluation contains 50 documents per model. Deletion estimates use the
OpenWebText entries in Table~\ref{tab:effective-n-by-dataset} to keep the
corpus consistent. Per-document deletion traces are unavailable for
verifying that the two historical evaluations used identical windows and
numerical settings. These are therefore comparisons of corpus-level
thresholds, with document pairing unverified.

\begin{table}[!htbp]
\centering\small
\caption{
OpenWebText effective set-size estimates under deletion (Primary) and
renormalization (Renorm) at 5\% and 10\% relative NLL tolerances.
Both rankings are shown. Historical document pairing is unverified, so
the comparison is at the corpus-summary level.
}
\label{tab:renorm-lm-absolute}
\begin{tabular}{llrrrr}
\toprule &&\multicolumn{2}{c}{5\%}&\multicolumn{2}{c}{10\%}\\
\cmidrule(lr){3-4}\cmidrule(lr){5-6}
Model & Ranking & Primary & Renorm & Primary & Renorm\\\midrule
Qwen-2.5-7B & attention & 33 & 15 & 21 & 9 \\
Qwen-2.5-7B & contribution & 31 & 24 & 21 & 13 \\
Gemma-7B & attention & 226 & 7 & 206 & 4 \\
Gemma-7B & contribution & 227 & 11 & 205 & 7 \\
Mistral-7B & attention & 14 & 8 & 10 & 5 \\
Mistral-7B & contribution & 14 & 14 & 9 & 11 \\
\bottomrule\end{tabular}
\end{table}

The largest change occurs for Gemma-7B, whose 5\% attention-ranked
estimate decreases from 226 to 7. The effect depends on the model, ranking,
and tolerance. For Mistral contribution ranking at 10\%, renormalization
increases the estimate from 9 to 11. Figure~\ref{fig:e8-renorm}a reports
the corresponding ratios. This variation is consistent with the local
identities, which allow rescaling to either reduce or increase output
error. The comparisons show intervention dependence without identifying
the deleted positions as intrinsically irrelevant.

\paragraph{Controlled QA estimates.}
The QA normalization control covers 0K and 4K background for Qwen,
Gemma, and Mistral. Table~\ref{tab:renorm-qa-absolute} reports both
rankings at 0.10- and 0.20-nat mean answer-loss tolerances.
At 0.10 nats under attention ranking, the 4K/0K set-size ratio is one
for Qwen and Gemma and two for Mistral. Deletion gives ratios of two
for Qwen and 11.875 for Gemma. Mistral's deletion search is unresolved
through 256 at 4K, compared with an estimate of eight at 0K, so its
plotted endpoint does not represent a resolved ratio.

\begin{table*}[!htbp]
\centering\small\setlength{\tabcolsep}{5pt}
\caption{
Renormalized \texttt{qa1} effective set-size estimates on the evaluated
sizes at 0.10- and 0.20-nat mean answer-loss tolerances. Dashes indicate
background conditions that were not evaluated for this control.
}
\label{tab:renorm-qa-absolute}
\begin{tabular}{llrrrrrrrr}
\toprule &&\multicolumn{4}{c}{0.10 nat}&\multicolumn{4}{c}{0.20 nat}\\
\cmidrule(lr){3-6}\cmidrule(lr){7-10}
Model & Ranking &0K&1K&2K&4K&0K&1K&2K&4K\\\midrule
Qwen-2.5-7B & attention & 16 & -- & -- & 16 & 8 & -- & -- & 16 \\
Qwen-2.5-7B & contribution & 32 & -- & -- & 32 & 8 & -- & -- & 16 \\
Gemma-7B & attention & 32 & -- & -- & 32 & 8 & -- & -- & 32 \\
Gemma-7B & contribution & 16 & -- & -- & 32 & 8 & -- & -- & 32 \\
Mistral-7B & attention & 8 & -- & -- & 16 & 8 & -- & -- & 8 \\
Mistral-7B & contribution & 32 & -- & -- & 32 & 16 & -- & -- & 32 \\
\bottomrule\end{tabular}
\end{table*}

These comparisons support the role of aggregation in the fixed-support
experiment. Their interpretation remains relative to each condition's
full-model baseline, whose answer loss increases with background
(Appendix~\ref{app:dense-qa-background}). The tolerance also matters.
Gemma's renormalized attention estimate is unchanged at 32 under the
0.10-nat criterion, while its 0.20-nat estimate increases from eight
to 32. Two background endpoints cannot establish a general scaling law.

\begin{figure*}[!htbp]
\centering
\begin{minipage}{0.49\textwidth}\centering
\includegraphics[width=\linewidth]{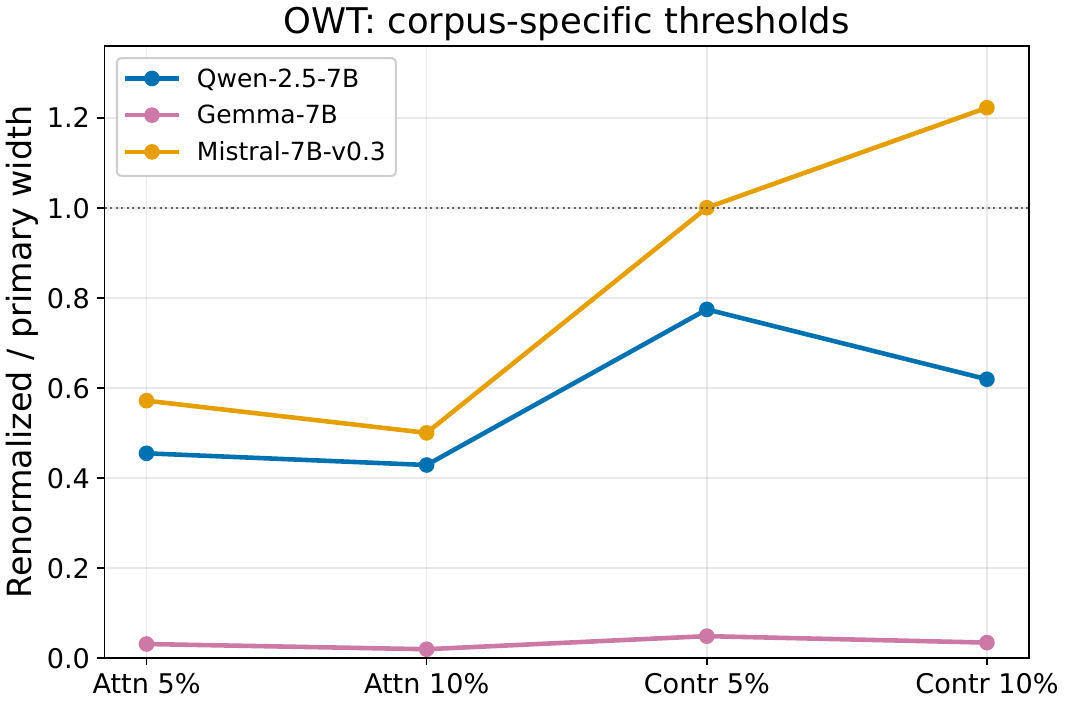}\\[-2pt]
{\small (a) OWT threshold ratio: renormalized / primary}
\end{minipage}\hfill
\begin{minipage}{0.49\textwidth}\centering
\includegraphics[width=\linewidth]{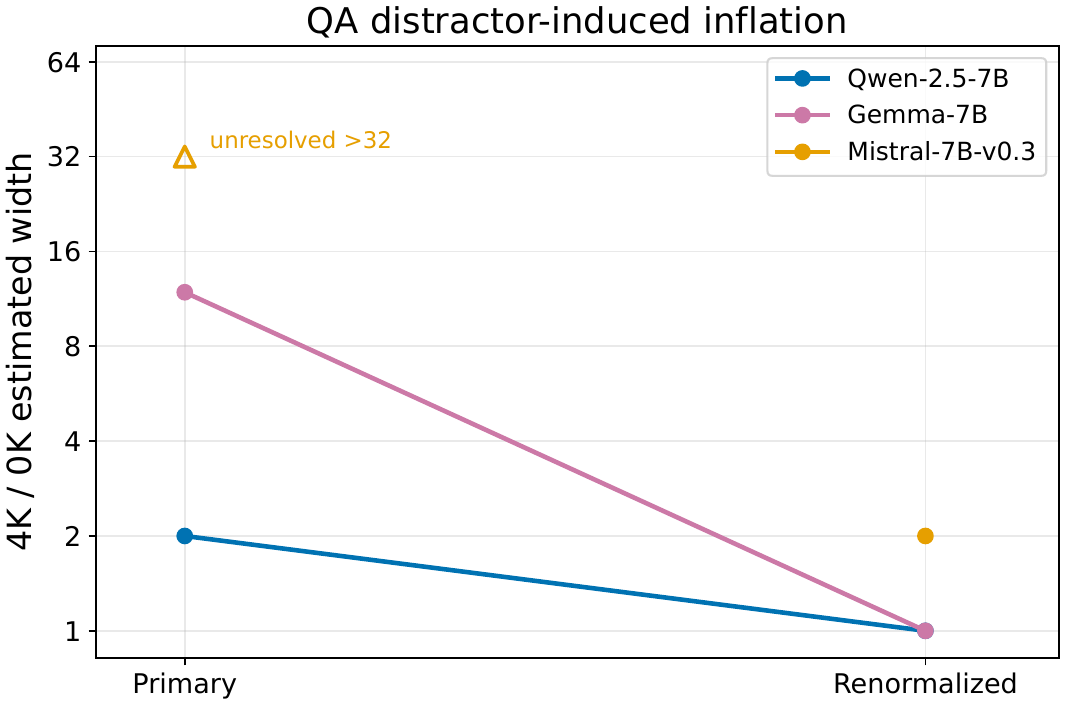}\\[-2pt]
{\small (b) 4K/0K QA width inflation}
\end{minipage}
\caption{
\textbf{Renormalization changes both required set size and background sensitivity.}
(a) Ratios of renormalized to deletion estimates on OpenWebText from
Table~\ref{tab:renorm-lm-absolute}. (b) Attention-ranked 4K/0K ratios at
a 0.10-nat answer-loss tolerance. Mistral's unresolved deletion result is
marked at the tested endpoint and is not a resolved ratio. These controls
are separate from the matched-target context-length evaluation.
}
\label{fig:e8-renorm}
\end{figure*}

\subsection{Comparison within the Gemma family}
\label{app:gemma2}

Gemma-7B requires unusually large selected sets in
Section~\ref{sec:exp-lm-width}. Gemma-2-9B provides a comparison within
the same model family. Taking the larger of the two corpus-specific
estimates gives
\[
(N^*_{5\%},N^*_{10\%})=(12,7)
\quad\text{for attention ranking},\qquad
(11,7)\quad\text{for contribution ranking}.
\]
The corresponding Gemma-7B estimates are $(226,206)$ and $(227,205)$.
The approximately 19--29-fold differences show that model-family identity
and parameter count alone do not account for the Gemma-7B requirement.
The comparison does not isolate individual architectural or training
changes. Numerical checks for the Gemma-2 implementation are reported in
Appendix~\ref{app:numerical-implementation}.

\subsection{Layer- and head-localized interventions}
\label{app:localization-extra}
\label{sec:localization-result}
\label{sec:architecture}

The common set-size limit in Section~\ref{sec:exp-lm-width} can reflect
heterogeneous sensitivity across attention operations. To examine this
variation, we apply Top-$N$ deletion to one layer or head at a time,
leaving all other attention operations unrestricted. Each layer is tested
on 50 OpenWebText windows. The reported effect is the mean of per-document
relative NLL increases, which differs from the ratio of corpus means used
for the main functional metric.

For each model, the two layers with the largest measured effects are
selected for a head scan on the first 20 windows. Every query head in
these layers is tested separately. This targeted scan measures sensitivity
within selected layers and does not provide a held-out whole-network
localization test. Figure~\ref{fig:localization-app} shows all scanned
layers and the ten largest measured Gemma head effects.

At $N=128$, Gemma layer~0 has a reported 70.9\% NLL increase and its
head~12 has a 62.1\% increase. The layer and head estimates use different
sample sizes and baseline averages, so their ratio cannot assign a share
of the layer effect to that head. Reference sizes also differ across
models. Qwen uses $N=16$ and Mistral uses $N=8$, with largest sampled
layer effects of 1.86\% and 0.33\%, respectively. The results reveal
variation within models, while comparisons across models remain conditional
on these different intervention sizes.

\begin{figure*}[!htbp]
\centering
\begin{minipage}{0.49\textwidth}\centering
\includegraphics[width=\linewidth]
{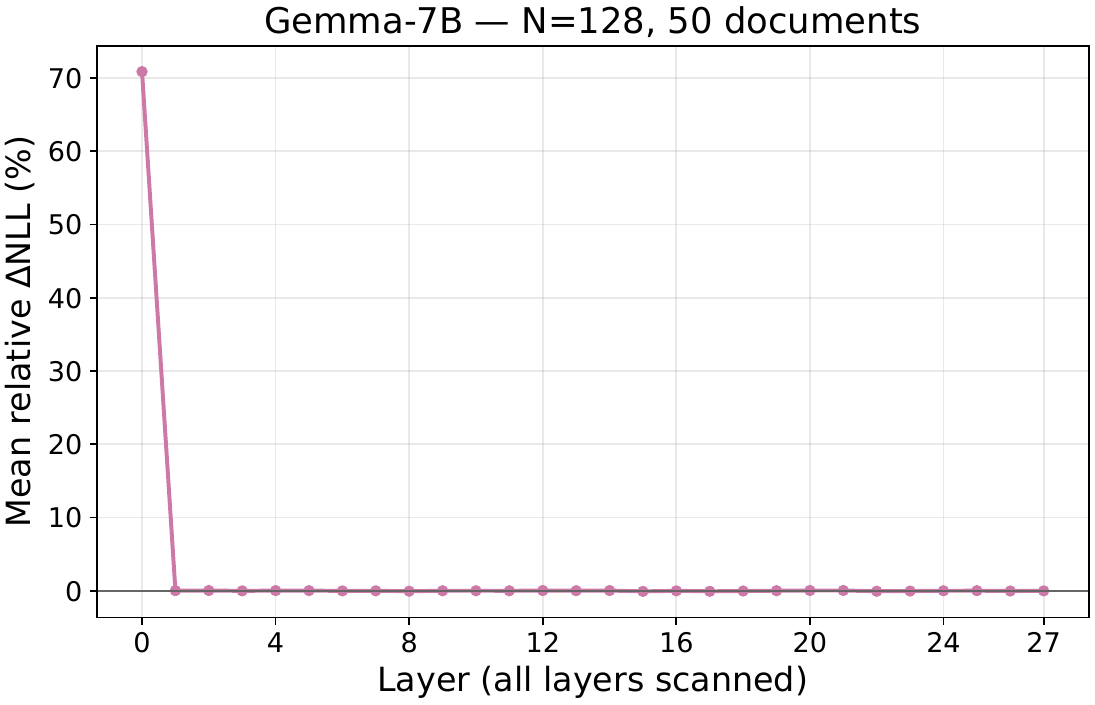}\\[-2pt]
{\small (a) Gemma-7B layer localization}
\end{minipage}\hfill
\begin{minipage}{0.49\textwidth}\centering
\includegraphics[width=\linewidth]
{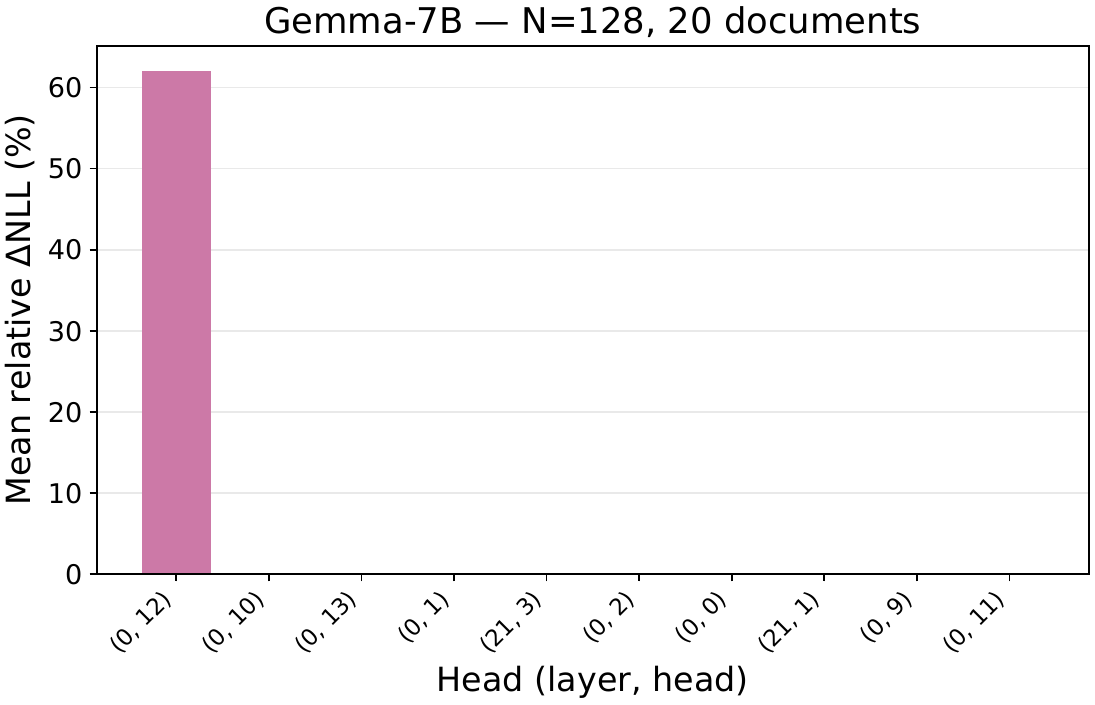}\\[-2pt]
{\small (b) Selected Gemma-7B heads}
\end{minipage}

\vspace{6pt}

\begin{minipage}{0.49\textwidth}\centering
\includegraphics[width=\linewidth]
{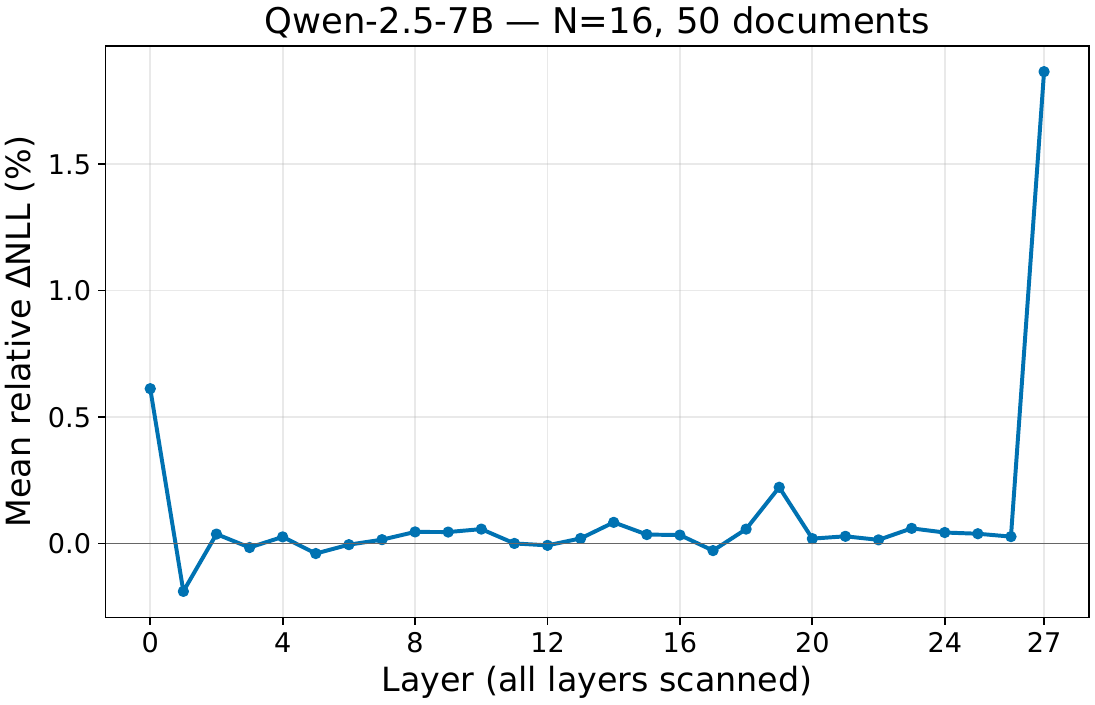}\\[-2pt]
{\small (c) Qwen-2.5-7B}
\end{minipage}\hfill
\begin{minipage}{0.49\textwidth}\centering
\includegraphics[width=\linewidth]
{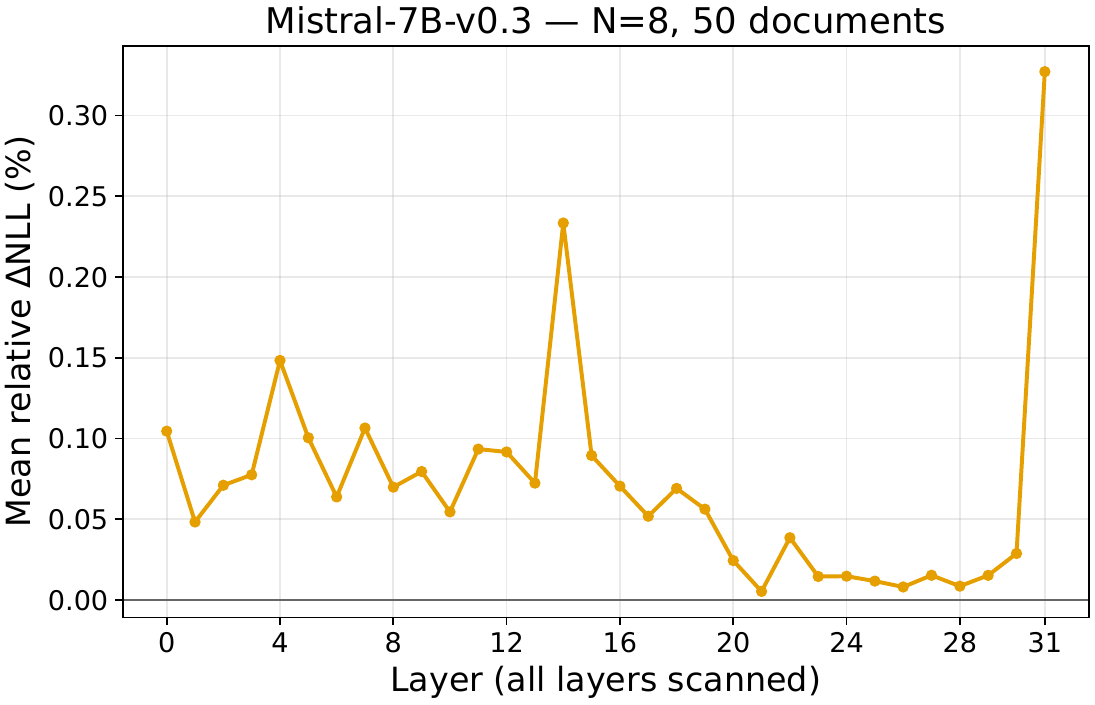}\\[-2pt]
{\small (d) Mistral-7B}
\end{minipage}

\caption{
\textbf{Sensitivity to deletion varies substantially across layers and heads.}
Reference selected-set sizes are 128 for Gemma, 16 for Qwen, and eight
for Mistral. All scanned layers and the ten largest measured Gemma head
effects are shown. Layer and head evaluations use 50 and 20 documents,
respectively. Separate interventions do not form an additive decomposition
of the full-model requirement.
}
\label{fig:localization-app}
\end{figure*}

\subsection{Exploratory prediction of effective set size}
\label{app:predictors}

The geometry--loss comparison in Section~\ref{sec:exp-geometry} motivates
testing whether diagnostics that include discarded contributions and loss
sensitivity are more informative than geometric separation alone. This
exploratory analysis uses the four geometry models, 20 OpenWebText
documents per model, $L=1024$, both rankings, selected-set sizes
$N\in\{8,16,32,64,128,256\}$, and relative NLL tolerances of 5\%
and 10\%. It addresses this fixed-length setting.

Diagnostics are measured in full-model activations across heads and query
positions. They include geometric separation and margins, discarded mass,
discarded-vector norms and energy, tail coherence from
Appendix~\ref{app:local-bounds}, and the sensitivity statistic below.
Parameters are frozen, with gradients enabled at embedding inputs so that
activation gradients can be computed without allocating parameter gradients.

\paragraph{Sensitivity to the discarded contribution.}
Let $z_{\ell ht}$ be the attention output before its output projection,
and let $e_{\ell ht}^{(r,N)}$ be the detached discarded sum. Output-head
targets are partitioned into chunks $c$ of 64 positions. Each chunk
contributes $\ell_c=(L-1)^{-1}\sum_{t\in c}\ell_t$ to mean sequence
NLL. Separate backward calls produce the implemented statistic
\begin{equation}
S_{\ell ht}^{(r,N)}=
\sum_c\left|\left\langle
\nabla_{z_{\ell ht}}\ell_c,e_{\ell ht}^{(r,N)}
\right\rangle\right|.
\label{eq:implemented-sensitivity}
\end{equation}
The absolute value is taken before summing over chunks. Consequently,
this statistic can differ from
$|\langle\nabla_z\sum_c\ell_c,e\rangle|$ and can depend on chunk size
even when total NLL is unchanged. The discarded vector is stored in FP16
and converted to the gradient dtype when the inner product is computed.
There is no additional normalization across heads.

For each layer and document, the mean and 95th percentile are computed
over the flattened head--query measurements. These statistics are then
averaged across layers and documents. Thus the reported P95 is an average
of within-layer, within-document percentiles. The same reduction is used
for the other predictors, and unavailable or nonfinite cases are excluded
with their counts reported.

\paragraph{Leave-one-model-out evaluation.}
For each ranking, tolerance, and scalar diagnostic, three models train a
one-dimensional threshold classifier for
$\mathbf1\{\Delta\NLL_{\rm rel}(N)\le\tau\}$. The search considers
both threshold directions, midpoints between sorted finite feature values,
and thresholds just outside the observed range using an offset of
$10^{-12}$. Ties are resolved lexicographically by classification error,
threshold, and direction. The predicted set size for the held-out model
is the smallest tested $N$ classified as passing.

Prediction is evaluated against the smallest passing measured size using
$|\log_2(N_{\rm pred}/N_{\rm measured})|$. If the classifier predicts
no passing size, the case is left unresolved. The raw-$N$ baseline uses
the same classifier and folds with $N$ as its feature, giving mean error
1.75 over all 16 model--ranking--tolerance cases. Both the mean and P95
sensitivity statistics give mean error 0.75 over all 16 cases, with
maximum error one. Figure~\ref{fig:e7-predictor} reports the other
diagnostics and their valid-case counts.

\begin{figure}[!htbp]
\centering
\includegraphics[width=0.92\linewidth]
{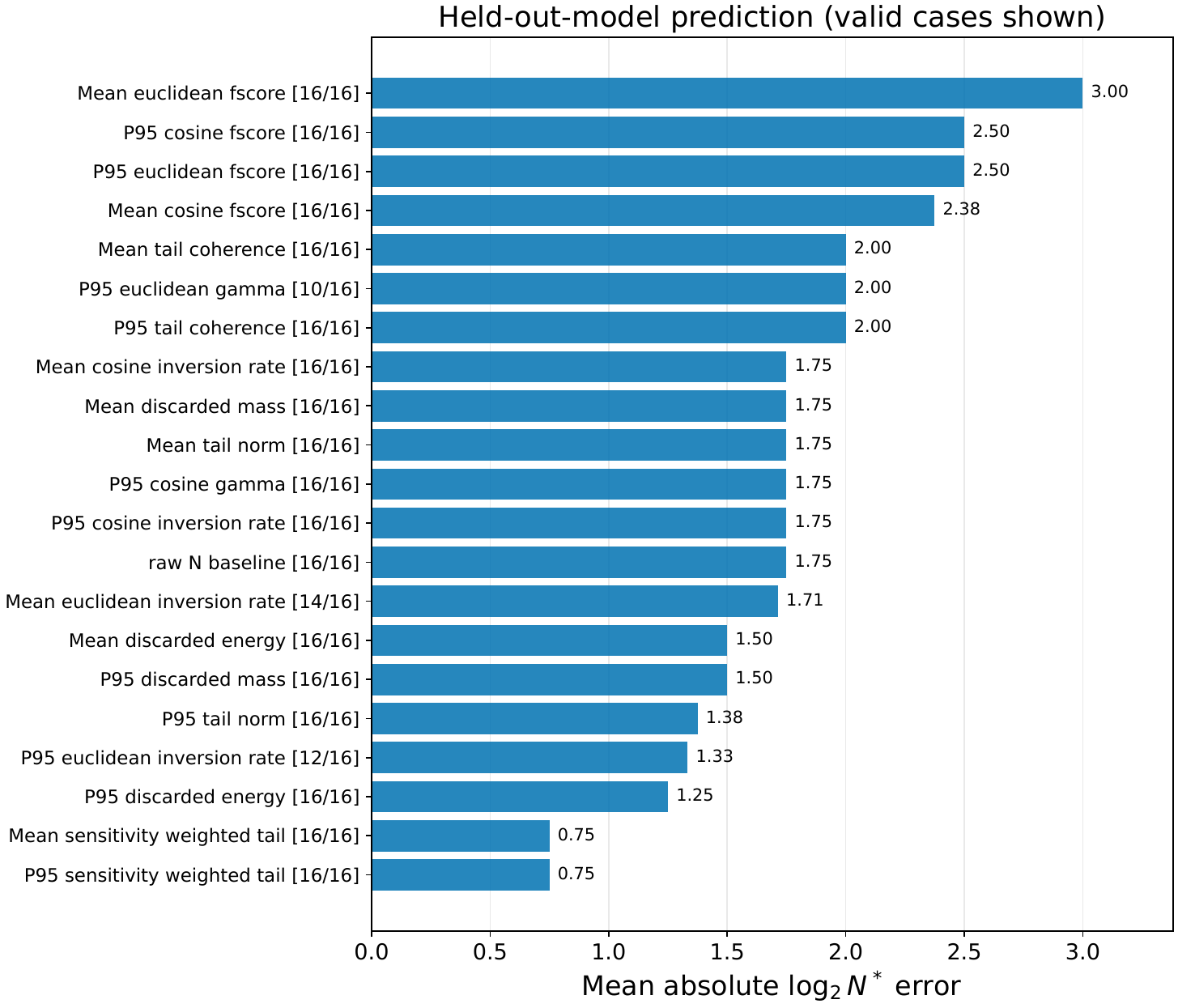}
\caption{
\textbf{Loss-sensitive diagnostics improve exploratory prediction in this sample.}
Mean absolute $\log_2$ error in effective set size under leave-one-model-out
evaluation. Lower is better. Labels report valid cases out of 16.
Diagnostics with different unresolved-case patterns are not evaluated on
identical sets of cases.
}
\label{fig:e7-predictor}
\end{figure}

The result suggests that loss sensitivity provides useful information
beyond selected-set geometry in this sample. The analysis has only four
models, 20 documents per model, and a coarse set-size grid. Selecting
the best diagnostic from these results also requires a separate evaluation
to establish predictive performance after model selection.

\subsection{Positional selection at matched set sizes}
\label{app:kv-baseline}

Section~\ref{sec:competition} attributes part of the selected-set
requirement to recovering support among competing tokens. A positional
control tests whether retaining the initial token and recent context can
satisfy the same answer-loss criterion. For a prefix longer than $N$,
the rule retains source index zero and the $N-1$ most recent visible
positions, including the current query. Shorter prefixes retain every
source. The tested sizes are $N\in\{16,32,64,128,256\}$.

The mask is applied at every query, head, and layer without renormalization
or retraining. Dense probabilities are computed before masking, as in
the main intervention. The comparison uses the same QA prompt, candidate
scoring, and sampled example IDs for Qwen-2.5-7B and Mistral-7B at
0K, 2K, and 4K backgrounds.

Qwen has no passing positional-control size through 256 at the 0.10-nat
tolerance in these conditions. Mistral passes at 32 at 0K and has no
passing size through 256 at 2K or 4K. Figure~\ref{fig:e9-kv} compares
these curves with attention-ranked selection.

\begin{figure*}[!htbp]
\centering
\begin{minipage}{0.49\textwidth}\centering
\includegraphics[width=\linewidth]
{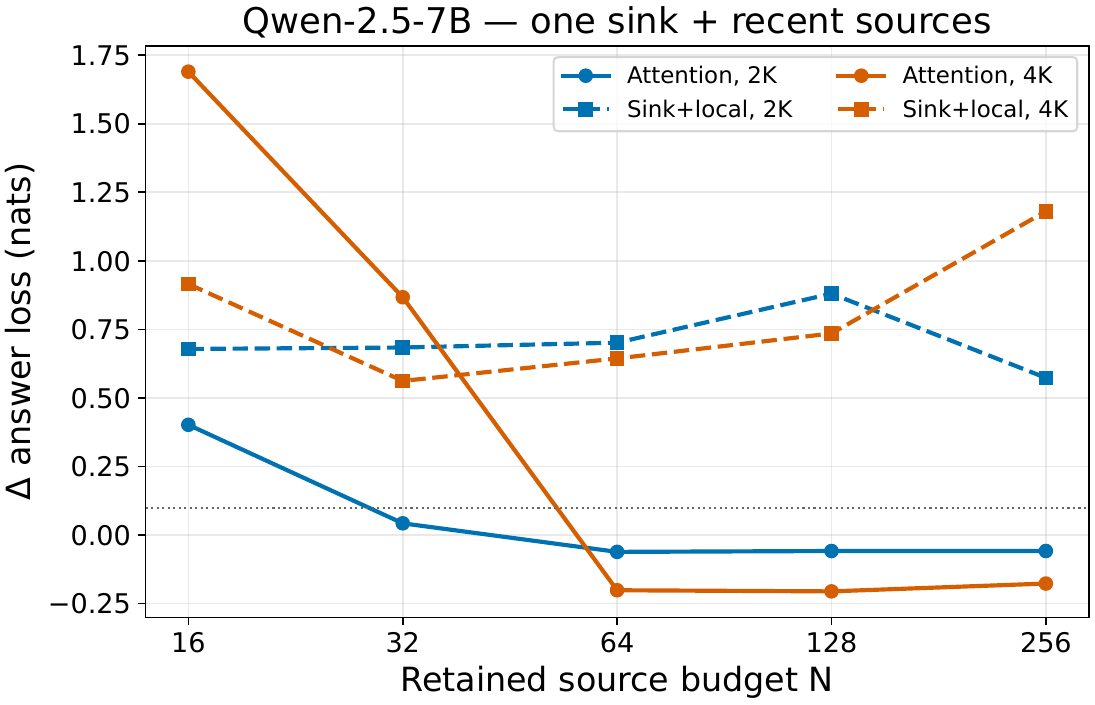}\\[-2pt]
{\small (a) Qwen-2.5-7B}
\end{minipage}\hfill
\begin{minipage}{0.49\textwidth}\centering
\includegraphics[width=\linewidth]
{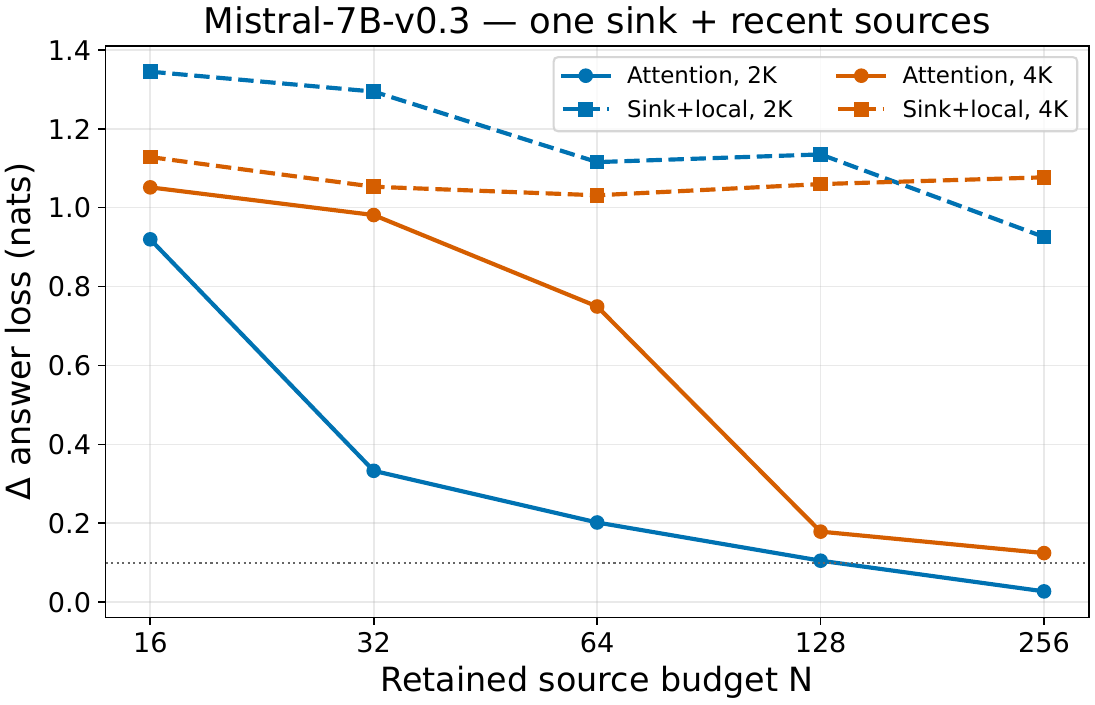}\\[-2pt]
{\small (b) Mistral-7B}
\end{minipage}
\caption{
\textbf{Positional and attention-ranked selection have different functional effects.}
The positional mask retains one initial source and $N-1$ recent sources.
Both rules use deletion without renormalization at the same set sizes.
At 2K and 4K, the positional control does not reach the 0.10-nat criterion
within the tested grid. The experiment evaluates selection after computing
dense attention.
}
\label{fig:e9-kv}
\end{figure*}

Support outside the retained initial and recent positions is excluded as
a direct source at the answer query. Earlier representations can still
carry support information, and other prompt cues can affect the answer.
The comparison therefore evaluates the functional consequences of the two
selection rules under the stated intervention. Because it computes dense
attention before masking, it does not assess the efficiency of a streaming
cache implementation.

\section{Conditional explanations of context-dependent set size}
\label{app:context-width-theory}

Sections~\ref{sec:context-length}--\ref{sec:aggregation} identify context,
competition, and aggregation as factors in the effective attention set size.
We develop conditional models that connect these observations to useful-set
recovery and preservation of the weighted sum. The models distinguish the
number of tokens needed to recover a specified set, retain attention mass,
preserve a local output, and satisfy a task-loss tolerance. Relating these
quantities requires the assumptions stated in each result. The experiments
do not establish those assumptions for individual heads or an asymptotic
law beyond the measured context lengths.

\subsection{Competition with a fixed required set}
\label{app:theory-rank-competition}

The fixed-support experiment in Section~\ref{sec:competition} motivates a
ranking model in which required information remains fixed as competitors
are added. Consider one query with $L$ candidate positions and a specified
set of $1\le K<L$ required positions. This set may represent annotated
support or the hypothesized useful set, depending on the application.
Rank positions by scalar scores, using pre-softmax logits for attention
ranking. Assume no score ties, let $s_*$ be the lowest required score,
and write $D=L-K$ for the number of competitors.

\begin{proposition}[Set size for complete recovery under competition]
\label{prop:context-recall-width}
Treat the required scores as fixed. Suppose competitor scores
$X_1,\ldots,X_D$ are independent and identically distributed, with
$p_L=\Prb(X_j>s_*)$. The smallest Top-$N$ set containing every required
position has size
\begin{equation}
N_{\rm rec}(L)=K+B_L,\qquad
B_L\sim\operatorname{Binomial}(D,p_L).
\label{eq:context-recall-binomial}
\end{equation}
Hence $\mathbb E[N_{\rm rec}(L)]=K+Dp_L$. For $0<\delta<1$, with
probability at least $1-\delta$,
\begin{equation}
N_{\rm rec}(L)\le
K+Dp_L+\sqrt{2Dp_L\log(1/\delta)}
+\frac23\log(1/\delta).
\label{eq:context-recall-bound}
\end{equation}
If $K$ is fixed and $p_L\to p\in(0,1)$, then
$N_{\rm rec}(L)/L\to p$ in probability.
\end{proposition}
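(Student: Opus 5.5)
The plan has three parts: an exact identity, a Bernstein tail bound, and a weak law of large numbers. The starting point is a deterministic identity for $N_{\rm rec}(L)$. Since there are no ties, Top-$N$ returns the $N$ highest-scoring positions. A Top-$N$ set contains every required position exactly when it contains the position with the lowest required score $s_*$. That happens exactly when $N$ is at least the rank of that position.

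The rank of the weakest required position is one plus the number of positions scoring above $s_*$. These are the other $K-1$ required positions, all of which score above $s_*$ by definition, together with the competitors satisfying $X_j>s_*$. Hence
\[
N_{\rm rec}(L)=K+\sum_{j=1}^D\mathbf 1\{X_j>s_*\}.
\]
With the required scores fixed, the indicators are i.i.d.\ Bernoulli$(p_L)$, so $B_L\sim\operatorname{Binomial}(D,p_L)$. Linearity of expectation then gives $\mathbb E[N_{\rm rec}(L)]=K+Dp_L$.

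For the tail bound, we apply Bernstein's inequality to the centered variables $Y_j=\mathbf 1\{X_j>s_*\}-p_L$. These satisfy $|Y_j|\le1$, and their total variance is $\sigma^2=Dp_L(1-p_L)\le Dp_L$. Bernstein gives
\[
\Prb(B_L-Dp_L\ge t)\le\exp\!\Bigl(-\frac{t^2}{2\sigma^2+2t/3}\Bigr).
\]
We set the right-hand side equal to $\delta$ and write $u=\log(1/\delta)$. The resulting quadratic $t^2-\tfrac23ut-2\sigma^2u=0$ has positive root
\[
t=\tfrac13u+\sqrt{u^2/9+2\sigma^2u}\le\tfrac23u+\sqrt{2\sigma^2u},
\]
using $\sqrt{a+b}\le\sqrt a+\sqrt b$. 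Replacing $\sigma^2$ by the larger $Dp_L$ gives equation~\ref{eq:context-recall-bound}. The one point needing care is the strict versus non-strict inequality in the tail event. The bound is monotone in $t$, so choosing the stated threshold suffices.

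For the limit, write $N_{\rm rec}/L=K/L+(D/L)(B_L/D)$. With $K$ fixed we have $K/L\to0$ and $D/L\to1$. Chebyshev's inequality gives
\[
\Prb\bigl(|B_L/D-p_L|>\epsilon\bigr)\le\frac{p_L(1-p_L)}{D\epsilon^2}\le\frac{1}{4D\epsilon^2}\to0.
\]
Here $p_L$ varies with $L$, so we have a triangular array rather than a fixed i.i.d.\ sequence. The Chebyshev bound is uniform in $p_L$, so this causes no difficulty. Combined with $p_L\to p$, it gives $B_L/D\to p$ in probability, and Slutsky-type arithmetic yields $N_{\rm rec}(L)/L\to p$.

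I expect the main obstacle to be modest. It is getting the exact constants $\sqrt{2Dp_L\log(1/\delta)}$ and $\tfrac23\log(1/\delta)$ out of Bernstein's inequality; the identity and the limit are routine.
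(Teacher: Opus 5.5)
Your proposal is correct and follows the paper's proof essentially step for step: the rank identity for the weakest required position, independence for the binomial law, Bernstein's inequality with $\operatorname{Var}(B_L)\le Dp_L$, and a Chebyshev-type variance bound (the paper's $\operatorname{Var}(B_L/L)\le D/(4L^2)$) together with $K/L\to0$ for the limit. You also write out the quadratic-root algebra and the strict-versus-weak tail event, which the paper leaves implicit, and this correctly produces the stated constants $\sqrt{2Dp_L\log(1/\delta)}$ and $\tfrac23\log(1/\delta)$.
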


\begin{proof}
The weakest required position has $K-1$ required positions and
$\sum_{j=1}^D\mathbf1\{X_j>s_*\}$ competitors above it. Its rank is
therefore $K+B_L$, the necessary and sufficient size for complete recovery.
Independence gives the binomial distribution. Bernstein's inequality and
$\operatorname{Var}(B_L)\le Dp_L$ imply
Equation~\ref{eq:context-recall-bound}. Finally,
$\operatorname{Var}(B_L/L)\le D/(4L^2)\to0$, while
$\mathbb E[B_L/L]\to p$ and $K/L\to0$.
\end{proof}

For constant exceedance probability $p$, the expected required set size is
affine in context length:
\begin{equation}
\mathbb E[N_{\rm rec}(L)]=pL+(1-p)K.
\label{eq:context-recall-affine}
\end{equation}
If $p_L\asymp L^{-\gamma}$ for $0<\gamma<1$, the expected excess
above $K$ is of order $L^{1-\gamma}$. Bounded expected set size at
fixed $K$ requires $Dp_L=O(1)$. The rank identity itself holds without
independence, although the binomial law and probability bound use the
distributional assumptions.

The probability $p_L$ concerns the weakest required token. The measured
pairwise probability in Appendix~\ref{app:support-details} averages over
support tokens and can therefore differ from $p_L$. For the useful-set
inversion count in Appendix~\ref{app:topk-recovery}, define
$\bar p_L=\Inv_r/[K(L-K)]$. Requiring
$\Inv_r\le\varepsilon K^2$ at every length with fixed $K$ would require
$\bar p_L\le\varepsilon K/(L-K)$. Hypothesis~\ref{hyp:latent-ranking}
is imposed on a reference distribution and makes no such uniform
requirement. This distinction permits the accumulation of ranking errors
as the candidate population grows. Recovering the required positions
still leaves their weights and downstream use to be assessed by the
functional intervention.

\subsection{Attention-mass retention under a stable score distribution}
\label{app:theory-mass-width}

The normalization results in Section~\ref{sec:aggregation} motivate
examining how many highly scored tokens are needed to retain a fixed
fraction of attention mass. For one local operation, write
$W_i=\exp(s_i)>0$ and $\alpha_i=W_i/\sum_{j=1}^L W_j$.
Let $S_{L,N}$ contain the $N$ largest weights, and define
\begin{equation}
M_L(N)=\sum_{i\in S_{L,N}}\alpha_i,\qquad
N_{\varepsilon}^{\rm mass}(L)
=\min\{N\in\{1,\ldots,L\}:M_L(N)\ge1-\varepsilon\},
\label{eq:context-mass-budget}
\end{equation}
where $0<\varepsilon<1$ is a discarded-mass tolerance.

\begin{proposition}[Limiting retained-mass profile]
\label{prop:context-mass-profile}
Let $W_1,W_2,\ldots$ be independent copies of a positive random variable
$W$ with a continuous distribution and $0<\mathbb E[W]<\infty$.
For $0<\rho<1$, let $q_{1-\rho}$ be its $(1-\rho)$-quantile.
Then, almost surely,
\begin{equation}
M_L(\lfloor\rho L\rfloor)\longrightarrow
m(\rho):=\frac{\mathbb E[W\mathbf1\{W\ge q_{1-\rho}\}]}
{\mathbb E[W]}.
\label{eq:context-mass-profile}
\end{equation}
The extension $m(0)=0$, $m(1)=1$ is continuous and strictly increasing.
If $m(\rho_\varepsilon)=1-\varepsilon$, then
\begin{equation}
\frac{N_{\varepsilon}^{\rm mass}(L)}{L}
\longrightarrow\rho_\varepsilon\in(0,1)
\quad\text{almost surely}.
\label{eq:context-linear-mass}
\end{equation}
\end{proposition}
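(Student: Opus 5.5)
The plan is to reduce the top-$\lfloor\rho L\rfloor$ mass to an empirical truncated mean and then invert the limiting profile. Write
\[
M_L(\lfloor\rho L\rfloor)=\frac{L^{-1}\sum_{i\in S_{L,\lfloor\rho L\rfloor}}W_i}{L^{-1}\sum_{i=1}^L W_i}.
\]
The strong law gives that the denominator converges almost surely to $\mathbb E[W]\in(0,\infty)$. For the numerator, continuity of the distribution means the weights are almost surely distinct, so $S_{L,N}$ is the set of indices with $W_i\ge W_{(L-N+1:L)}$, the $N$-th largest order statistic. I will show that the numerator converges almost surely to $\mathbb E[W\mathbf 1\{W\ge q_{1-\rho}\}]$.

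For the numerator, I compare the sum over the random set with the sum over the deterministic set $\{i:W_i\ge q_{1-\rho}\}$. The latter converges almost surely by the strong law applied to $W\mathbf 1\{W\ge q_{1-\rho}\}$, which is integrable. By continuity of the distribution, $\Prb(W\ge q_{1-\rho})=\rho$. Glivenko--Cantelli then gives that the empirical $(1-\rho)$-quantile converges to $q_{1-\rho}$ whenever the quantile is unique.

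To avoid needing uniqueness, I sandwich. For $\eta>0$ pick thresholds $a<q_{1-\rho}$ and $b>q_{1-\rho}$ with $\Prb(W\ge a)<\rho+\eta$ and $\Prb(W\ge b)>\rho-\eta$. Eventually the top-$\lfloor\rho L\rfloor$ set lies between $\{W_i\ge b\}$ and $\{W_i\ge a\}$ almost surely, because the counts of indices above $a$ and above $b$ satisfy the strong law. Since all terms are positive, the numerator is sandwiched between $L^{-1}\sum W_i\mathbf 1\{W_i\ge b\}$ and $L^{-1}\sum W_i\mathbf 1\{W_i\ge a\}$. Their limits are $\mathbb E[W\mathbf 1\{W\ge b\}]$ and $\mathbb E[W\mathbf 1\{W\ge a\}]$, which differ by $\mathbb E[W\mathbf 1\{a\le W<b\}]$. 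This difference tends to $0$ as $\eta\to0$ by dominated convergence, since $\Prb(a\le W<b)\le2\eta$ and $W$ is integrable. The sandwich is the step I expect to need the most care: the order statistic sits at a possibly flat part of the CDF, and the almost-sure events must be intersected over countably many rational $\eta$.

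For the properties of $m$, I write $m(\rho)=\mathbb E[W\mathbf 1\{F(W)\ge1-\rho\}]/\mathbb E[W]$, where $F$ is the CDF, so that $F(W)\sim U(0,1)$. Equivalently,
\[
m(\rho)=\frac{1}{\mathbb E[W]}\int_{1-\rho}^1 F^{-1}(u)\,du.
\]
This is continuous in $\rho$ because it is an integral of an integrable function. It is strictly increasing because $F^{-1}>0$ almost everywhere by positivity of $W$. It satisfies $m(0)=0$ and $m(1)=1$. Hence a unique $\rho_\varepsilon\in(0,1)$ solves $m(\rho_\varepsilon)=1-\varepsilon$.

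For the second limit, fix $\rho_-<\rho_\varepsilon<\rho_+$, taking countably many rational choices. Almost surely, eventually $M_L(\lfloor\rho_-L\rfloor)<1-\varepsilon<M_L(\lfloor\rho_+L\rfloor)$, since the limits are $m(\rho_\mp)$, which lie strictly on either side of $1-\varepsilon$. Because $M_L(N)$ is nondecreasing in $N$, this gives $\lfloor\rho_-L\rfloor<N^{\rm mass}_\varepsilon(L)\le\lfloor\rho_+L\rfloor$ eventually. Letting $\rho_\pm\to\rho_\varepsilon$ along rationals yields $N^{\rm mass}_\varepsilon(L)/L\to\rho_\varepsilon$ almost surely.
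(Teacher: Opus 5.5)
Your proof is correct. The bracketing argument for $N^{\rm mass}_\varepsilon(L)/L$ and the quantile-integral treatment of continuity and strict monotonicity of $m$ match the paper. The route differs in the main analytic step, the numerator limit.

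The paper writes the top-$\lfloor\rho L\rfloor$ sum as an upper-trimmed integral of the empirical quantile function. It truncates at a level $T$, so that the empirical quantile functions of $\min(W_i,T)$ converge almost everywhere and bounded convergence applies. It then controls the truncation error by $L^{-1}\sum_i(W_i-T)_+\to\mathbb E[(W-T)_+]$, which vanishes as $T\to\infty$.

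You instead sandwich the random top-$N$ set between two deterministic level sets. You need only the strong law for $\mathbf 1\{W\ge c\}$ and $W\mathbf 1\{W\ge c\}$ at countably many thresholds. Positivity gives the monotone comparison, and integrability of $W$ closes the band $\{a\le W<b\}$. Your route is more elementary: it uses no empirical quantile processes or truncation, and it sidesteps quantile uniqueness by working with level sets. The paper's route is the standard L-statistic argument. It yields the limit directly as $\int_{1-\rho}^1F_W^{-1}(u)\,du$, which it reuses for the properties of $m$; you reach that form separately through $F(W)\sim U(0,1)$.

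One point should be made explicit. The strong-law inclusion step needs the strict inequalities $\Prb(W\ge b)<\rho<\Prb(W\ge a)$. The condition $a<q_{1-\rho}<b$ alone does not ensure them when $F$ is constant at level $1-\rho$ on an interval. In that case the count of indices above $b$ need not eventually stay at or below $\lfloor\rho L\rfloor$. Since $F$ is continuous, you can choose $a,b$ with $\Prb(W\ge a)\in(\rho,\rho+\eta)$ and $\Prb(W\ge b)\in(\rho-\eta,\rho)$, and then $a<q_{1-\rho}<b$ holds automatically.
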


\begin{proof}
The strong law gives $L^{-1}\sum_iW_i\to\mathbb E[W]$.
For any fixed $T$, the bounded empirical quantile functions of
$\min(W_i,T)$ converge almost everywhere to their population quantile
function. Bounded convergence then gives convergence of the upper-$\rho$
trimmed sum divided by $L$. The difference from the unbounded trimmed
sum is at most $L^{-1}\sum_i(W_i-T)_+$, whose almost-sure limit is
$\mathbb E[(W-T)_+]$. This quantity tends to zero as $T\to\infty$,
which proves the numerator limit in Equation~\ref{eq:context-mass-profile}.

The limiting numerator can also be written as
$\int_{1-\rho}^1F_W^{-1}(u)\,du$. Finite expectation implies
continuity, and positivity implies strict increase. For sufficiently small
$h>0$, the retained-mass limits at $\rho_\varepsilon-h$ and
$\rho_\varepsilon+h$ lie below and above $1-\varepsilon$, respectively.
Monotonicity of $M_L(N)$ brackets the normalized required size between
these fractions. Taking $h\downarrow0$ proves
Equation~\ref{eq:context-linear-mass}.
\end{proof}

The result permits strongly nonuniform attention while requiring a stable
distribution of unnormalized weights. Scores that sharpen with $L$,
a fixed set of positions retaining nonvanishing mass, or a changing
mixture of local and distant sources can violate this assumption.
Extending the result to dependent scores would require convergence of
their empirical distribution and control of the weight tails.

\paragraph{Gaussian-logit example.}
For $s_i\sim\mathcal N(\mu_s,\sigma^2)$ with fixed $\sigma>0$,
completing the square in the Gaussian density gives
\begin{equation}
m(\rho)=\Phi\!\left(\sigma+\Phi^{-1}(\rho)\right),\qquad
\rho_\varepsilon=
\Phi\!\left(\Phi^{-1}(1-\varepsilon)-\sigma\right),
\label{eq:context-lognormal-mass}
\end{equation}
where $\Phi$ is the standard normal distribution function. The common
mean $\mu_s$ cancels under softmax. This example shows how score spread
can determine the retained fraction under a stable distribution. It is
an illustrative calculation and is not fitted to the empirical logits
or loss-based set sizes.

\subsection{From retained mass to local output accuracy}
\label{app:theory-coherent-tail}

Mass retention helps explain a functional requirement only when it also
constrains the attention output. The upper bound in
Appendix~\ref{app:local-bounds} supplies a sufficient condition when
values are bounded. A necessary condition additionally requires control
of cancellation among discarded values. At fixed incoming activations,
write $z_L=\sum_i\alpha_i v_i$, $z_{L,N}=\sum_{i\in S_{L,N}}\alpha_i v_i$,
and $e_{L,N}=z_L-z_{L,N}$.

\begin{proposition}[Local set size under coherent values]
\label{prop:context-coherent-width}
Suppose constants $0<a\le B<\infty$, independent of $L$, and a unit
vector $u$ satisfy $\|v_i\|_2\le B$ and $\langle u,v_i\rangle\ge a$
for all candidate positions. Then
\begin{equation}
a[1-M_L(N)]\le\|e_{L,N}\|_2\le B[1-M_L(N)].
\label{eq:context-two-sided-error}
\end{equation}
For $0<\xi<a$, define
$N_{\xi}^{\rm del}(L)=\min\{N\in\{1,\ldots,L\}:\|e_{L,N}\|_2\le\xi\}$.
Then
\begin{equation}
N_{\xi/a}^{\rm mass}(L)\le N_{\xi}^{\rm del}(L)
\le N_{\xi/B}^{\rm mass}(L).
\label{eq:context-width-sandwich}
\end{equation}
Under the assumptions of Proposition~\ref{prop:context-mass-profile},
almost surely,
\begin{equation}
0<\rho_{\xi/a}
\le\liminf_{L\to\infty}\frac{N_{\xi}^{\rm del}(L)}L
\le\limsup_{L\to\infty}\frac{N_{\xi}^{\rm del}(L)}L
\le\rho_{\xi/B}<1.
\label{eq:context-linear-order}
\end{equation}
\end{proposition}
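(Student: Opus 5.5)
The argument has three steps: a two-sided deterministic bound on the deletion error, a comparison of the two minima it induces, and a limit taken through Proposition~\ref{prop:context-mass-profile}.

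\textbf{Two-sided bound.} Write $e_{L,N}=\sum_{i\notin S_{L,N}}\alpha_i v_i$. The upper bound is exactly Proposition~\ref{prop:local-bounds-app}: the triangle inequality with $\|v_i\|_2\le B$ gives $\|e_{L,N}\|_2\le B\sum_{i\notin S}\alpha_i=B[1-M_L(N)]$. For the lower bound, project onto the unit vector $u$ and use Cauchy--Schwarz:
\[
\|e_{L,N}\|_2\ge\langle u,e_{L,N}\rangle=\sum_{i\notin S}\alpha_i\langle u,v_i\rangle\ge a\sum_{i\notin S}\alpha_i=a[1-M_L(N)].
\]
This uses $\alpha_i\ge0$ and the coherence assumption. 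Coherence is the step that rules out cancellation among the discarded values.

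\textbf{Sandwich of minima.} Take $\xi<a$, so $\xi/a\in(0,1)$, and note $\xi/B\le\xi/a<1$. Also $1-M_L(N)$ is nonincreasing in $N$ and equals $0$ at $N=L$, so every minimum below is taken over a nonempty set.
\begin{itemize}
\item If $\|e_{L,N}\|_2\le\xi$, the lower bound gives $1-M_L(N)\le\xi/a$, so $N$ is feasible for $N^{\rm mass}_{\xi/a}$. Hence $N^{\rm mass}_{\xi/a}\le N^{\rm del}_\xi$.
\item If $1-M_L(N)\le\xi/B$, the upper bound gives $\|e_{L,N}\|_2\le\xi$. Applying this at $N=N^{\rm mass}_{\xi/B}$ shows it is feasible for the deletion criterion, so $N^{\rm del}_\xi\le N^{\rm mass}_{\xi/B}$.
\end{itemize}
This proves Equation~\ref{eq:context-width-sandwich}.

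\textbf{Asymptotics.} Divide the sandwich by $L$ and apply Proposition~\ref{prop:context-mass-profile} with $\varepsilon=\xi/a$ and $\varepsilon=\xi/B$. Both tolerances lie in $(0,1)$. The profile $m$ is continuous and strictly increasing from $0$ to $1$, so $\rho_\varepsilon$ exists, is unique, and lies in $(0,1)$. The proposition then gives $N^{\rm mass}_\varepsilon(L)/L\to\rho_\varepsilon$ almost surely. Since only two tolerances are involved, the intersection of the two almost-sure events still has probability one, and taking liminf and limsup through the sandwich yields Equation~\ref{eq:context-linear-order}.

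The ordering $\rho_{\xi/a}\le\rho_{\xi/B}$ needs no separate argument: it follows from $\xi/B\le\xi/a$, since $m$ is increasing and so $\rho_\varepsilon$ decreases in $\varepsilon$.

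One implicit point should be stated explicitly. The values $v_i$ may be arbitrary, even random, as long as the deterministic bounds $\|v_i\|_2\le B$ and $\langle u,v_i\rangle\ge a$ hold. The ranking in $S_{L,N}$ depends only on the weights, so the mass proposition applies without any assumption on the values.

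The only substantive step is the lower bound, which rests entirely on the uniform alignment $\langle u,v_i\rangle\ge a$. Everything else is monotone bookkeeping.
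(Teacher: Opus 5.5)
Your proof is correct and follows the paper's own argument. The lower bound comes from projection onto $u$ and the upper bound from the triangle inequality. The sandwich comes from showing each minimizer is feasible for the other criterion, which, as the paper notes, needs no monotonicity of $\|e_{L,N}\|_2$ in $N$. The limit comes from applying the retained-mass limits at the two tolerances $\xi/a$ and $\xi/B$, and your extra checks (nonemptiness via $N=L$, both tolerances lying in $(0,1)$, and the ordering $\rho_{\xi/a}\le\rho_{\xi/B}$) fill in details the paper leaves implicit.
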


\begin{proof}
Projection onto $u$ yields
$\|e_{L,N}\|_2\ge\langle u,e_{L,N}\rangle
\ge a\sum_{i\notin S_{L,N}}\alpha_i$, and the triangle inequality gives
the upper bound. Error at most $\xi$ requires discarded mass at most
$\xi/a$. Discarded mass at most $\xi/B$ is sufficient. These implications
give Equation~\ref{eq:context-width-sandwich}, even when the error norm
is nonmonotone in $N$. Applying the retained-mass limits proves
Equation~\ref{eq:context-linear-order}.
\end{proof}

The proposition establishes linear-order growth under a uniform coherence
condition. It does not specify a unique slope. The upper norm bound alone
gives no necessary set size because cancellation can make the discarded
sum small. The argument also applies to output-projected values
$W_O^{(h)}v_i$ when its assumptions hold in residual-stream space.
Connecting this local result to downstream loss requires an additional
sensitivity assumption.

\subsection{Renormalization and an explicit task-loss model}
\label{app:theory-renormalization}

Section~\ref{sec:aggregation} shows that renormalization can substantially
change the measured set size. Equation~\ref{eq:renorm-centroids} explains
the local effect through retained and discarded weighted means. For
$1\le N<L$, write $M=M_L(N)$, $\mu_S=z_{L,N}/M$, and
$\mu_T=(z_L-z_{L,N})/(1-M)$. Then
\begin{equation}
z_L-z_{L,N}=(1-M)\mu_T,\qquad
z_L-\frac{z_{L,N}}M=(1-M)(\mu_T-\mu_S).
\label{eq:context-renorm-errors}
\end{equation}
If $\mu_T\ne0$, the ratio of renormalization error to deletion error
is exactly $\|\mu_T-\mu_S\|_2/\|\mu_T\|_2$. A component shared
by both means contributes to deletion error and cancels from their
difference. The following example connects this possibility directly to
a loss tolerance.

\begin{proposition}[Growing loss-based set size with identical values]
\label{prop:context-toy-loss}
Let every value equal a fixed nonzero vector $\mu$. A binary readout
assigns the correct-class logit
$\beta\langle\mu,z\rangle/\|\mu\|_2^2$ and the other-class logit
zero, where $\beta>0$. Its loss at $z=M\mu$ is
$\ell(M)=\log(1+\exp(-\beta M))$. For
$0<\tau<\log2-\ell(1)$, define
\begin{equation}
c_\tau=-\frac1\beta
\log\!\left(\exp(\ell(1)+\tau)-1\right)\in(0,1).
\label{eq:context-toy-mass-threshold}
\end{equation}
The smallest deletion set satisfying
$\ell(M_L(N))-\ell(1)\le\tau$ has size
$N_{1-c_\tau}^{\rm mass}(L)$. Under
Proposition~\ref{prop:context-mass-profile}, its ratio to $L$ converges
to the positive solution of $m(\rho)=c_\tau$. Renormalized retention
has zero loss change for every $N\ge1$.
\end{proposition}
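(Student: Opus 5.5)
Since every value equals $\mu$, the deletion output is $z_{L,N}=\sum_{i\in S_{L,N}}\alpha_i\mu=M_L(N)\mu$. The correct-class logit is therefore $\beta\langle\mu,M\mu\rangle/\|\mu\|_2^2=\beta M$ with $M=M_L(N)$, and the other logit is zero. The binary cross-entropy is then $\ell(M)=\log(1+\exp(-\beta M))$. The full model has $M=1$ and loss $\ell(1)$. Renormalized retention produces $z_{L,N}/M=\mu=z_L$ for every $N\ge1$, since $M>0$ because all weights are positive, so its loss change is exactly zero.

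\emph{Reduce the loss criterion to a mass threshold.} The function $\ell$ is continuous and strictly decreasing on $[0,1]$, with $\ell(0)=\log 2$. Hence $\ell(M)-\ell(1)\le\tau$ is equivalent to $\exp(-\beta M)\le\exp(\ell(1)+\tau)-1$, which is equivalent to $M\ge c_\tau$.

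\emph{Check that $c_\tau\in(0,1)$.} The assumption $\tau<\log2-\ell(1)$ gives $\exp(\ell(1)+\tau)-1<1$, so $c_\tau>0$. Since $\tau>0$, $\exp(\ell(1)+\tau)-1>\exp(\ell(1))-1=\exp(-\beta)$, so $c_\tau<1$.

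\emph{Identify the smallest set.} The admissible sizes are exactly those with $M_L(N)\ge 1-(1-c_\tau)$. By the definition in Equation~\ref{eq:context-mass-budget} with $\varepsilon=1-c_\tau\in(0,1)$, the smallest one is $N^{\rm mass}_{1-c_\tau}(L)$. This set lies in $\{1,\ldots,L\}$ because $M_L(L)=1\ge c_\tau$. Monotonicity of $M_L(N)$ in $N$ makes the admissible set an upper interval, although the argument only needs the minimum.

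\emph{Pass to the limit.} Apply Proposition~\ref{prop:context-mass-profile} with $\varepsilon=1-c_\tau$. The profile $m$ is continuous and strictly increasing from $0$ to $1$, so there is a unique $\rho\in(0,1)$ with $m(\rho)=c_\tau$. The proposition then gives $N^{\rm mass}_{1-c_\tau}(L)/L\to\rho$ almost surely.

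The identical-value assumption means the score distribution of the $W_i$ is unconstrained here, beyond what that proposition requires. The only delicate point is the interval bookkeeping for $c_\tau$, including the strict inequalities that keep $\varepsilon$ strictly inside $(0,1)$ so that Proposition~\ref{prop:context-mass-profile} applies. There is no substantive obstacle.
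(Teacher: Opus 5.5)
Your proposal is correct and follows the paper's proof. Both compute the deletion output $M_L(N)\mu$ and the renormalized output $\mu$, solve $\log(1+e^{-\beta M})\le\ell(1)+\tau$ to obtain $M\ge c_\tau$, and apply Proposition~\ref{prop:context-mass-profile} with $\varepsilon=1-c_\tau$. You also supply details the paper leaves implicit, namely the check that $c_\tau\in(0,1)$ and the positivity that justifies taking logarithms.
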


\begin{proof}
The full output is $\mu$, deletion gives $M_L(N)\mu$, and renormalization
restores $\mu$. Solving
$\log(1+e^{-\beta M})\le\ell(1)+\tau$ gives $M\ge c_\tau$.
The retained-mass result then establishes the limit.
\end{proof}

For uniform weights, a direct calculation gives
$N_{\rm del}(L)=\lceil c_\tau L\rceil$ and $N_{\rm ren}(L)=1$.
Growing loss-based set size can therefore arise from preserving output
amplitude even when every value carries the same representation. This
construction demonstrates why an increasing deletion requirement alone
cannot identify growth in distinct useful information.

For a general downstream loss that is locally $C$-Lipschitz,
$|\ell(z)-\ell(\widetilde z)|\le C\|z-\widetilde z\|_2$ provides
a sufficient local guarantee. The full interventions also change earlier
activations, later scores, and selected sets. The local comparison in
Equation~\ref{eq:context-renorm-errors} holds for fixed activations and
the same subset, so extending it to the full-model quantity
$N_\tau^{*,(r)}$ requires additional assumptions.

\subsection{Dependence on the scored query positions}
\label{app:theory-target-positions}

Appendix~\ref{app:context-length-results} reports smaller effective set
sizes under all-token scoring than under matched final-target scoring.
To isolate one reason for this difference, index queries by visible-source
count $t\in\{1,\ldots,L\}$. The one-token indexing difference from
causal language-model scoring does not affect the following limits.
Assume that expected absolute loss degradation after the full intervention
has the form
\begin{equation}
d_t(N)=g(N/t),
\label{eq:context-query-profile}
\end{equation}
where $g:[0,\infty)\to[0,\infty)$ is bounded, continuous,
nonincreasing, and zero on $[1,\infty)$. This is an assumed common
degradation profile across query positions. It does not follow from the
local bounds above or require independent interventions across queries.

\begin{proposition}[Effect of scoring positions]
\label{prop:context-query-averaging}
For $N=\lfloor\rho L\rfloor$, $0<\rho<1$, and integers
$1\le w_L\le L$, define
\[
D_{\rm all}(N,L)=\frac1L\sum_{t=1}^L g(N/t),\qquad
D_{\rm tail}(N,L)=\frac1{w_L}
\sum_{t=L-w_L+1}^L g(N/t).
\]
If $w_L/L\to0$, then
\begin{equation}
D_{\rm tail}(N,L)\to g(\rho),\qquad
D_{\rm all}(N,L)\to A(\rho):=
\int_0^1g(\rho/u)\,du\le g(\rho).
\label{eq:context-query-limits}
\end{equation}
For an absolute tolerance $\tau$ with unique interior crossings of
$g(\rho)=\tau$ and $A(\rho)=\tau$, the smallest passing set sizes
divided by $L$ converge to the corresponding solutions
$\rho_{\rm tail}$ and $\rho_{\rm all}$. These satisfy
$\rho_{\rm all}\le\rho_{\rm tail}$.
\end{proposition}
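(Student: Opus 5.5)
The proof treats the two averages separately, as Riemann sums and as a near-constant window average, and then passes to crossings using monotonicity of $g$. Throughout, $N=\lfloor\rho L\rfloor$, so $N/L\to\rho$.

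\emph{Tail limit.} For $t\in\{L-w_L+1,\ldots,L\}$ we have $t/L\in[1-w_L/L,1]$. Together with $w_L/L\to0$ this gives
\[
\sup_{t}\,\bigl|N/t-\rho\bigr|\to0 .
\]
The function $g$ is continuous, hence uniformly continuous on a compact neighborhood of $\rho$. Every term $g(N/t)$ in the window therefore lies within $o(1)$ of $g(\rho)$, and so does their average $D_{\rm tail}$.

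\emph{All-token limit.} Write $D_{\rm all}(N,L)=\frac1L\sum_{t=1}^L h_L(t/L)$ with $h_L(u)=g(N/(Lu))$. On each interval $u\in((t-1)/L,t/L]$, compare $h_L(t/L)$ with $g(\rho/u)$.

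The integrand $u\mapsto g(\rho/u)$ is bounded by $\|g\|_\infty$, and it is continuous on $(0,1]$. Because $g$ is nonincreasing, it is monotone in $u$. A bounded monotone function is Riemann integrable on $[0,1]$, so the Riemann sums of $g(\rho/u)$ converge to $A(\rho)$.

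To absorb the discrepancy between $N/L$ and $\rho$, fix $\eta>0$ and split at $u=\eta$.
\begin{itemize}
\item On $[\eta,1]$, the arguments $N/t$ and $\rho/u$ stay in a compact set, and $N/L-\rho=O(1/L)$. Uniform continuity of $g$ gives uniform closeness there.
\item The contribution from $t\le\eta L$ is at most $\eta\|g\|_\infty$ for both the sum and the integral.
\end{itemize}
Letting $\eta\to0$ yields $D_{\rm all}\to A(\rho)$. A sandwich between Riemann sums, using monotonicity of $g$ in $t$, would also work. The inequality $A(\rho)\le g(\rho)$ is immediate: $\rho/u\ge\rho$ for $u\in(0,1]$ and $g$ is nonincreasing, so the integrand is pointwise at most $g(\rho)$.

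\emph{Crossings.} This is the delicate step. The claim concerns the smallest integer $N$ with $D(N,L)\le\tau$, which I would call $N^*(L)$, not a fixed fraction. Here $D(N,L)$ is nonincreasing in $N$ for both protocols, since each term $g(N/t)$ is.

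Both limit functions $g(\rho)$ and $A(\rho)$ are continuous and nonincreasing. Monotonicity of $A$ follows from that of the integrand in $\rho$. Uniqueness of the interior crossing $\rho_\star$ then means:
\begin{itemize}
\item for $\rho<\rho_\star$, the limit exceeds $\tau$;
\item for $\rho>\rho_\star$, the limit is strictly below $\tau$.
\end{itemize}
I would interpret ``unique crossing'' this way, possibly as a transversality assumption that needs to be made explicit.

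Fix $h>0$. Pointwise convergence at $\rho_\star\pm h$ shows that for large $L$, $D(\lfloor(\rho_\star-h)L\rfloor,L)>\tau$ and $D(\lfloor(\rho_\star+h)L\rfloor,L)<\tau$. Monotonicity in $N$ then brackets $N^*(L)/L$ in $[\rho_\star-h,\rho_\star+h]$, exactly as in the proof of Proposition~\ref{prop:context-mass-profile}. Letting $h\to0$ gives $N^*(L)/L\to\rho_\star$ for both the tail and all-token protocols.

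\emph{Ordering.} Since $A\le g$ pointwise, $A(\rho_{\rm tail})\le g(\rho_{\rm tail})=\tau$, with strict inequality $A(\rho)<\tau$ for every $\rho>\rho_{\rm tail}$. Uniqueness of the crossing of $A$ then forces $\rho_{\rm all}\le\rho_{\rm tail}$.

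The main obstacle is this crossing step. It requires the strict sign change on each side of the crossing, and if $g$ has a flat segment at level $\tau$ the convergence of $N^*(L)/L$ can fail. I would first try to read ``unique interior crossing'' as guaranteeing that strict sign change, and otherwise state it as an explicit assumption.
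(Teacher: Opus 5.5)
Your proposal is correct and follows essentially the paper's route: uniform convergence of $N/t$ to $\rho$ over the tail window, a Riemann-sum argument for the all-query average, pointwise monotonicity for $A(\rho)\le g(\rho)$, and bracketing via monotonicity in $N$ for the crossing limits and their ordering. The paper shortens the Riemann-sum step by noting that $g(\rho/u)=0$ for $u\le\rho$, so the integrand extends continuously to $[0,1]$ and no $\eta$-split is needed. Your worry about flat segments is already excluded by the hypothesis, because a nonincreasing function with exactly one solution of $g=\tau$ (or $A=\tau$) in $(0,1)$ is strictly above $\tau$ to its left and strictly below it to its right.
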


\begin{proof}
Over the final $w_L=o(L)$ queries, $N/t\to\rho$ uniformly, so
continuity gives the tail limit. The all-query average is a Riemann sum.
Its limiting integrand is zero for $u\le\rho$ and has a continuous
extension at zero. Since $\rho/u\ge\rho$ for $0<u\le1$,
monotonicity gives $g(\rho/u)\le g(\rho)$. Convergence on either
side of each unique crossing and monotonicity in $N$ give the set-size
limits and their ordering.
\end{proof}

If $w_L/L\to\kappa\in(0,1]$, the tail limit becomes
$\kappa^{-1}\int_{1-\kappa}^1g(\rho/u)\,du$. Although 128 targets
form a fixed-size tail asymptotically, they occupy approximately half
of a 256-position window. Finite-query averages are therefore more
appropriate for the shortest tested contexts. Relative NLL also uses
protocol- and length-specific full-model baselines, which are outside
the absolute-loss comparison above. Negative or nonmonotone empirical
degradation can violate the assumed profile.

\subsection{Implications for the empirical findings}
\label{app:theory-empirical-scope}

The conditional models organize the evidence in the main text without
identifying a single mechanism. In natural-context extension, the same
final targets require larger selected sets, including under a fixed
absolute tolerance. This limits explanations based only on changing
target populations or tighter relative thresholds. Improved full-model
NLL leaves additional useful information as a plausible contributor,
alongside competition and preservation of the weighted sum.

The controlled-background experiment provides more direct evidence of
competition. Annotated support remains fixed while displacement increases,
mass decreases, and recall at a fixed set size falls. Relating these
measurements to Proposition~\ref{prop:context-recall-width} would require
the distribution of the weakest support rank in addition to mean
displacement. The annotated set would also need to be distinguished
from the full useful set for the prediction.

The normalization controls establish sensitivity to how selected values
are combined. Testing the mass-retention explanation more directly would
require comparing $M_L(\lfloor\rho L\rfloor)$ across lengths and
measuring discarded-vector norms and retained/discarded means, including
after output projection. Applying deletion and renormalization to the
same matched-target context-length sweep would further clarify this
relationship, while accounting for their effects on later rankings.
The existing normalization evaluations do not supply that matched sweep.

Finally, the scoring-position model explains how averaging earlier,
shorter-prefix queries can reduce an estimated set size under explicit
assumptions. It supports treating the target population as part of the
measurement definition. Together, these results motivate interpreting
effective attention set size through competition, aggregation, and the
evaluation criterion, with the latent useful-set size remaining unobserved.

\clearpage
\section*{Ethics Statement}
This work analyzes publicly available pretrained language models using
established text and question-answering benchmarks. It does not collect
new personal data, involve human participants, or infer properties of
individual users. The geometric and effective-set-size measurements
characterize model behavior under specified diagnostics and interventions.
Applications to attention sparsification or cache reduction should evaluate
prediction degradation on the intended downstream distribution.

\section*{Reproducibility Statement}
Appendix~\ref{app:protocol} specifies sampling, tokenization, candidate
scoring, support mapping, interventions, aggregation, threshold searches,
and numerical checks. The supplementary tables report corpus-specific
estimates and distinguish incomplete searches from unmeasured conditions.
For matched-target context extension, the protocol specifies paired
suffixes and 5000 paired-document bootstrap replicates, with uncertainty
conditional on the evaluated set sizes. The scope of numerical parity
checks, unverified historical pairing of the normalization evaluations,
and the 24B precision configuration are stated explicitly. The mathematical
results include their assumptions and proofs. The accompanying source
archive contains the manuscript and figure assets. Raw per-document
measurements, experiment code, and environment manifests are needed to
independently reproduce the empirical summaries.

\section*{AI Use Statement}
Language-model tools were used for language editing, restructuring, and
debugging. The authors are responsible for the mathematical statements,
experimental design, data analysis, citations, and final text.

\end{document}